\documentclass{article}

\PassOptionsToPackage{numbers,sort&compress}{natbib}

\usepackage[preprint]{neurips_2026}

\usepackage[utf8]{inputenc}
\usepackage[T1]{fontenc}
\usepackage{url}
\usepackage{amsmath,amsthm,amssymb,amsfonts}
\usepackage{enumerate}
\usepackage{subfigure}
\usepackage{multirow}
\usepackage{hhline}
\usepackage{xcolor}
\usepackage{hyperref}
\usepackage{graphicx}
\graphicspath{{figure/}}
\ifx\pdfoutput\undefined
  \DeclareGraphicsExtensions{.eps}
\else
  \ifnum\pdfoutput>0
    \DeclareGraphicsExtensions{.png,.pdf,.jpg}
  \else
    \DeclareGraphicsExtensions{.eps}
  \fi
\fi
\usepackage{graphics}
\usepackage{algorithm}
\usepackage{algpseudocode}
\usepackage{epsfig,psfrag}
\usepackage{tabularx}
\usepackage{tabulary}
\usepackage{booktabs}
\usepackage{nicefrac}
\usepackage{microtype}
\usepackage[noabbrev,capitalise,nameinlink]{cleveref}


\usepackage{kotex}

\hypersetup{
  breaklinks=true,
  hypertexnames=false,
  colorlinks=true,
  linkcolor=blue,
  citecolor=blue,
  urlcolor=blue
}

\DeclareMathOperator{\argmax}{arg\,max}
\DeclareMathOperator{\argmin}{arg\,min}
\DeclareMathOperator{\co}{co}
\DeclareMathOperator{\diag}{diag}

\newcommand{\R}{\mathbb{R}}
\newcommand{\E}{\mathbb{E}}
\newcommand{\Prob}{\mathbb{P}}
\newcommand{\calF}{\mathcal{F}}
\newcommand{\calS}{\mathcal{S}}
\newcommand{\calA}{\mathcal{A}}
\newcommand{\calM}{\mathcal{M}}
\newcommand{\calH}{\mathcal{H}}
\newcommand{\jsr}{\rho}

\newtheorem{theorem}{Theorem}

\newtheorem{assumption}{Assumption}

\newtheorem{lemma}{Lemma}
\newtheorem*{lemma*}{Lemma}
\newtheorem{definition}{Definition}
\newtheorem{proposition}{Proposition}

\crefname{assumption}{Assumption}{Assumptions}
\Crefname{assumption}{Assumption}{Assumptions}

\allowdisplaybreaks

\title{Lyapunov-Certified Direct Switching Theory for Q-Learning}

\author{%
Donghwan Lee\\
Department of Electrical Engineering\\
Korea Advanced Institute of Science and Technology (KAIST)\\
Daejeon 34141, South Korea\\
\texttt{donghwan@kaist.ac.kr}
}

\begin{document}

\maketitle

\begin{abstract}
Q-learning is a fundamental algorithmic primitive in reinforcement learning. This paper develops a new framework for analyzing Q-learning from a switching linear system (SLS) viewpoint. In particular, we derive a stochastic SLS representation of the Q-learning error, and a finite-time error analysis through the joint spectral radius (JSR) of the corresponding SLS model, where the JSR is the exact worst-case exponential rate of the associated SLS. To the best of our knowledge, this is the first convergence rate analysis of standard Q-learning whose leading exponential rate is expressed through the JSR. The resulting rate is tied to the intrinsic worst-case exponential rate of the direct SLS representation and can be sharper than row-sum upper bounds when those bounds are conservative.
\end{abstract}

\section{Introduction}
Q-learning~\cite{watkins1992q} is a foundational algorithm in reinforcement
learning (RL)~\cite{sutton1998reinforcement} for solving discounted Markov
decision processes (MDPs) with unknown transition kernels. Its convergence has
been studied extensively over the past several decades. Classical analyses
primarily establish asymptotic convergence~\cite{tsitsiklis1994asynchronous,jaakkola1994convergence,
borkar2000ode,lee2020unified}. While these results are fundamental,
asymptotic convergence alone does not quantify how rapidly the iterates
approach the solution, and therefore does not fully characterize the efficiency
of the algorithm. This limitation has motivated a growing body of finite-time
convergence analyses, which provide explicit bounds on the progress of the
iterates toward the optimal Q-function. Recent advances in finite-time analysis
include~\cite{szepesvari1998asymptotic,kearns1999finite,even2003learning,
beck2012error,wainwright2019stochastic,qu2020finite,li2020sample,
chen2021lyapunov}. Most existing results view Q-learning as a
nonlinear stochastic approximation scheme~\cite{kushner2003stochastic} and
rely on the contraction property of the Bellman optimality operator.

A useful alternative viewpoint is to regard Q-learning as a discrete-time
stochastic switching linear system (SLS)~\cite{lin2009stability,liberzon2003switching}.
This perspective was developed in~\cite{lee2020unified,lee2023discrete,
lee2024final,lim2024diminishing} and was used to prove a final-iterate bound for constant-step-size
Q-learning. In that formulation, the error dynamics are affine rather than
linear, because the greedy policy selected by the current iterate may differ
from an optimal policy. The resulting affine term is controlled through upper
and lower comparison systems. More specifically, the upper comparison system
remains an SLS, whereas the lower comparison system is linear.
Although this comparison-system approach is useful for deriving valid
finite-time bounds, it has an inherent limitation: it controls the Q-learning
error through auxiliary systems rather than directly exploiting the SLS
structure of the original error recursion. Consequently, intrinsic quantities
from SLS theory, such as the joint spectral radius
(JSR)~\cite{rota1960note,tsitsiklis1997lyapunov,jungers2009joint}, are
difficult to apply directly to the original Q-learning dynamics.

To overcome this limitation, this paper develops a direct SLS
representation of the Q-learning error. The key observation is that the Bellman
maximization error can be represented exactly as an average of action-wise
Q-errors under a suitably chosen stochastic policy. Substituting this identity
into the Q-learning recursion yields an SLS conditional-mean error
system, while the stochastic component remains a martingale difference. The
corresponding convergence rate is characterized by the JSR of the
resulting SLS. This is the central distinction from existing
Bellman-contraction and comparison system analyses: the rate is attached to the
exact direct SLS drift rather than to an auxiliary contraction or row-sum
upper bound. To the best of our knowledge, this is the first convergence rate
framework for standard Q-learning whose leading exponential rate is formulated
in terms of such a direct JSR. Since the JSR is the exact worst-case exponential
rate of the SLS, the resulting transient bound is tied to the intrinsic
drift rate of this direct representation and can be sharper than row-sum
comparison bounds when those bounds are conservative.

We first develop the framework under an i.i.d. observation model with a
constant step-size, where the direct SLS structure is most transparent.
We then describe a Markovian observation extension based on the same
SLS representation. In that
extension, the pathwise decomposition remains unchanged, while the i.i.d.
covariance recursion is replaced by a conservative Poisson-equation estimate
for the Markovian noise. To certify the deterministic transient, we use Lyapunov certificates for the corresponding SLS model.

Finally, we emphasize that the goal of this paper is to offer a new perspective
and a methodological framework for understanding the convergence of Q-learning.
The proposed framework is not intended to replace existing analyses based on
Bellman contractions or stochastic approximation, nor does it claim uniformly
improved finite-time error bounds across all problem instances.
\section{Preliminaries}\label{sec:preliminaries}
\subsection{Notation}
The set of real numbers is denoted by $\R$; $\R^m$ is the $m$-dimensional Euclidean space; and $\R^{m\times r}$ is the set of all $m\times r$ real matrices. For a matrix $A$, $A^\top$ denotes its transpose. For symmetric matrices, $A\succ0$ and $A\succeq0$ denote positive definiteness and positive semidefiniteness, and $A\preceq B$ means $B-A\succeq0$. The identity matrix is denoted by $I$. For vectors, $e_i$ is the $i$th standard basis vector, with dimension clear from context, and $\otimes$ denotes the Kronecker product. The quantities $\lambda_{\min}(A)$ and $\lambda_{\max}(A)$ denote the minimum and maximum eigenvalues of a symmetric matrix $A$, respectively. For a finite set $\calS$, $|\calS|$ denotes its cardinality. We write $\Delta_m:=
\left\{q\in\R^m:q_i\ge0,\ \sum_{i=1}^{m}q_i=1\right\}$ for the probability simplex in $\R^m$. For a finite matrix family $\calH=\{A_1,\ldots,A_N\}$, the notation $\co(\calH)$ denotes the convex hull $\co(\calH)
:=
\left\{
\sum_{i=1}^N \lambda_i A_i:
\lambda_i\ge0,\ \sum_{i=1}^N\lambda_i=1
\right\}$. For probability distributions $p$ and $q$ on the same finite set $\mathcal X$,
$\|p-q\|_{\mathrm{TV}}:=\frac12\sum_{x\in\mathcal X} |p(x)-q(x)|$.

\subsection{Switching linear system (SLS)}
Let us consider the discrete-time stochastic SLS with a possible affine term~\cite{liberzon2003switching,lin2009stability,shorten2007stability}
\[
  x_{k+1}=A_{\sigma_k}x_k+b_{\sigma_k}+\xi_k,
\]
where $\sigma_k \in \{1,2,\ldots,M\}$ is a switching signal, $A_{\sigma_k}$ is selected from a prescribed family of matrices ${\cal H}:=\{A_1,A_2,\ldots,A_M\}$, which is called a switching family, $b_{\sigma_k}$ is a mode-dependent affine term, and $\xi_k$ is a disturbance or martingale noise. When $b_{\sigma_k}=0$, the deterministic part reduces to an SLS,
\[
  x_{k+1}=A_{\sigma_k}x_k.
\]
The worst-case exponential rate of the SLS family is characterized by the joint spectral radius (JSR)~\cite{tsitsiklis1997lyapunov,rota1960note,blondel2005computational,jungers2009joint}, defined as follows.
\begin{definition}\label{def:jsr}
For a bounded set of matrices $\calH\subset\R^{m\times m}$, its joint spectral radius is
\[
\jsr(\calH)
:=
\lim_{k\to\infty}
\sup_{A_1,\ldots,A_k\in\calH}
\|A_k\cdots A_1\|^{1/k},
\]
where the value is independent of the chosen submultiplicative norm~\cite{rota1960note,jungers2009joint}. When $\calH$ is finite, the supremum for each fixed product length is a maximum over products generated by matrices in $\calH$.  For a finite family $\calH$, the notation $\jsr(\co(\calH))$ means the JSR computed when each factor in a product is allowed to be any convex combination of matrices in $\calH$.
\end{definition}

\begin{lemma}\label{lem:jsr-stability}
Consider the SLS
\[
  x_{k+1}=A_{\sigma_k}x_k,
  \qquad
  A_{\sigma_k}\in\calH,
\]
where \(\calH\subset\R^{m\times m}\) is finite.  If \(\jsr(\calH)<1\), then \(x_k\to0\) for every initial condition and every switching sequence.  Moreover, the convergence is exponential.
\end{lemma}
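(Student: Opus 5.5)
The plan is to use \cref{def:jsr} directly to obtain a uniform exponential bound on all products of length $k$, and then apply it to the state trajectory. Fix any submultiplicative (e.g.\ induced) norm $\|\cdot\|$ and pick a number $r$ with $\jsr(\calH)<r<1$. Write
\[
\rho_k:=\max_{A_1,\ldots,A_k\in\calH}\|A_k\cdots A_1\|^{1/k}.
\]
The maximum exists because $\calH$ is finite. By the definition of the JSR, $\rho_k\to\jsr(\calH)$. Hence there is an index $K$ with $\rho_K<r$, so $\|A_K\cdots A_1\|\le r^K$ for every choice of $A_1,\ldots,A_K\in\calH$.

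Next, extend this bound to all lengths by blocking. Let $C:=\max\{1,\max_{A\in\calH}\|A\|\}$, which is finite since $\calH$ is finite. Any $k$ can be written as $k=qK+s$ with $0\le s<K$. Split the product $A_{\sigma_{k-1}}\cdots A_{\sigma_0}$ into $q$ blocks of length $K$ and one remainder block of length $s$. Submultiplicativity then gives
\[
\|A_{\sigma_{k-1}}\cdots A_{\sigma_0}\|\le r^{qK}C^{s}\le (C/r)^{K}\, r^{k}=:c\,r^k.
\]
The constant $c$ does not depend on $k$ or on the switching sequence.

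Finally, apply this to the trajectory. Since $x_k=A_{\sigma_{k-1}}\cdots A_{\sigma_0}x_0$, we get $\|x_k\|\le c\,r^k\|x_0\|\to0$, and the decay is exponential with rate $r$, uniformly over all switching sequences. Because $r$ can be chosen arbitrarily close to $\jsr(\calH)$, the rate can be taken as close to $\jsr(\calH)$ as desired.

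The only step that needs some care is the first one. We must use that the limit in \cref{def:jsr} exists, so that some finite $K$ achieves $\rho_K<r$; by Fekete's lemma the limit also equals $\inf_k\rho_k$. Once such a $K$ is fixed, the remainder-block constant and the norm independence are routine.
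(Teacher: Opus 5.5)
Your proof is correct and takes essentially the same route as the paper: derive a uniform bound $\|A_{\sigma_{k-1}}\cdots A_{\sigma_0}\|\le c\,r^k$ over all products from the JSR definition, then apply it to $x_k=A_{\sigma_{k-1}}\cdots A_{\sigma_0}x_0$. The only difference is minor: the paper uses the limit to get the bound for every $k\ge K$ and enlarges the constant for the finitely many $k<K$, whereas you fix one length $K$ with $\rho_K<r$ and extend it to all $k$ by blocking and submultiplicativity, so you only need a single such $K$.
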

\begin{proof}
Choose \(\beta\) such that \(\jsr(\calH)<\beta<1\).  By the definition of the JSR, there exists an integer \(K\ge1\) such that
\[
  \sup_{A_1,\ldots,A_k\in\calH}
  \|A_k\cdots A_1\|^{1/k}\le \beta,
  \qquad k\ge K.
\]
For the finitely many remaining product lengths, enlarge the multiplicative constant so that
\[
  \sup_{A_1,\ldots,A_k\in\calH}
  \|A_k\cdots A_1\|
  \le C\beta^k,
  \qquad k\ge0,
\]
with the product of length zero interpreted as the identity.  Hence, along any switching sequence,
\[
  \|x_k\|
  \le C\beta^k\|x_0\|,
  \qquad k\ge0.
\]
Since \(\beta<1\), this proves exponential convergence to zero.
\end{proof}
For Q-learning~\cite{sutton1998reinforcement}, this perspective is natural because the greedy policy induced by the current iterate plays the role of the switching signal.
In the final-iterate analysis of~\cite{lee2024final}, constant step-size Q-learning under an independent and identically distributed (i.i.d.) observation model is represented by a stochastic affine SLS recursion. The affine term appears because the greedy policy associated with $Q_k$ need not be optimal. We keep the SLS viewpoint, but replace the affine representation with a direct SLS representation obtained from a stochastic policy linearization of the Bellman maximization error.

\subsection{Discounted Markov decision processes and Bellman operator}
Let us consider a finite discounted Markov decision process (MDP)~\cite{puterman2014markov} with state space $\calS=\{1,\ldots,|\calS|\}$, action space $\calA=\{1,\ldots,|\calA|\}$, transition probability $P(s'\mid s,a)$ for $(s,a,s')\in\calS\times\calA\times\calS$, one-step reward $r(s,a,s')$ for $(s,a,s')\in\calS\times\calA\times\calS$, expected reward $R(s,a):=\sum_{s'\in\calS}P(s'\mid s,a)r(s,a,s'), (s,a)\in\calS\times\calA$, and discount factor $\gamma\in(0,1)$. Let $\Theta$ denote the set of deterministic stationary policies $\pi:\calS\to\calA$. A Q-function is viewed as a vector $Q\in\R^{|\calS|\,|\calA|}$ using a fixed ordering of state-action pairs. We use the compact notation
\[
P:=
\begin{bmatrix}
P_1\\ \vdots\\ P_{|\calA|}
\end{bmatrix}
\in\R^{(|\calS|\,|\calA|)\times |\calS|},
\qquad
R:=
\begin{bmatrix}
R(\cdot,1)\\ \vdots\\ R(\cdot,|\calA|)
\end{bmatrix}
\in\R^{|\calS|\,|\calA|},
\]
where $P_a=P(\cdot\mid\cdot,a)\in\R^{|\calS|\times |\calS|}, a\in\calA$.
For any stochastic policy $\mu:\calS\to\Delta_{|\calA|}$, define
\[
\Pi^\mu
:=
\begin{bmatrix}
\mu(1)^\top\otimes e_1^\top\\
\mu(2)^\top\otimes e_2^\top\\
\vdots\\
\mu(|\calS|)^\top\otimes e_{|\calS|}^\top
\end{bmatrix}
\in\R^{|\calS|\times(|\calS|\,|\calA|)}.
\]
Then $P\Pi^\mu\in\R^{(|\calS|\,|\calA|)\times(|\calS|\,|\calA|)}$ is the state-action transition matrix under $\mu$. For a deterministic policy $\pi\in\Theta$, we use the same notation $\Pi^\pi$ by identifying $\pi(s)$ with its one-hot encoding for each $s\in\calS$. For $Q\in\R^{|\calS|\,|\calA|}$, let us define
\[
  V_Q(s):=\max_{a\in\calA}Q(s,a),
  \qquad s\in\calS,
\]
and $V_Q:=(V_Q(1),\ldots,V_Q(|\calS|))^\top \in \R^{|\calS|}$.
The Bellman optimality operator is $F(Q):=R+\gamma PV_Q$. The optimal Q-function $Q^*$ is the unique fixed point of $F$, i.e., $Q^*=F(Q^*)$, and $V^*:=V_{Q^*}$.
\begin{definition}\label{def:Rmax}
Since $\calS$ and $\calA$ are finite, let us define
\[
R_{\max}:=
\max_{(s,a,s')\in\calS\times\calA\times\calS}|r(s,a,s')|.
\]
\end{definition}
It is well-known that the optimal Q-function then satisfies $\|Q^*\|_\infty\le \frac{R_{\max}}{1-\gamma}$.

\subsection{Q-learning setting}\label{subsec:iid-qlearning}
We consider the standard asynchronous Q-learning recursion~\cite{sutton1998reinforcement} with a constant step-size $\alpha$ under an i.i.d.\ observation model. Throughout the paper, we assume that at each time $k$, a state-action pair $(s_k,a_k)$ is sampled independently with $\Prob(s_k=s,a_k=a)=d(s,a):=p(s)b(a\mid s), (s,a)\in\calS\times\calA$, where $p$ is a state sampling distribution and $b$ is a fixed behavior policy. Then, $s'_k\sim P(\cdot\mid s_k,a_k)$ and $r_{k+1}:=r(s_k,a_k,s'_k)$ are sampled. The update is
\[
Q_{k+1}(s_k,a_k)
=
Q_k(s_k,a_k)+\alpha
\left(
r_{k+1}+\gamma\max_{u\in\calA}Q_k(s'_k,u)-Q_k(s_k,a_k)
\right),
\]
while all coordinates $(s,a)\in\calS\times\calA$ with $(s,a)\ne(s_k,a_k)$ remain unchanged. Let $\{\calF_k\}_{k\ge0}$ be the natural filtration of the Q-learning process
\[
  \calF_0:=\sigma(Q_0),
  \qquad
  \calF_k:=
  \sigma\!\left(Q_0,\{(s_t,a_t,s'_t,r_{t+1}):0\le t\le k-1\}\right),
  \quad k\ge1.
\]
Then, $Q_k$ is $\calF_k$-measurable. Let $D:=\diag(d(s,a))_{(s,a)\in\calS\times\calA}\in\R^{(|\calS|\,|\calA|)\times(|\calS|\,|\calA|)}$.
For the sampled coordinate, define the random vector
\[
\zeta_k:=(e_{a_k}\otimes e_{s_k})
\left(
r_{k+1}+\gamma\max_{u\in\calA}Q_k(s'_k,u)-Q_k(s_k,a_k)
\right).
\]
Then, we have $\E[\zeta_k\mid\calF_k]
=
D(F(Q_k)-Q_k)$. Accordingly, the martingale-difference noise is defined by
\begin{equation}\label{eq:wk-def}
w_k
:=
(e_{a_k}\otimes e_{s_k})
\left(
r_{k+1}
+\gamma\max_{u\in\calA}Q_k(s'_k,u)
-Q_k(s_k,a_k)
\right)
-
D(F(Q_k)-Q_k).
\end{equation}
Therefore, the vector form of Q-learning is
\begin{align*}
Q_{k+1}
=
Q_k+
\alpha\{D(F(Q_k)-Q_k)+w_k\},
\end{align*}
where $w_k$ is defined in~\cref{eq:wk-def} and satisfies $\E[w_k\mid\calF_k]=0$ with respect to the natural filtration $\{\calF_k\}_{k\ge0}$ defined above.

\begin{assumption}\label{assump:basic}
The following conditions hold throughout the paper.
\begin{enumerate}[(i)]
\item $d(s,a)>0$ for every $(s,a)\in\calS\times\calA$.
\item The step size satisfies $\alpha\in(0,1)$.
\item The initial Q-table $Q_0\in\R^{|\calS|\,|\calA|}$ is deterministic.
\end{enumerate}
\end{assumption}
For convenience, we define the notations
\[
  d_{\min}:=\min_{(s,a)\in\calS\times\calA}d(s,a),
  \qquad
  d_{\max}:=\max_{(s,a)\in\calS\times\calA}d(s,a).
\]
Since \(d\) is a probability distribution on
\(|\calS|\,|\calA|\) coordinates, we have
\[
  0<d_{\min}\le \frac{1}{|\calS|\,|\calA|}.
\]

\section{SLS representation of Q-learning}\label{sec:direct-switching}

We now remove the affine offset from the error recursion. The only ingredient needed for this step is the state-wise convexity fact in~\cite{goyal2023first} stated and proved in Appendix~\ref{app:proof-stochastic-policy-linearization}, \cref{lem:stochastic-policy-linearization}: the Bellman maximization error can be written exactly as an average of action-wise Q-errors under a suitable stochastic policy. Substituting this linearization into the Q-learning recursion removes the affine term exactly.
\begin{theorem}\label{thm:direct-representation}
Under \cref{assump:basic}, let $\{\calF_k\}_{k\ge0}$ be the natural filtration defined in \cref{subsec:iid-qlearning}.  Then, for each $k\ge0$, there exists an $\calF_k$-measurable stochastic policy $\mu_k$ such that
\begin{equation*}
  (Q_{k+1}-Q^*)=M_{\mu_k}(Q_k-Q^*)+\alpha w_k,
\end{equation*}
where for any stochastic policy $\mu$,
\[
  M_\mu:=I-\alpha D+\alpha\gamma DP\Pi^\mu,
\]
and
\[
w_k
:=
(e_{a_k}\otimes e_{s_k})
\left(
r_{k+1}+\gamma\max_{u\in\calA}Q_k(s'_k,u)-Q_k(s_k,a_k)
\right)
-
D(F(Q_k)-Q_k)
\]
and $\E[w_k\mid\calF_k]=0$. Consequently,
\begin{equation}\label{eq:conditional-mean-switching}
  \E[Q_{k+1}-Q^*\mid\calF_k]
  =M_{\mu_k}(Q_k-Q^*).
\end{equation}
\end{theorem}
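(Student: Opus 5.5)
The plan is to start from the vector form of Q-learning and subtract the fixed-point identity $Q^*=F(Q^*)$. Writing $e_k:=Q_k-Q^*$ and using $F(Q_k)-Q_k=F(Q_k)-F(Q^*)-e_k$, the vector form gives
\[
e_{k+1}=e_k-\alpha D e_k+\alpha D\bigl(F(Q_k)-F(Q^*)\bigr)+\alpha w_k ,
\qquad
F(Q_k)-F(Q^*)=\gamma P\bigl(V_{Q_k}-V_{Q^*}\bigr).
\]
Everything therefore reduces to writing the maximization error $V_{Q_k}-V_{Q^*}$ exactly as $\Pi^{\mu_k}e_k$ for some $\calF_k$-measurable stochastic policy $\mu_k$. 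Once that holds, substitution gives $e_{k+1}=(I-\alpha D+\alpha\gamma DP\Pi^{\mu_k})e_k+\alpha w_k=M_{\mu_k}e_k+\alpha w_k$.

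For the linearization I will invoke \cref{lem:stochastic-policy-linearization}, which I expect to be a statewise statement: for each $s$, the difference $\max_a Q_k(s,a)-\max_a Q^*(s,a)$ lies between $\min_a e_k(s,a)$ and $\max_a e_k(s,a)$. The upper bound follows by evaluating at a greedy action $\pi_{Q_k}(s)$, and the lower bound by evaluating at $\pi_{Q^*}(s)$. Since the values $e_k(s,a)$ over $a\in\calA$ have convex hull exactly this interval, there exists a $\mu_k(s)\in\Delta_{|\calA|}$ with $\sum_a\mu_k(a\mid s)e_k(s,a)=V_{Q_k}(s)-V^*(s)$. A concrete choice is a two-point mixture of the greedy actions of $Q_k$ and $Q^*$: I take
\[
\mu_k(s)=\lambda e_{\pi_{Q_k}(s)}+(1-\lambda)e_{\pi_{Q^*}(s)},
\]
with $\lambda\in[0,1]$ solving a scalar linear equation. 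I set $\lambda=1$ when the two endpoint values coincide. Stacking these rows gives $V_{Q_k}-V^*=\Pi^{\mu_k}e_k$ by the definition of $\Pi^\mu$.

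The step that needs care is measurability. I will make the selection deterministic: fix a tie-breaking rule for the greedy actions, and define $\lambda$ by the explicit formula above. Then $\mu_k$ is a Borel function of $Q_k$, because it is piecewise rational on finitely many Borel pieces determined by orderings of the entries. Since $Q_k$ is $\calF_k$-measurable, so is $\mu_k$.

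Finally, $\E[w_k\mid\calF_k]=0$ is already established in \cref{subsec:iid-qlearning}. That identity rests on two facts: $(s_k,a_k,s'_k)$ is independent of $\calF_k$ given the i.i.d.\ sampling, and $Q_k$ is $\calF_k$-measurable, so $\E[\zeta_k\mid\calF_k]=D(F(Q_k)-Q_k)$. Taking conditional expectations of the recursion and pulling out the $\calF_k$-measurable matrix $M_{\mu_k}$ and the vector $e_k$ yields \eqref{eq:conditional-mean-switching}. I expect no genuine obstacle beyond stating the measurable selection cleanly.
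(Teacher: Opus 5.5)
Your proposal is correct and matches the paper's proof: subtract $Q^*=F(Q^*)$, write $F(Q_k)-F(Q^*)=\gamma P(V_{Q_k}-V^*)$, and replace $V_{Q_k}-V^*$ by $\Pi^{\mu_k}(Q_k-Q^*)$ using a measurable selection as in \cref{lem:stochastic-policy-linearization}. Then take conditional expectations using $\E[w_k\mid\calF_k]=0$.

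The only difference is the selection. The paper mixes the argmin and argmax actions of $e(s,\cdot)$, while you mix $\pi_{Q_k}(s)$ and $\pi_{Q^*}(s)$. Your choice is valid because your two evaluations give the sharper sandwich $e_k(s,\pi_{Q^*}(s))\le V_{Q_k}(s)-V^*(s)\le e_k(s,\pi_{Q_k}(s))$. State this explicitly, since the $\min/\max$ sandwich alone does not put your $\lambda$ in $[0,1]$.
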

\noindent The proof is given in Appendix~\ref{app:proof-direct-representation}.
The corresponding switching family is denoted by
\[
\calM_\alpha
:=
\left\{
M_\pi:=I-\alpha D+\alpha\gamma DP\Pi^\pi:
\pi\in\Theta
\right\}.
\]
We note that the policies in~\cref{thm:direct-representation} are stochastic in $\co(\calM_\alpha)$, whereas we will consider the JSR of the above switching family $\calM_\alpha$, which is defined through deterministic modes. The next lemma connects the two descriptions.
\begin{lemma}\label{lem:convex-hull-M}
For every stochastic policy $\mu$, $M_\mu\in\co(\calM_\alpha)$.  Moreover, when the JSR of $\co(\calM_\alpha)$ is interpreted as the JSR over the compact set of all convex combinations of matrices in $\calM_\alpha$, we have
\[
  \jsr(\co(\calM_\alpha))=\jsr(\calM_\alpha).
\]
\end{lemma}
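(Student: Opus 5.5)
The proof has two parts: a membership statement, which follows from the linear structure of $\mu\mapsto M_\mu$, and the JSR identity, which follows from multilinearity of products plus submultiplicativity of an induced norm.

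\emph{Membership.} The map $\mu\mapsto \Pi^\mu$ is affine in the row blocks $\mu(s)^\top\otimes e_s^\top$, one block per state. A stochastic policy $\mu$ is a product of simplices $\prod_{s\in\calS}\Delta_{|\calA|}$, so we decompose it as a mixture of deterministic policies. Concretely, set
\[
\lambda_\pi:=\prod_{s\in\calS}\mu(s)_{\pi(s)},\qquad \pi\in\Theta.
\]
Then $\lambda_\pi\ge0$ and $\sum_{\pi\in\Theta}\lambda_\pi=\prod_{s}\sum_a\mu(s)_a=1$. For each state $s$, the $s$th row block of $\sum_\pi\lambda_\pi\Pi^\pi$ equals
\[
\sum_{a}\Bigl(\sum_{\pi:\pi(s)=a}\lambda_\pi\Bigr)e_a^\top\otimes e_s^\top=\mu(s)^\top\otimes e_s^\top,
\]
since the marginal of the product measure on coordinate $s$ is $\mu(s)$. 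Hence $\Pi^\mu=\sum_\pi\lambda_\pi\Pi^\pi$. Because $M_\mu=I-\alpha D+\alpha\gamma DP\Pi^\mu$ is affine in $\Pi^\mu$ and $\sum_\pi\lambda_\pi=1$, we get $M_\mu=\sum_\pi\lambda_\pi M_\pi\in\co(\calM_\alpha)$.

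\emph{JSR equality.} The inequality $\jsr(\calM_\alpha)\le\jsr(\co(\calM_\alpha))$ holds because $\calM_\alpha\subset\co(\calM_\alpha)$, so each supremum in \cref{def:jsr} is taken over a larger set. For the reverse inequality, fix an induced (hence submultiplicative) norm and a length $k$. Take $B_1,\dots,B_k\in\co(\calM_\alpha)$ with $B_j=\sum_i\lambda^{(j)}_iA_i$, where each $A_i\in\calM_\alpha$. Expanding by multilinearity,
\[
B_k\cdots B_1=\sum_{i_1,\dots,i_k}\lambda^{(k)}_{i_k}\cdots\lambda^{(1)}_{i_1}A_{i_k}\cdots A_{i_1}.
\]
The weights are nonnegative and sum to one, so this is a convex combination of length-$k$ products from $\calM_\alpha$. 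By the triangle inequality,
\[
\|B_k\cdots B_1\|\le\max_{A_1,\ldots,A_k\in\calM_\alpha}\|A_k\cdots A_1\|.
\]
Taking the supremum over the $B_j$, then $k$th roots, then $k\to\infty$ gives $\jsr(\co(\calM_\alpha))\le\jsr(\calM_\alpha)$. The set $\co(\calM_\alpha)$ is compact and bounded, so its JSR is well defined.

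The only step that needs care is the explicit mixture construction in the membership part, namely checking that the product-measure weights $\lambda_\pi$ reproduce each row block of $\Pi^\mu$; the JSR comparison after that is routine.
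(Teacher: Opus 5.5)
Your membership argument is the paper's. Your weights $\lambda_\pi=\prod_{s}\mu(s)_{\pi(s)}$ are exactly its $c_\pi(\mu)$, and you use the same marginal identity $\sum_{\pi:\pi(s)=a}\lambda_\pi=\mu(a\mid s)$ and the same affinity of $M_\mu$ in $\Pi^\mu$.

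The difference is in the JSR identity. The paper does not prove it; it invokes the standard convex-hull invariance of the joint spectral radius (Jungers, Rota--Strang). You prove it directly: you expand $B_k\cdots B_1$ multilinearly into a convex combination of length-$k$ products of vertices and apply the triangle inequality.

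Your route makes the lemma self-contained, and it gives more than equality of the limits. For every norm and every length $k$,
\[
\sup_{B_1,\ldots,B_k\in\co(\calM_\alpha)}\|B_k\cdots B_1\|=\max_{A_1,\ldots,A_k\in\calM_\alpha}\|A_k\cdots A_1\|,
\]
so the worst case over stochastic-policy modes is already attained by deterministic modes at every finite horizon. The paper later handles products over $\co(\calM_\alpha)$ with the same convexity idea, but applied to the Lyapunov norm $p_\varepsilon$ rather than to finite-length products.

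The citation buys brevity and covers arbitrary bounded families. Your argument extends to those verbatim with $\sup$ in place of $\max$.

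One small remark: the triangle-inequality step needs only a norm, not submultiplicativity. Submultiplicativity matters only for the existence of the limit in the definition of $\jsr$.
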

\noindent The proof is given in Appendix~\ref{app:proof-convex-hull-M}.
Consequently, it is enough to verify \(\jsr(\calM_\alpha)<1\) when proving convergence of the SLS; no separate condition on \(\jsr(\co(\calM_\alpha))\) is needed.

We next compare this direct JSR with the familiar row-sum rate. The row-sum rate introduced in~\cite{lee2023discrete,lee2024final} is
\[
  \rho_{\mathrm{row}}:=1-\alpha d_{\min}(1-\gamma).
\]
Under \cref{assump:basic}, \(\rho_{\mathrm{row}}\in(0,1)\). The next lemma connects this row-sum rate with the direct switching family above.
\begin{lemma}\label{lem:row-sum-jsr-upper-bound}
Under \cref{assump:basic}, the row-sum rate is an upper bound on the direct JSR:
\[
  \jsr(\calM_\alpha)\le\rho_{\mathrm{row}}.
\]
More generally, for every stochastic policy \(\mu\),
\[
  \|M_\mu\|_\infty\le\rho_{\mathrm{row}}.
\]
\end{lemma}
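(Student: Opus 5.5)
The plan is to first prove the more general norm bound $\|M_\mu\|_\infty\le\rho_{\mathrm{row}}$ for every stochastic policy $\mu$, and then deduce the JSR bound from it. Since every $M_\pi\in\calM_\alpha$ is the special case $\mu=\pi$ of a deterministic policy, it also satisfies $\|M_\pi\|_\infty\le\rho_{\mathrm{row}}$. The $\infty$-induced norm is submultiplicative, so every product of length $k$ obeys $\|M_{\pi_k}\cdots M_{\pi_1}\|_\infty\le\rho_{\mathrm{row}}^k$. Taking $k$th roots, then the supremum, then the limit in \cref{def:jsr}, which is independent of the chosen submultiplicative norm, gives $\jsr(\calM_\alpha)\le\rho_{\mathrm{row}}$.

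For the norm bound, I will compute row sums of absolute values entrywise. Fix a row indexed by $(s,a)$ and write $d:=d(s,a)$. The matrix $I-\alpha D$ is diagonal with entry $1-\alpha d$. This entry is nonnegative because $\alpha<1$ and $d\le1$ under \cref{assump:basic}.

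Next, $\alpha\gamma DP\Pi^\mu$ has row $(s,a)$ equal to $\alpha\gamma d$ times row $(s,a)$ of $P\Pi^\mu$. Row $(s,a)$ of $P\Pi^\mu$ is the state-action distribution $(s',a')\mapsto P(s'\mid s,a)\mu(a'\mid s')$. All its entries are nonnegative, and they sum to one, since both $P(\cdot\mid s,a)$ and each $\mu(s')$ lie in the relevant simplices.

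Hence every entry of $M_\mu$ is nonnegative, so the absolute row sum equals the plain row sum. That row sum is $1-\alpha d+\alpha\gamma d=1-\alpha d(1-\gamma)$. This is at most $1-\alpha d_{\min}(1-\gamma)=\rho_{\mathrm{row}}$, because $d\ge d_{\min}$ and $1-\gamma>0$. Maximizing over rows gives $\|M_\mu\|_\infty\le\rho_{\mathrm{row}}$.

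The only step needing care is the nonnegativity of the diagonal entries, which ensures no cancellation or sign issue arises when the diagonal and off-diagonal contributions combine. I will handle it explicitly using $\alpha\in(0,1)$ and $d(s,a)\le 1$. This is also where I would verify that $\rho_{\mathrm{row}}\in(0,1)$, as claimed just before the lemma.

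Everything else is routine bookkeeping with the Kronecker structure of $\Pi^\mu$. In particular, row $s'$ of $\Pi^\mu$ places the weights $\mu(a'\mid s')$ on the coordinates $(s',a')$, so $P\Pi^\mu$ is row-stochastic.
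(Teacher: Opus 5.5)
Your proposal is correct and follows essentially the same route as the paper: entrywise nonnegativity of $M_\mu$ (from $\alpha<1$ and $d(s,a)\le 1$), row-stochasticity of $P\Pi^\mu$ giving row sums $1-\alpha d(s,a)(1-\gamma)\le\rho_{\mathrm{row}}$, and then the JSR bound from this common induced-norm bound. Your explicit submultiplicativity argument for products simply spells out the paper's remark that the JSR is bounded above by any common induced-norm bound on the family.
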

\noindent The proof is given in Appendix~\ref{app:proof-row-sum-jsr-upper-bound}.
We now formalize the deterministic SLS representation associated with \cref{thm:direct-representation}:
\begin{equation}\label{eq:direct-discrete-switched-system}
  x_{k+1}=M_{\mu_k}x_k,
  \qquad
  M_{\mu_k}\in\co(\calM_\alpha),
  \qquad k\ge0,
\end{equation}
where the switching sequence may be arbitrary and may select any stochastic-policy matrix generated by \cref{thm:direct-representation}. Although this is the deterministic counterpart of the conditional-mean recursion, its stability identifies the drift mechanism that will later be combined with martingale noise estimates for stochastic Q-learning.

The following lemma verifies the JSR stability condition in \cref{lem:jsr-stability} for the direct Q-learning family.
\begin{lemma}\label{lem:direct-jsr-strict}
Under \cref{assump:basic}, the direct switching family is exponentially stable in the JSR sense:
\[
  \jsr(\calM_\alpha)<1.
\]
In fact,
\[
  \jsr(\calM_\alpha)\le \rho_{\mathrm{row}}=1-\alpha d_{\min}(1-\gamma)<1.
\]
\end{lemma}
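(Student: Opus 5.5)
This is essentially a corollary of \cref{lem:row-sum-jsr-upper-bound}, so the plan is short. First, invoke \cref{lem:row-sum-jsr-upper-bound}: for every deterministic policy \(\pi\in\Theta\) (a special case of a stochastic policy), \(\|M_\pi\|_\infty\le\rho_{\mathrm{row}}\). Since \(\|\cdot\|_\infty\) is an induced operator norm, it is submultiplicative. Hence for any product of length \(k\) with factors \(M_{\pi_1},\ldots,M_{\pi_k}\in\calM_\alpha\),
\[
  \|M_{\pi_k}\cdots M_{\pi_1}\|_\infty\le \prod_{i=1}^k\|M_{\pi_i}\|_\infty\le\rho_{\mathrm{row}}^k .
\]
Taking the \(k\)th root and the maximum over the finitely many products of length \(k\) gives a bound of \(\rho_{\mathrm{row}}\) for every \(k\). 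By \cref{def:jsr}, whose value is independent of the submultiplicative norm, passing to the limit yields \(\jsr(\calM_\alpha)\le\rho_{\mathrm{row}}\).

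It remains to check \(\rho_{\mathrm{row}}<1\), which is elementary. By \cref{assump:basic}, \(\alpha>0\) and \(d_{\min}>0\), and \(\gamma<1\), so \(\alpha d_{\min}(1-\gamma)>0\) and hence \(\rho_{\mathrm{row}}=1-\alpha d_{\min}(1-\gamma)<1\). Also \(\alpha<1\), \(d_{\min}\le 1\), and \(1-\gamma<1\), so \(\rho_{\mathrm{row}}>0\), although positivity is not needed here.

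If one instead wanted to avoid citing \cref{lem:row-sum-jsr-upper-bound}, the only real content is the row-sum bound itself, which I would verify directly. For row \((s,a)\) of \(M_\mu\), the diagonal entry \(1-\alpha d(s,a)\) is nonnegative since \(\alpha d(s,a)<1\). The term \(\alpha\gamma d(s,a)\,(P\Pi^\mu)\) contributes nonnegative entries whose row sum is \(\alpha\gamma d(s,a)\), because \(P\Pi^\mu\) is row-stochastic. All entries are therefore nonnegative, and the absolute row sum is
\[
  1-\alpha d(s,a)(1-\gamma)\le\rho_{\mathrm{row}} .
\]
This is the step I would expect to need the most care, specifically the nonnegativity of the diagonal entries. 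It is otherwise routine.
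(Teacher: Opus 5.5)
Your proposal is correct and follows the paper's route: the paper derives this lemma directly from \cref{lem:row-sum-jsr-upper-bound} together with $\alpha>0$, $d_{\min}>0$, and $\gamma<1$. That lemma's proof is your nonnegativity and row-sum computation plus the remark that a common induced-norm bound dominates the JSR, which your submultiplicativity step writes out explicitly.
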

\begin{proof}
This follows immediately from \cref{lem:row-sum-jsr-upper-bound} and the inequalities \(\alpha\in(0,1)\), \(d_{\min}>0\), and \(\gamma\in(0,1)\).
\end{proof}
Therefore, by \cref{lem:direct-jsr-strict}, the deterministic SLS in \cref{eq:direct-discrete-switched-system} satisfies the JSR stability condition.

\begin{proposition}\label{prop:direct-rate}
Consider the discrete SLS in \cref{eq:direct-discrete-switched-system}.  For every \(\varepsilon>0\) such that
\[
  \beta_\varepsilon:=\jsr(\calM_\alpha)+\varepsilon<1,
\]
there exists a norm \(p_\varepsilon\) on \(\R^{|\calS|\,|\calA|}\) such that
\[
  p_\varepsilon(M_\mu x)
  \le
  \beta_\varepsilon p_\varepsilon(x),
  \qquad
  \forall x\in\R^{|\calS|\,|\calA|},
\]
for every stochastic policy \(\mu\).  Hence \(p_\varepsilon\) is a common Lyapunov norm for \cref{eq:direct-discrete-switched-system}; every trajectory satisfies
\[
  p_\varepsilon(x_k)\le \beta_\varepsilon^k p_\varepsilon(x_0),
  \qquad k\ge0,
\]
and therefore \(x_k\to0\) exponentially for every switching sequence.
\end{proposition}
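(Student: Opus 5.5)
The plan is the standard extremal-norm construction (Rota--Strang), applied to the compact convex family $\co(\calM_\alpha)$ and combined with \cref{lem:convex-hull-M}. Write $\calK:=\co(\calM_\alpha)$. By \cref{lem:convex-hull-M}, every $M_\mu$ lies in $\calK$, and $\jsr(\calK)=\jsr(\calM_\alpha)$. So it suffices to build a norm $p_\varepsilon$ with $p_\varepsilon(Ax)\le\beta_\varepsilon p_\varepsilon(x)$ for every $A\in\calK$. Fix $\varepsilon>0$ with $\beta_\varepsilon<1$. Since $\calK$ is bounded and $\jsr(\calK)<\beta_\varepsilon$, the definition of the JSR, together with the enlargement argument in the proof of \cref{lem:jsr-stability} (now applied to the bounded set $\calK$ rather than a finite family), gives a constant $C\ge1$ such that
\[
\sup_{A_1,\ldots,A_k\in\calK}\|A_k\cdots A_1\|\le C\left(\jsr(\calK)+\tfrac{\varepsilon}{2}\right)^k\le C\beta_\varepsilon^k,\qquad k\ge0,
\]
where the empty product is $I$.

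Next I define the candidate norm
\[
p_\varepsilon(x):=\sup_{k\ge0}\ \sup_{A_1,\ldots,A_k\in\calK}\beta_\varepsilon^{-k}\|A_k\cdots A_1x\|.
\]
The $k=0$ term gives $p_\varepsilon(x)\ge\|x\|$, and the product bound gives $p_\varepsilon(x)\le C\|x\|$. Hence $p_\varepsilon$ is finite and positive definite. Absolute homogeneity holds because it holds inside the supremum. The triangle inequality holds because a supremum of seminorms is a seminorm. So $p_\varepsilon$ is a norm, equivalent to $\|\cdot\|$.

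For the contraction property, take $A\in\calK$. Any product $A_k\cdots A_1A$ is a product of length $k+1$ from $\calK$, so
\[
\beta_\varepsilon^{-k}\|A_k\cdots A_1Ax\|=\beta_\varepsilon\cdot\beta_\varepsilon^{-(k+1)}\|A_k\cdots A_1Ax\|\le\beta_\varepsilon p_\varepsilon(x).
\]
Taking the supremum over $k$ and over $A_1,\ldots,A_k$ yields $p_\varepsilon(Ax)\le\beta_\varepsilon p_\varepsilon(x)$. In particular this holds for $A=M_\mu$ with any stochastic policy $\mu$. Iterating along \cref{eq:direct-discrete-switched-system} gives $p_\varepsilon(x_k)\le\beta_\varepsilon^kp_\varepsilon(x_0)$. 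Norm equivalence then gives $\|x_k\|\le C\beta_\varepsilon^k\|x_0\|\to0$ exponentially.

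The only delicate point is the uniform product bound over the infinite set $\calK$. \cref{lem:jsr-stability} was stated for finite families, so I must check that the definition-based argument still works for a bounded set. It does: the limit in \cref{def:jsr} is taken over suprema, so for all $k\ge K$ the supremum of $\|A_k\cdots A_1\|$ is at most $(\jsr+\varepsilon/2)^k$. For the finitely many lengths $k<K$, the supremum is bounded by $(\sup_{A\in\calK}\|A\|)^k$, which is finite because $\calK$ is bounded. Alternatively, one can bypass this step entirely using \cref{lem:row-sum-jsr-upper-bound}, which already bounds $\|M_\mu\|_\infty$ uniformly, together with the equality of the JSRs from \cref{lem:convex-hull-M}.
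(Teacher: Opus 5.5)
Your proof is correct, but it brings in the convex hull at a different point than the paper does.

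The paper cites the existence of an extremal norm for the \emph{finite} family $\calM_\alpha$ (Rota--Strang) as a black box. This gives $p_\varepsilon(M_\pi x)\le\beta_\varepsilon p_\varepsilon(x)$ only for deterministic $\pi$. The paper then extends it to every stochastic $\mu$ using the decomposition $M_\mu=\sum_{\pi}c_\pi(\mu)M_\pi$ and the triangle inequality for $p_\varepsilon$. That route needs only the membership $M_\mu\in\co(\calM_\alpha)$ and subadditivity of the norm.

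You instead build the Rota--Strang norm explicitly, taking the supremum directly over products from $\co(\calM_\alpha)$. The contraction for every $M_\mu$ is then immediate, and the norm is not a black box. The price is that finiteness of your norm rests on $\jsr(\co(\calM_\alpha))=\jsr(\calM_\alpha)$. That is the half of \cref{lem:convex-hull-M} the paper only cites.

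This dependence is easy to remove. By multilinearity, a product $A_k\cdots A_1x$ with $A_i\in\co(\calM_\alpha)$ is a convex combination of the vectors $M_{\pi_k}\cdots M_{\pi_1}x$. So your supremum equals the supremum over products from $\calM_\alpha$ alone. This one observation proves the JSR equality and shows your $p_\varepsilon$ is exactly the extremal norm of the finite family. Your argument then becomes fully self-contained.

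Your closing remark overstates what \cref{lem:row-sum-jsr-upper-bound} provides. It gives only boundedness of $\co(\calM_\alpha)$, which is already clear from compactness, and by itself only the rate $\rho_{\mathrm{row}}$, not $\beta_\varepsilon$. The long-product step still requires the JSR definition, so it does not bypass anything.
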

\noindent The proof is given in Appendix~\ref{app:proof-direct-rate}.
Consequently, the deterministic SLS in \cref{eq:direct-discrete-switched-system} converges to zero for every switching sequence.  The finite-time stochastic analysis below follows the same drift-certification idea, with martingale noise handled separately.  By \cref{lem:row-sum-jsr-upper-bound}, the row-sum rate is only an upper bound on the direct switching family \(\calM_\alpha\) and can be strictly larger than the JSR of \(\calM_\alpha\).

\section{Finite-time error bound under i.i.d. observation}
\label{sec:jsr-lyapunov-qlearning}
The SLS representation in \cref{thm:direct-representation} makes it possible to analyze the finite-time behavior of Q-learning through SLS theory and the JSR~\cite{liberzon2003switching,lin2009stability,shorten2007stability,rota1960note,tsitsiklis1997lyapunov,jungers2009joint}. To state the bound, we first construct a Lyapunov function used to certify the stability. Fix \(\varepsilon>0\) and write
\[
  \beta_\varepsilon:=\jsr(\calM_\alpha)+\varepsilon.
\]
For each integer \(t\ge0\), let us define
\[
V_\varepsilon^t(x)
:=
\sum_{\ell=0}^{t}
\beta_\varepsilon^{-2\ell}
\max_{\pi_1,\ldots,\pi_\ell\in\Theta}
\left\|
M_{\pi_\ell}\cdots M_{\pi_1}x
\right\|_2^2,
\qquad x\in\R^{|\calS|\,|\calA|},
\]
with the empty-product term equal to \(\|x\|_2^2\), and define
\[
V_\varepsilon^\infty(x):=\lim_{t\to\infty}V_\varepsilon^t(x).
\]
This Lyapunov function construction is detailed in Appendix~\ref{app:jsr-lyapunov-aux}, \cref{eq:Veps-t-direct,eq:Veps-infty-direct}.  When \(\beta_\varepsilon<1\), Appendix~\ref{app:jsr-lyapunov-aux}, \cref{lem:Veps-existence} shows that \(V_\varepsilon^\infty\) is finite and satisfies the norm-equivalence estimate
\[
  \|x\|_2^2
  \le
  V_\varepsilon^\infty(x)
  \le
  C_\varepsilon\|x\|_2^2,
  \qquad
  \forall x\in\R^{|\calS|\,|\calA|},
\]
for some \(C_\varepsilon\ge1\). The theorem below uses this Lyapunov function to control the SLS conditional-mean drift and then converts the Lyapunov estimate into a finite-time \(\ell_\infty\)-error bound.
\begin{theorem}
\label{thm:finite-time-Veps}
Let \cref{assump:basic} hold. Fix any \(\varepsilon>0\) such that \(\beta_\varepsilon<1\), and let \(V_\varepsilon^\infty\) and \(C_\varepsilon\) be defined by the preceding Lyapunov construction. Let \(W_{\max}\) be the noise bound defined in Appendix~\ref{app:proof-bounded-noise}, \cref{eq:Wmax-general}. Then, for all \(k\ge0\),
\begin{equation}\label{eq:Veps-moment-bound}
\E[V_\varepsilon^\infty(Q_k-Q^*)]
\le
\beta_\varepsilon^{2k}V_\varepsilon^\infty(Q_0-Q^*)
+
\frac{\alpha^2C_\varepsilon^2W_{\max}}{1-\beta_\varepsilon^2}
\left(1-\beta_\varepsilon^{2k}\right).
\end{equation}
Consequently,
\begin{equation}\label{eq:Veps-final-bound-general}
\E[\|Q_k-Q^*\|_\infty]
\le
\sqrt{C_\varepsilon}\,
\beta_\varepsilon^k
\|Q_0-Q^*\|_2
+
\alpha C_\varepsilon
\sqrt{
\frac{W_{\max}}{1-\beta_\varepsilon^2}
}.
\end{equation}
\end{theorem}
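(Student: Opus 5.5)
We propose to establish a one-step Lyapunov drift inequality for \(V_\varepsilon^\infty\) along the stochastic recursion of \cref{thm:direct-representation}, iterate it, and then convert the resulting moment bound into an \(\ell_\infty\) bound. The argument rests on three structural properties of \(V_\varepsilon^\infty\), which we would verify from its definition. First, \(V_\varepsilon^\infty(x)=\sup_{\text{switching}}\sum_{\ell\ge0}\beta_\varepsilon^{-2\ell}\|M_{\pi_\ell}\cdots M_{\pi_1}x\|_2^2\) with the maxima taken termwise, so \(\sqrt{V_\varepsilon^\infty}\) is a norm. Each term \(x\mapsto\max_{\pi}\|M_{\pi_\ell}\cdots M_{\pi_1}x\|_2\) is a seminorm, and a weighted \(\ell_2\)-sum of seminorms is again a seminorm; it is a norm because the \(\ell=0\) term gives \(\|x\|_2^2\). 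Second, shifting indices gives the exact contraction
\[
V_\varepsilon^\infty(M_\pi x)\le\beta_\varepsilon^2\bigl(V_\varepsilon^\infty(x)-\|x\|_2^2\bigr)\le\beta_\varepsilon^2 V_\varepsilon^\infty(x)
\]
for every deterministic \(\pi\). Third, this contraction extends to every stochastic policy \(\mu\). By \cref{lem:convex-hull-M}, \(M_\mu\) is a convex combination of the \(M_\pi\), and \(\sqrt{V_\varepsilon^\infty}\) is convex as a norm, so \(\sqrt{V_\varepsilon^\infty(M_\mu x)}\le\sum_i\lambda_i\sqrt{V_\varepsilon^\infty(M_{\pi_i}x)}\le\beta_\varepsilon\sqrt{V_\varepsilon^\infty(x)}\).

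For the drift step, let \(x_k:=Q_k-Q^*\). By \cref{thm:direct-representation}, \(x_{k+1}=M_{\mu_k}x_k+\alpha w_k\). Because \(V_\varepsilon^\infty\) is a squared norm rather than a quadratic form, the cross term does not vanish automatically under conditional expectation. This is where the main obstacle lies, and we expect that overcoming it is what produces the factor \(C_\varepsilon^2\). We would handle it by a quadratic upper envelope. Since \(\sqrt{V_\varepsilon^\infty}\) is a norm squeezed between \(\|\cdot\|_2\) and \(\sqrt{C_\varepsilon}\|\cdot\|_2\), we would seek a bound of the form
\[
V_\varepsilon^\infty(y+z)\le V_\varepsilon^\infty(y)+\langle g(y),z\rangle+C_\varepsilon^2\|z\|_2^2,
\]
with \(g(y)\) a subgradient-type vector depending only on \(y\). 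One natural route, which we would try first, is to use that \(V^t_\varepsilon\) is a finite maximum of the convex quadratics \(\sum_\ell\beta_\varepsilon^{-2\ell}\|\Phi_\ell x\|_2^2\) over switching sequences. Each such quadratic has Hessian bounded by \(2C_\varepsilon I\) in the relevant sense. Picking the maximizing sequence at \(y\) then gives the linear term, plus a second-order term of at most \(C_\varepsilon\|z\|_2^2\), which we would loosen to \(C_\varepsilon^2\) if needed to absorb the envelope and limit arguments. Taking \(y=M_{\mu_k}x_k\), which is \(\calF_k\)-measurable, and \(z=\alpha w_k\), the linear term vanishes in conditional expectation because \(\E[w_k\mid\calF_k]=0\). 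Using \(\E[\|w_k\|_2^2\mid\calF_k]\le W_{\max}\) from the appendix noise bound, we obtain
\[
\E[V_\varepsilon^\infty(x_{k+1})\mid\calF_k]\le\beta_\varepsilon^2V_\varepsilon^\infty(x_k)+\alpha^2C_\varepsilon^2W_{\max}.
\]
We should check that \(W_{\max}\) is uniform in \(k\), which presumably requires bounded iterates, \(\|Q_k\|_\infty\le\) const; this is standard for \(\alpha<1\).

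For the conclusion, taking total expectations and unrolling the recursion gives \cref{eq:Veps-moment-bound} via the geometric sum \(\sum_{j<k}\beta_\varepsilon^{2j}=(1-\beta_\varepsilon^{2k})/(1-\beta_\varepsilon^2)\). For \cref{eq:Veps-final-bound-general}, we would chain the inequalities \(\E\|x_k\|_\infty\le\E\|x_k\|_2\le\E\sqrt{V_\varepsilon^\infty(x_k)}\le\sqrt{\E V_\varepsilon^\infty(x_k)}\) (the last by Jensen), and then apply \(\sqrt{a+b}\le\sqrt a+\sqrt b\). Finally, \(V_\varepsilon^\infty(x_0)\le C_\varepsilon\|x_0\|_2^2\) and dropping \(1-\beta_\varepsilon^{2k}\le1\) yield the stated bound.
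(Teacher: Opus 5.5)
There is a genuine gap in the drift step. The envelope $V_\varepsilon^\infty(y+z)\le V_\varepsilon^\infty(y)+\langle g(y),z\rangle+C_\varepsilon^2\|z\|_2^2$, with $g(y)$ independent of $z$, is a smoothness property, and $V_\varepsilon^\infty$ is in general not differentiable. Already the $\ell=1$ term $\max_\pi\|M_\pi x\|_2^2$ has kinks where two policies tie with different gradients $M_\pi^\top M_\pi x$. Adding the remaining convex terms cannot remove such a kink, because subdifferentials of finite convex functions add.

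Choosing the maximizing switching tuple at $y$ only gives the subgradient inequality, which is a \emph{lower} bound on $V_\varepsilon^\infty(y+z)$. An upper bound needs the maximizer at $y+z$, which depends on $z$. So the first-order term is the sublinear map $z\mapsto\max_{g\in\partial V_\varepsilon^\infty(y)}\langle g,z\rangle$. Under a mean-zero $w_k$ its conditional expectation is nonnegative by Jensen, and in general it is of order $\alpha\,\E[\|w_k\|_2\mid\calF_k]$, not zero.

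A function of the same form shows this. Take $V(x)=\|x\|_2^2+\max(x_1^2,x_2^2)$, $y=(1,1)$, and $z=\pm\alpha(1,-1)$ with equal probabilities. Then $\E[V(y+z)]=V(y)+2\alpha+3\alpha^2$, while $\E\|z\|_2^2=2\alpha^2$, so no bound $V(y)+L\,\E\|z\|_2^2$ holds for small $\alpha$. The martingale property therefore cannot remove the cross term for this Lyapunov function.

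The missing idea is to keep the slack $-\|x\|_2^2$ that you derived for deterministic modes and then dropped when passing to stochastic policies. It survives the convex-hull step: either apply Jensen to the convex function $V_\varepsilon^\infty$, as the paper does, or apply your own norm-convexity argument to $\sqrt{V_\varepsilon^\infty(M_\pi x)}\le\beta_\varepsilon\sqrt{V_\varepsilon^\infty(x)-\|x\|_2^2}$. Combined with $V_\varepsilon^\infty\le C_\varepsilon\|\cdot\|_2^2$, it gives $p_\varepsilon(M_{\mu_k}x_k)\le\beta_\varepsilon\sqrt{C_\varepsilon-1}\,\|x_k\|_2$, where $p_\varepsilon=\sqrt{V_\varepsilon^\infty}$.

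The paper then proceeds as follows. It uses only the triangle inequality $p_\varepsilon(x_{k+1})\le p_\varepsilon(M_{\mu_k}x_k)+\alpha\,p_\varepsilon(w_k)$ and squares it. It bounds $\E[p_\varepsilon(w_k)\mid\calF_k]\le\sqrt{C_\varepsilon W_{\max}}$. It absorbs the cross term via $2ab\le a^2+b^2$ with $a=\beta_\varepsilon\|x_k\|_2$ and $b=\alpha\sqrt{C_\varepsilon(C_\varepsilon-1)W_{\max}}$. The $a^2$ term cancels the slack $-\beta_\varepsilon^2\|x_k\|_2^2$, leaving $\beta_\varepsilon^2V_\varepsilon^\infty(x_k)+\alpha^2C_\varepsilon(C_\varepsilon-1)W_{\max}+\alpha^2C_\varepsilon W_{\max}=\beta_\varepsilon^2V_\varepsilon^\infty(x_k)+\alpha^2C_\varepsilon^2W_{\max}$.

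This is where $C_\varepsilon^2$ actually comes from; no cancellation of the cross term is used. Your remaining steps match the paper: bounded iterates for a uniform $W_{\max}$, unrolling the scalar recursion, Jensen to pass to $\E[\|Q_k-Q^*\|_\infty]$, and $\sqrt{a+b}\le\sqrt a+\sqrt b$.
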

The proof is given in Appendix~\ref{app:proof-finite-time-Veps}.
The residual term in~\cref{eq:Veps-final-bound-general} should be interpreted carefully. The theorem does not by itself guarantee that the displayed noise-floor term $\alpha C_\varepsilon
  \sqrt{
  \frac{W_{\max}}{1-\beta_\varepsilon^2}
  }$ vanishes as \(\alpha\downarrow0\), because the norm-equivalence constant \(C_\varepsilon\) in the displayed JSR Lyapunov certificate may also depend on \(\alpha\) and \(\varepsilon\). Therefore, the bound is different from standard finite-time error analyses in which the constant-step-size noise floor is designed to be made arbitrarily small by decreasing the step-size. The advantage of the present interpretation is not a tighter noise floor, but a sharper transient rate: the exponential term is governed by the JSR rate \(\beta_\varepsilon\).
For instance, the finite-time error analysis in~\cite{lee2024final} gives the row-sum rate \(\rho_{\mathrm{row}}=1-\alpha d_{\min}(1-\gamma)\). The JSR-based bound in~\cref{eq:Veps-final-bound-general} instead uses \(\beta_\varepsilon = \jsr(\calM_\alpha)+\varepsilon\). Since \(\jsr(\calM_\alpha)\le\rho_{\mathrm{row}}\), the exponential transient in \cref{eq:Veps-final-bound-general} can be strictly faster than the comparison-system transient when the row-sum bound is loose. The numerical illustration in \cref{sec:numerical-jsr-illustration} exhibits this gap explicitly.

We can also derive an alternative bound whose noise floor is explicitly controlled by the step-size $\alpha$. The proof compares the SLS dynamics with a stochastic reference system driven by the same noise so that the noise cancels pathwise in the residual recursion. Concretely, for a fixed stochastic policy \(\bar\mu\), let us consider the reference system
\[
  x_{k+1}=M_{\bar\mu}x_k+\alpha w_k,
  \qquad
  x_0=Q_0-Q^*.
\]
Subtracting this recursion from the direct error recursion gives
\[
  (Q_{k+1}-Q^*)-x_{k+1}
  =
  M_{\mu_k}\big((Q_k-Q^*)-x_k\big)
  +(M_{\mu_k}-M_{\bar\mu})x_k,
\]
so the martingale noise \(\alpha w_k\) cancels exactly in the residual. The remaining terms are controlled by the JSR product bound and a covariance estimate for the reference system. More details are given in Appendix~\ref{app:direct-reference-filter-noise-floor}.
\begin{theorem}
\label{thm:direct-reference-filter-noise-floor}
Let \cref{assump:basic} hold. Let \(W_{\max}\) be the deterministic noise bound defined in Appendix~\ref{app:proof-bounded-noise}, \cref{eq:Wmax-general}.
Fix any \(\varepsilon>0\) such that
\[
  \beta_\varepsilon:=\jsr(\calM_\alpha)+\varepsilon<1,
\]
and let \(K_{\beta_\varepsilon}\) be defined in Appendix~\ref{app:direct-reference-filter-noise-floor}, \cref{lem:product-growth-constant-direct}. Then, for all \(k\ge0\),
\begin{equation}
\label{eq:direct-reference-filter-bound}
\begin{aligned}
\E[\|Q_k-Q^*\|_\infty]
\le\;&
|\calS|\,|\calA|\sqrt{\frac{\alpha W_{\max}}{d_{\min}(1-\gamma)}}
\left(
1+
\frac{2\gamma d_{\max}}{d_{\min}(1-\gamma)}
\right) \\
&+
K_{\beta_\varepsilon}\beta_\varepsilon^k\|Q_0-Q^*\|_\infty \\
&+
2\alpha\gamma d_{\max}K_{\beta_\varepsilon}^2
k\beta_\varepsilon^{k-1}\|Q_0-Q^*\|_\infty,
\end{aligned}
\end{equation}
with the convention that \(k\beta_\varepsilon^{k-1}=0\) when \(k=0\).
\end{theorem}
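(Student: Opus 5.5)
We plan to follow the reference-system decomposition sketched before the statement. We fix \(\bar\mu:=\pi^*\), an optimal deterministic policy, so that \(M_{\bar\mu}=M_{\pi^*}\in\calM_\alpha\). We define the reference process \(x_{k+1}=M_{\pi^*}x_k+\alpha w_k\) with \(x_0=Q_0-Q^*\), and the residual \(e_k:=(Q_k-Q^*)-x_k\), with \(e_0=0\). We further split the reference process as \(x_k=\bar x_k+y_k\), where \(\bar x_k:=M_{\pi^*}^k x_0\) is its deterministic part and \(y_{k+1}=M_{\pi^*}y_k+\alpha w_k\), \(y_0=0\), is its pure-noise part. Then \(\|Q_k-Q^*\|_\infty\le\|\bar x_k\|_\infty+\|y_k\|_\infty+\|e_k\|_\infty\), and we bound the three terms separately.

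\textbf{Product bound and the term \(\bar x_k\).} First, from \cref{lem:product-growth-constant-direct} (presumably obtained as in the proof of \cref{lem:jsr-stability}, combined with \cref{lem:convex-hull-M}), we take the estimate \(\|M_{\mu_\ell}\cdots M_{\mu_1}\|_\infty\le K_{\beta_\varepsilon}\beta_\varepsilon^\ell\) for arbitrary stochastic-policy products. This immediately gives \(\|\bar x_k\|_\infty\le K_{\beta_\varepsilon}\beta_\varepsilon^k\|Q_0-Q^*\|_\infty\), which is the second term of \cref{eq:direct-reference-filter-bound}.

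\textbf{The residual \(e_k\).} Unrolling \(e_{k+1}=M_{\mu_k}e_k+(M_{\mu_k}-M_{\pi^*})x_k\) from \(e_0=0\) gives
\[
e_k=\sum_{j=0}^{k-1}M_{\mu_{k-1}}\cdots M_{\mu_{j+1}}(M_{\mu_j}-M_{\pi^*})(\bar x_j+y_j).
\]
We have \(M_{\mu_j}-M_{\pi^*}=\alpha\gamma DP(\Pi^{\mu_j}-\Pi^{\pi^*})\), whose \(\infty\)-norm is at most \(2\alpha\gamma d_{\max}\). The \(\bar x_j\) part is then bounded by
\[
\sum_j K_{\beta_\varepsilon}\beta_\varepsilon^{k-1-j}\,2\alpha\gamma d_{\max}\,K_{\beta_\varepsilon}\beta_\varepsilon^j\|Q_0-Q^*\|_\infty=2\alpha\gamma d_{\max}K_{\beta_\varepsilon}^2k\beta_\varepsilon^{k-1}\|Q_0-Q^*\|_\infty,
\]
which is exactly the third term. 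For the \(y_j\) part we deliberately avoid the constant \(K_{\beta_\varepsilon}\), since it would spoil the \(\sqrt\alpha\) scaling. Instead we use \cref{lem:row-sum-jsr-upper-bound}: the product norm is at most \(\rho_{\mathrm{row}}^{k-1-j}\). Summing the geometric series gives a factor \(2\alpha\gamma d_{\max}/(\alpha d_{\min}(1-\gamma))\) multiplying \(\sup_j\E\|y_j\|_\infty\). This reproduces the factor \(\frac{2\gamma d_{\max}}{d_{\min}(1-\gamma)}\) in the first term.

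\textbf{The noise term \(y_k\) (main obstacle).} It remains to show \(\E\|y_k\|_\infty\le|\calS||\calA|\sqrt{\alpha W_{\max}/(d_{\min}(1-\gamma))}\) uniformly in \(k\). The recursion for \(y_k\) is linear and time-invariant, but \(M_{\pi^*}\) is not normal, so a plain Euclidean contraction is unavailable. Our plan is a covariance recursion. Since the \(w_k\) are martingale differences with \(\E[w_kw_k^\top\mid\calF_k]\preceq W_{\max}I\) (from the appendix noise bound), we get
\[
\Sigma_{k+1}\preceq M_{\pi^*}\Sigma_kM_{\pi^*}^\top+\alpha^2W_{\max}I.
\]
Hence \(\Sigma_k\preceq\alpha^2W_{\max}\sum_{\ell\ge0}M_{\pi^*}^\ell(M_{\pi^*}^\ell)^\top\). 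We bound each diagonal entry by entrywise estimates: \(|M^\ell|\) is entrywise nonnegative with row sums at most \(\rho_{\mathrm{row}}^\ell\), so \((M^\ell M^{\ell\top})_{ii}\le\rho_{\mathrm{row}}^{2\ell}\). Summing, \(\mathrm{tr}\,\Sigma_k\le|\calS||\calA|\,\alpha^2W_{\max}/(1-\rho_{\mathrm{row}}^2)\le|\calS||\calA|\,\alpha W_{\max}/(d_{\min}(1-\gamma))\). Then \(\E\|y_k\|_\infty\le\sqrt{\mathrm{tr}\,\Sigma_k}\), and we can loosen \(\sqrt{|\calS||\calA|}\) to \(|\calS||\calA|\) to match the statement. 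The delicate points are verifying that the conditional covariance bound holds with the paper's exact definition of \(W_{\max}\), and that the cross terms vanish because \(y_k\) is \(\calF_k\)-measurable. Adding the three bounds yields \cref{eq:direct-reference-filter-bound}.
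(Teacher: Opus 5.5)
Your proposal is correct and follows the paper's proof essentially step for step. It uses the same fixed-policy reference filter driven by the same noise and the same four-way split: deterministic reference transient, reference noise, the transient-driven residual bounded with the JSR product constant $K_{\beta_\varepsilon}$, and the noise-driven residual bounded with the row-sum contraction $\rho_{\mathrm{row}}$. The reference noise is likewise controlled by a martingale covariance recursion.

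The differences are minor. You fix $\bar\mu=\pi^*$, whereas the paper allows any stochastic $\bar\mu$. Your diagonal-entry trace estimate yields $\sqrt{|\calS|\,|\calA|}$ in the noise floor. The paper's route through $\lambda_{\max}$, $\|B\|_2^2\le|\calS|\,|\calA|\,\|B\|_\infty^2$ and $\operatorname{tr}\le|\calS|\,|\calA|\,\lambda_{\max}$ yields $|\calS|\,|\calA|$. So your bound is slightly sharper before you loosen it to match the statement.
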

The proof is given in Appendix~\ref{app:direct-reference-filter-noise-floor}.
The first term in \cref{eq:direct-reference-filter-bound} is the noise floor, while the remaining two terms are transients governed by the JSR rate. For fixed problem data and initialization, this noise-floor term is of order \(\sqrt{\alpha}\) and therefore vanishes as \(\alpha\downarrow0\).

\section{Finite-time error bound under Markovian observation}
This section applies the reference-filter idea from \cref{thm:direct-reference-filter-noise-floor} to Markovian observations and derives a finite-time error bound for single-trajectory Q-learning. Let \(b(a\mid s)\) generate
\[
  s_{k+1}\sim P(\cdot\mid s_k,a_k),
  \qquad
  r_{k+1}=r(s_k,a_k,s_{k+1}),
  \qquad
  a_{k+1}\sim b(\cdot\mid s_{k+1}),
\]
and put \(X_k:=(s_k,a_k)\). The behavior-induced chain has transition kernel
\[
  P^b(s',a'\mid s,a):=P(s'\mid s,a)b(a'\mid s').
\]
The single-trajectory update is \begin{equation}\label{eq:markovian-single-trajectory-update-ref}
Q_{k+1}(s_k,a_k)
=
Q_k(s_k,a_k)
+
\alpha
\left(
 r_{k+1}
 +
 \gamma\max_{u\in\calA}Q_k(s_{k+1},u)
 -
 Q_k(s_k,a_k)
\right),
\end{equation}
while all other coordinates remain unchanged. Unlike the i.i.d. observation model, the sampling process now introduces temporal dependence in the noise. We control this dependence through the following geometric-mixing condition on the behavior-induced chain.
\begin{assumption}
\label{assump:markovian-mixing}
The behavior-induced chain \(X_k=(s_k,a_k)\) is irreducible and aperiodic with stationary distribution \(d\), and it is initialized from stationarity, \(X_0\sim d\).  There exists \(t_{\mathrm{mix}}\ge1\) such that, for every \(\ell\ge0\),
\[
  \sup_{x\in\calS\times\calA}
  \left\|
  (P^b)^\ell(x,\cdot)-d
  \right\|_{\mathrm{TV}}
  \le
  2^{-\lfloor \ell/t_{\mathrm{mix}}\rfloor}.
\]
\end{assumption}

Under this condition, the same direct SLS representation is combined with a conservative Poisson-equation estimate for the Markovian noise. This yields the following finite-time bound.
\begin{theorem}
\label{thm:markovian-reference-filter-bound}
Let the single-trajectory Q-learning recursion
\cref{eq:markovian-single-trajectory-update-ref} be run with a constant step
size \(\alpha\in(0,1)\).  Let \cref{assump:basic} hold with \(d\) equal to the
stationary distribution of the behavior-induced chain \(X_k=(s_k,a_k)\), and
assume \cref{assump:markovian-mixing} holds.  Fix any
\(\varepsilon>0\) such that
\[
  \beta_\varepsilon:=\jsr(\calM_\alpha)+\varepsilon<1,
\]
let \(W_{\max}\) be the deterministic noise bound defined in Appendix~\ref{app:proof-bounded-noise}, \cref{eq:Wmax-general}, and let \(K_{\beta_\varepsilon}\) be defined in Appendix~\ref{app:direct-reference-filter-noise-floor}, \cref{lem:product-growth-constant-direct}. Then, for every \(k\ge0\),
\begin{equation}\label{eq:markovian-reference-filter-final-bound}
\begin{aligned}
\E[\|Q_k-Q^*\|_\infty]
\le\;&
\left(
1+
\frac{2\gamma d_{\max}}{d_{\min}(1-\gamma)}
\right)
128|\calS|^2|\calA|^2\sqrt{W_{\max}} \\
&\quad\times
\left(
t_{\mathrm{mix}}\sqrt{\frac{\alpha}{d_{\min}(1-\gamma)}}
+
\frac{\alpha t_{\mathrm{mix}}}{d_{\min}(1-\gamma)}
\right) \\
&+
K_{\beta_\varepsilon}\beta_\varepsilon^k\|Q_0-Q^*\|_\infty \\
&+
2\alpha\gamma d_{\max}K_{\beta_\varepsilon}^2
k\beta_\varepsilon^{k-1}\|Q_0-Q^*\|_\infty,
\end{aligned}
\end{equation}
where \(k\beta_\varepsilon^{k-1}=0\) when \(k=0\).
\end{theorem}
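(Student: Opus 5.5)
Under Markovian sampling the error decomposition of \cref{thm:direct-representation} still holds: it only uses that the update is $Q_{k+1}=Q_k+\alpha\{D(F(Q_k)-Q_k)+w_k\}$ with
\[
  w_k:=(e_{a_k}\otimes e_{s_k})\bigl(r_{k+1}+\gamma\max_u Q_k(s_{k+1},u)-Q_k(s_k,a_k)\bigr)-D(F(Q_k)-Q_k),
\]
together with the stochastic-policy linearization. So I would still write $Q_{k+1}-Q^*=M_{\mu_k}(Q_k-Q^*)+\alpha w_k$, but $w_k$ is no longer a martingale difference. The plan is to reuse the reference-filter argument of \cref{thm:direct-reference-filter-noise-floor} essentially verbatim, changing only the covariance estimate for the reference system.

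Concretely, I fix $\bar\mu$ as in the i.i.d. proof, presumably an optimal deterministic policy, so that $M_{\bar\mu}$ satisfies $\|M_{\bar\mu}\|_\infty\le\rho_{\mathrm{row}}$ by \cref{lem:row-sum-jsr-upper-bound}. I define the reference system $x_{k+1}=M_{\bar\mu}x_k+\alpha w_k$ with $x_0=Q_0-Q^*$, and the residual $y_k:=(Q_k-Q^*)-x_k$. The residual obeys
\[
  y_{k+1}=M_{\mu_k}y_k+(M_{\mu_k}-M_{\bar\mu})x_k,
\]
which is pathwise and noise-free.

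Unrolling with \cref{lem:product-growth-constant-direct} gives products bounded by $K_{\beta_\varepsilon}\beta_\varepsilon^{j}$. I use $\|M_{\mu}-M_{\bar\mu}\|_\infty\le 2\alpha\gamma d_{\max}$, and I split $x_k$ into its deterministic part $M_{\bar\mu}^k x_0$ and its noise part $z_k:=\alpha\sum_{j<k}M_{\bar\mu}^{k-1-j}w_j$. Exactly as in the i.i.d. case, this produces the two transient terms $K_{\beta_\varepsilon}\beta_\varepsilon^k\|Q_0-Q^*\|_\infty$ and $2\alpha\gamma d_{\max}K_{\beta_\varepsilon}^2k\beta_\varepsilon^{k-1}\|Q_0-Q^*\|_\infty$. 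It also produces a noise contribution bounded by
\[
  \Bigl(1+\tfrac{2\gamma d_{\max}}{d_{\min}(1-\gamma)}\Bigr)\sup_j\E\|z_j\|_\infty .
\]
For this factor, the sum $\sum_i\rho_{\mathrm{row}}^i\cdot 2\alpha\gamma d_{\max}$ equals $2\gamma d_{\max}/(d_{\min}(1-\gamma))$ if the residual is bounded using the row-sum rate when multiplying $z$; I would mirror whichever ordering the i.i.d. appendix uses.

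The whole proof therefore reduces to a Markovian bound on $\E\|z_k\|_\infty$ of order
\[
  128|\calS|^2|\calA|^2\sqrt{W_{\max}}\,t_{\mathrm{mix}}\Bigl(\sqrt{\alpha/(d_{\min}(1-\gamma))}+\alpha/(d_{\min}(1-\gamma))\Bigr),
\]
and this is the main obstacle. I would write $w_k=g(X_k,s_{k+1},Q_k)-\bar g(Q_k)$, where $\bar g$ is the stationary mean, and introduce the Poisson solution $\hat g(\cdot,Q)=\sum_{\ell\ge0}\bigl((P^b)^\ell g(\cdot,Q)-\bar g(Q)\bigr)$. By \cref{assump:markovian-mixing}, $\|\hat g\|\le 2t_{\mathrm{mix}}\cdot 2\sqrt{W_{\max}}$, using the geometric sum $\sum_\ell 2^{-\lfloor\ell/t_{\mathrm{mix}}\rfloor}=2t_{\mathrm{mix}}$. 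Then $w_k=m_k+(\hat g(X_k,Q_k)-\hat g(X_{k+1},Q_k))$, where $m_k$ is a genuine martingale difference bounded by a constant times $t_{\mathrm{mix}}\sqrt{W_{\max}}$.

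The martingale part is treated exactly like the i.i.d. covariance recursion: $\E\|M_{\bar\mu}\|$-weighted sums of squares give $\E\|\cdot\|_2^2\lesssim \alpha^2 t_{\mathrm{mix}}^2W_{\max}/(1-\rho_{\mathrm{row}}^2)$. This yields the $t_{\mathrm{mix}}\sqrt{\alpha/(d_{\min}(1-\gamma))}$ term, with $\|\cdot\|_\infty\le\|\cdot\|_2$ and dimension factors absorbed into $|\calS|^2|\calA|^2$.

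The telescoping part requires summation by parts:
\[
  \alpha\sum_j M_{\bar\mu}^{k-1-j}\bigl(\hat g(X_j,Q_j)-\hat g(X_{j+1},Q_j)\bigr).
\]
Rearranged, this gives boundary terms of size $\alpha t_{\mathrm{mix}}\sqrt{W_{\max}}$, plus terms $(M_{\bar\mu}^{k-j}-M_{\bar\mu}^{k-1-j})\hat g(X_j,\cdot)$ whose norm is $\alpha d_{\max}(1+\gamma)\rho_{\mathrm{row}}^{k-1-j}$ times the $\hat g$ bound, summing to $O(\alpha t_{\mathrm{mix}}/(d_{\min}(1-\gamma)))$. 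There are also terms $\hat g(X_{j+1},Q_{j+1})-\hat g(X_{j+1},Q_j)$, controlled by Lipschitzness of $\hat g$ in $Q$ and $\|Q_{j+1}-Q_j\|\le\alpha\cdot$const, which again sum to $O(\alpha t_{\mathrm{mix}}/(d_{\min}(1-\gamma)))$. I expect the crude constant $128$ and the squared dimension factor to come from bounding these pieces generously, including converting $W_{\max}$-type bounds on $g$ and its $Q$-Lipschitz constant into one another. The care needed is in the dependence of $\hat g$ on $Q_k$ and in keeping everything uniformly bounded.
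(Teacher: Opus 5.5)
Your proposal is correct and follows the paper's proof essentially step for step. It uses the same fixed-$\bar\mu$ reference filter with pathwise noise cancellation, JSR product bounds for the deterministic residual and row-sum contraction for the noise-driven residual, and a Poisson decomposition of the Markovian noise into martingale and telescoping (summation-by-parts plus $Q$-Lipschitz) pieces. The differences are cosmetic: you merge the fresh-transition noise $\eta_i$ and the Poisson martingale increment $h_{i+1}$ into one martingale difference, which requires posing the Poisson equation for the conditional mean $\E[\widehat h_k(Q)\mid X_k=x]$ rather than for a function of $s_{k+1}$, and any fixed stochastic $\bar\mu$ works, not only an optimal one.
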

The proof is given in Appendix~\ref{sec:extensions-markovian-observations}. The first line of \cref{eq:markovian-reference-filter-final-bound} is the Markovian noise floor. The martingale term in the Poisson decomposition is controlled by a pathwise bound for the adaptive tables \(Q_i\); consequently, for fixed problem data and fixed \(t_{\mathrm{mix}}\), this conservative version is of order \(t_{\mathrm{mix}}\sqrt{\alpha}+\alpha t_{\mathrm{mix}}\), and hence it decreases to zero as \(\alpha\downarrow0\). The remaining two terms are transients governed by the JSR rate.

\section{Numerical illustration}
\label{sec:numerical-jsr-illustration}

This section gives a small numerical illustration of the JSR transient rate and its comparison with the standard row-sum rate.  We use the action-major ordering induced by the notation above, namely
\[
  (1,1),(2,1),(1,2),(2,2).
\]
Let us consider an i.i.d. asynchronous Q-learning problem with $\calS=\{1,2\}$, $\calA=\{1,2\}$, $\gamma=0.9$, state-action sampling distribution $d=(0.10,0.40,0.10,0.40)$ in this ordering, $D=\diag(d)$, and transition probabilities $P(1\mid s,a)=0.5$ and $P(2\mid s,a)=0.5$ for every $s\in\calS$ and $a\in\calA$. The reward function is $r(s,1,s')=1$, $r(s,2,s')=-9$ for all $s,s'\in\calS$.
Therefore, action $a=1$ is optimal in both states and $Q^*(s,1)=10$ and $Q^*(s,2)=0$ for all $s\in\calS$.
The initial condition is \(Q_0=0\). To make the constant step-size residual visible in the empirical trajectory, the simulation uses the bounded additive reward-observation model
\[
  r_{k+1}^{\mathrm{obs}}
  =
  r(s_k,a_k,s_{k+1})+\xi_k,
  \qquad
  \xi_k\in\{-0.1,0.1\},
\]
where the variables \(\xi_k\) are independent of the trajectory and have equal probabilities on the two displayed values.  Since \(\E[\xi_k]=0\), the averaged Bellman operator and the value \(Q^*\) computed above are unchanged.  This is a bounded reward-observation variant of \cref{assump:basic}; the pathwise bounded-noise estimates used in the theory apply verbatim after taking the reward bound to be at least \(R_{\max}=9.1\).  Setting \(\xi_k=0\) recovers the deterministic-reward model exactly. For each value of \(\alpha\), the empirical curve averages \(20\) independent trajectories and plots the relative error defined as $\frac{\|Q_k-Q^*\|_\infty}{\|Q_0-Q^*\|_\infty}$. The displayed direct rates are finite-product estimates computed by enumerating all products of these four matrices up to length \(L_{\max}=8\) and taking
\[
  \widehat\rho_{\mathrm{dir}}^{(8)}(\alpha)
  :=
  \max_{1\le \ell\le 8}\;
  \max_{\pi_1,\ldots,\pi_\ell\in\Theta}
  \rho\!
  \left(
  M_{\pi_\ell}\cdots M_{\pi_1}
  \right)^{1/\ell},
\]
where \(\rho(\cdot)\) denotes the ordinary spectral radius of a single matrix. This finite-product quantity is used as a numerical lower estimate of the direct JSR, rather than as a separate certificate of the exact JSR value.  The row-sum reference rate is computed exactly as $\rho_{\mathrm{row}}(\alpha)  =1-\alpha d_{\min}(1-\gamma)=1-0.01\alpha$. For \(\alpha=0.30,0.60,0.90\), the corresponding direct finite-product estimates are \(0.995377,0.990755,0.986132\), respectively, while the exact row-sum rates are \(0.997000,0.994000,0.991000\). The three figures plot the averaged empirical relative error together with two normalized reference lines. In all three cases, the finite-product direct-rate line decays faster than the row-sum line, reflecting the observed gap between \(\widehat\rho_{\mathrm{dir}}^{(8)}(\alpha)\) and \(\rho_{\mathrm{row}}(\alpha)\). Moreover, before the constant-step-size noise floor becomes dominant, the early decay of the Q-learning error closely follows the finite-product direct-rate line. This indicates that, for this example, the proposed direct-SLS rate estimate gives a useful description of the transient convergence behavior.

\begin{figure}[!ht]
  \centering
  \includegraphics[width=.64\linewidth]{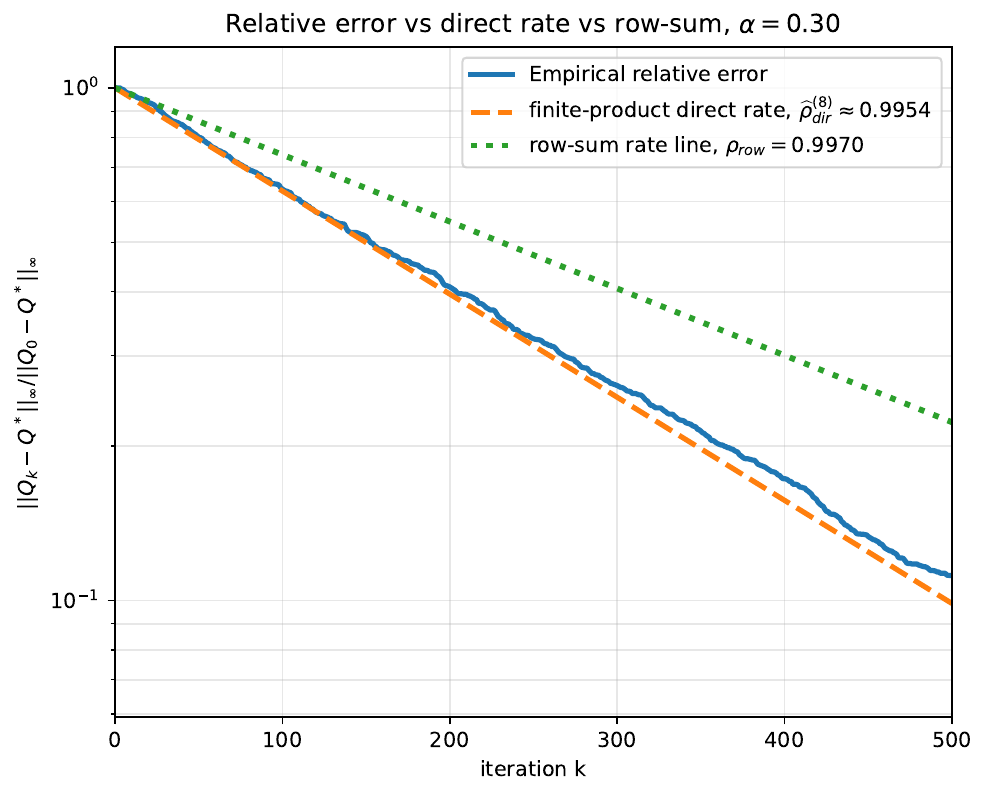}
  \caption{Relative Q-learning error for \(\alpha=0.30\), averaged over 20 independent seeds.  The empirical relative error is compared with the finite-product direct-rate line and the row-sum rate line, both normalized to start at one.}
  \label{fig:jsr-alpha-030}
\end{figure}

\begin{figure}[!ht]
  \centering
  \includegraphics[width=.64\linewidth]{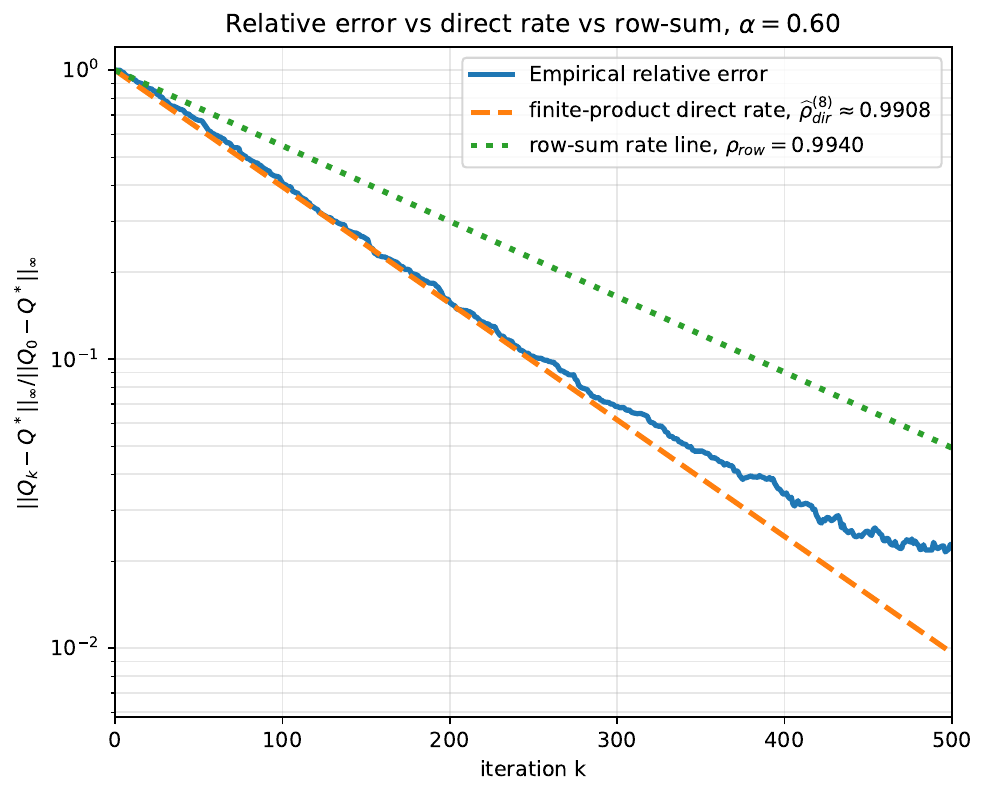}
  \caption{Relative Q-learning error for \(\alpha=0.60\), averaged over 20 independent seeds.  The finite-product direct-rate line tracks the transient decay more closely than the row-sum rate line in this example.}
  \label{fig:jsr-alpha-060}
\end{figure}

\begin{figure}[!ht]
  \centering
  \includegraphics[width=.64\linewidth]{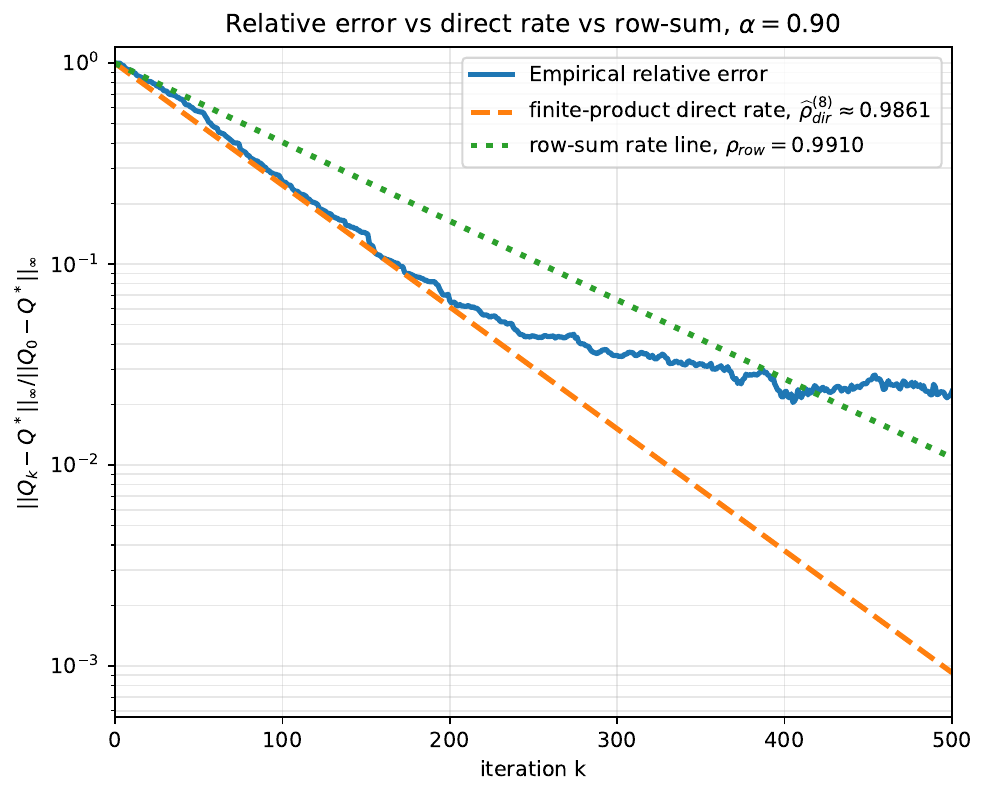}
  \caption{Relative Q-learning error for \(\alpha=0.90\), averaged over 20 independent seeds.  The constant-step-size reward-noise residual becomes visible as the empirical curve flattens.}
  \label{fig:jsr-alpha-090}
\end{figure}

\clearpage
\section{Conclusion}\label{sec:conclusion}
We developed a direct SLS finite-time analysis of Q-learning.  The central step is the exact stochastic-policy representation of the Bellman maximization error, which yields an SLS conditional-mean error recursion whose intrinsic deterministic rate is the direct JSR \(\jsr(\calM_\alpha)\).  The standard row-sum rate remains a simple upper bound, but it can be conservative.
The resulting bounds use a JSR-induced certificate for the direct drift.  The framework also extends to Markovian observations.  Overall, the direct-SLS viewpoint complements Bellman-contraction and comparison-system analyses by exposing instance-dependent drift rates and removing comparison-induced transient terms when sharper certificates are available.

\appendix
\begin{center}
{\Large\bfseries Appendix}
\end{center}
\renewcommand{\theHfigure}{\thesection.\arabic{figure}}

\section{Affine SLS representation of Q-learning}
\label{app:affine-switching-system-representation}
We next recall the affine SLS representation used in~\cite{lee2020unified,lee2023discrete,lee2024final}.  For each $Q\in\R^{|\calS|\,|\calA|}$, let $\pi_Q$ be a deterministic greedy policy satisfying $\pi_Q(s)\in\argmax_{a\in\calA}Q(s,a), s\in\calS$.
Then
\[
  V_Q=\Pi^{\pi_Q}Q,
  \qquad
  F(Q)=R+\gamma P\Pi^{\pi_Q}Q.
\]
Let $\pi_k:=\pi_{Q_k}$, and let $\pi^*$ be an optimal deterministic policy so that $Q^*=R+\gamma P\Pi^{\pi^*}Q^*$. Subtracting the fixed-point equation from the Q-learning recursion gives the stochastic affine SLS recursion
\[
(Q_{k+1}-Q^*)
=
\bigl(I-\alpha D+\alpha\gamma DP\Pi^{\pi_k}\bigr)(Q_k-Q^*)
+
\alpha\gamma DP(\Pi^{\pi_k}-\Pi^{\pi^*})Q^*
+
\alpha w_k,
\]
where
\[
w_k
:=
(e_{a_k}\otimes e_{s_k})
\left(
r_{k+1}+\gamma\max_{u\in\calA}Q_k(s'_k,u)-Q_k(s_k,a_k)
\right)
-
D(F(Q_k)-Q_k).
\]
Equivalently, it can be written as
\[
  (Q_{k+1}-Q^*)=A_{\pi_k}(Q_k-Q^*)+b_{\pi_k}+\alpha w_k,
\]
where $  A_{\pi}:=I-\alpha D+\alpha\gamma DP\Pi^\pi$ and $b_{\pi}:=\alpha\gamma DP(\Pi^\pi-\Pi^{\pi^*})Q^*$. The representation is affine because $b_{\pi_k}$ is generally nonzero whenever the greedy policy $\pi_k$ is not optimal. The next section shows that this offset is an artifact of the chosen parametrization: $V_{Q_k}-V^*$ can instead be represented directly as the action-wise error averaged under a suitable stochastic policy.

\section{Boundedness and noise variance}
\label{app:proof-bounded-noise}

\begin{lemma}\label{lem:bounded-noise}
Under \cref{assump:basic}, let $\{\calF_k\}_{k\ge0}$ be the natural filtration defined in \cref{subsec:iid-qlearning}.  In particular, $Q_0$ is deterministic.  The Q-learning iterates satisfy
\[
  \|Q_k\|_\infty\le \max\left\{\|Q_0\|_\infty,\frac{R_{\max}}{1-\gamma}\right\},
  \qquad \forall k\ge0.
\]
Moreover, for the noise
\[
w_k
:=
(e_{a_k}\otimes e_{s_k})
\left(
r_{k+1}+\gamma\max_{u\in\calA}Q_k(s'_k,u)-Q_k(s_k,a_k)
\right)
-
D(F(Q_k)-Q_k),
\]
we have
\[
  \E[\|w_k\|_2^2\mid\calF_k]
  \le
  W_{\max},
  \qquad \forall k\ge0,
\]
where $W_{\max}$ is defined as
\begin{equation}\label{eq:Wmax-general}
  W_{\max}
  :=
  \left(R_{\max}+(1+\gamma)
  \max\left\{\|Q_0\|_\infty,\frac{R_{\max}}{1-\gamma}\right\}\right)^2.
\end{equation}
\end{lemma}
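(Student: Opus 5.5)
The boundedness claim will be proved by induction on \(k\), writing \(B:=\max\{\|Q_0\|_\infty,R_{\max}/(1-\gamma)\}\). The case \(k=0\) holds by definition of \(B\). For the inductive step, assume \(\|Q_k\|_\infty\le B\); this holds pathwise because \(Q_0\) is deterministic (Assumption~\ref{assump:basic}(iii)), so no probabilistic argument is needed. Every coordinate other than \((s_k,a_k)\) is unchanged, so only the updated coordinate needs checking. For it we write
\[
Q_{k+1}(s_k,a_k)=(1-\alpha)Q_k(s_k,a_k)+\alpha\Bigl(r_{k+1}+\gamma\max_{u}Q_k(s'_k,u)\Bigr).
\]
This is a convex combination because \(\alpha\in(0,1)\). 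The first term has magnitude at most \(B\). The target term has magnitude at most \(R_{\max}+\gamma B\). Since \(B\ge R_{\max}/(1-\gamma)\), we have \(R_{\max}\le(1-\gamma)B\), hence \(R_{\max}+\gamma B\le B\). The convex combination is therefore bounded by \(B\) in absolute value, which closes the induction.

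For the noise bound, set \(\delta_k:=r_{k+1}+\gamma\max_uQ_k(s'_k,u)-Q_k(s_k,a_k)\) and \(\zeta_k=(e_{a_k}\otimes e_{s_k})\delta_k\). The identity \(\E[\zeta_k\mid\calF_k]=D(F(Q_k)-Q_k)\) was already noted in Section~\ref{subsec:iid-qlearning}. Hence \(w_k=\zeta_k-\E[\zeta_k\mid\calF_k]\), and the conditional variance identity gives
\[
\E[\|w_k\|_2^2\mid\calF_k]=\E[\|\zeta_k\|_2^2\mid\calF_k]-\|\E[\zeta_k\mid\calF_k]\|_2^2\le\E[\|\zeta_k\|_2^2\mid\calF_k].
\]
The random vector \(\zeta_k\) has a single nonzero entry, so \(\|\zeta_k\|_2^2=\delta_k^2\). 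The first part of the lemma gives the pathwise bound \(|\delta_k|\le R_{\max}+\gamma B+B=R_{\max}+(1+\gamma)B\), whose square is exactly \(W_{\max}\).

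No step here is a real obstacle. The only points needing care are the inequality \(R_{\max}+\gamma B\le B\) in the induction, and the choice to use the variance decomposition (rather than the cruder triangle inequality) so that no extra factor of \(4\) appears.
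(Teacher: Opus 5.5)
Your proposal is correct and matches the paper's proof essentially step for step: the same induction via $R_{\max}+\gamma B\le B$ and the convex-combination form of the updated coordinate, then the conditional-variance inequality $\E[\|w_k\|_2^2\mid\calF_k]\le\E[\|\zeta_k\|_2^2\mid\calF_k]$ together with the single-nonzero-coordinate observation. As a minor aside, the induction is pathwise whether or not $Q_0$ is deterministic; what Assumption~\ref{assump:basic}(iii) actually buys is that $B$, and hence $W_{\max}$, is a deterministic constant.
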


\begin{proof}
We prove the two claims separately.
First, we prove boundedness by induction. At time $k=0$,
\[
  \|Q_0\|_\infty=\|Q_0\|_\infty\le \max\left\{\|Q_0\|_\infty,\frac{R_{\max}}{1-\gamma}\right\},
\]
so the claim is true initially. Suppose now that
$$\|Q_k\|_\infty\le \max\left\{\|Q_0\|_\infty,\frac{R_{\max}}{1-\gamma}\right\}$$
for some $k$. The random target used in the sampled coordinate is
\[
  Y_k:=r_{k+1}+\gamma\max_{u\in\calA}Q_k(s'_k,u).
\]
The induction hypothesis gives $|Q_k(s'_k,u)|\le \max\left\{\|Q_0\|_\infty,\frac{R_{\max}}{1-\gamma}\right\}$ for every $u$, and hence
\[
  |Y_k|
  \le
  R_{\max}+\gamma \max\left\{\|Q_0\|_\infty,\frac{R_{\max}}{1-\gamma}\right\}.
\]
Because $\max\left\{\|Q_0\|_\infty,\frac{R_{\max}}{1-\gamma}\right\}\ge R_{\max}/(1-\gamma)$, we have
\[
\begin{aligned}
  R_{\max}
  &\le
  (1-\gamma)
  \max\left\{
  \|Q_0\|_\infty,
  \frac{R_{\max}}{1-\gamma}
  \right\}, \\
  R_{\max}+\gamma
  \max\left\{
  \|Q_0\|_\infty,
  \frac{R_{\max}}{1-\gamma}
  \right\}
  &\le
  \max\left\{
  \|Q_0\|_\infty,
  \frac{R_{\max}}{1-\gamma}
  \right\}.
\end{aligned}
\]
Therefore, the target $Y_k$ always lies in the interval
\begin{equation}\label{eq:1}
  Y_k\in
  \left[-B_0,\,B_0\right],
  \qquad
  B_0:=
  \max\left\{
  \|Q_0\|_\infty,
  \frac{R_{\max}}{1-\gamma}
  \right\}.
\end{equation}
Only the coordinate $(s_k,a_k)$ is changed. For that coordinate,
\[
  Q_{k+1}(s_k,a_k)
  =
  (1-\alpha)Q_k(s_k,a_k)+\alpha Y_k.
\]
Since $0<\alpha<1$, this is a convex combination of two numbers in the interval in~\cref{eq:1}, and therefore it also belongs to the same interval in~\cref{eq:1}. Every other coordinate is unchanged and therefore remains in the same interval. This proves $\|Q_{k+1}\|_\infty\le \max\left\{\|Q_0\|_\infty,\frac{R_{\max}}{1-\gamma}\right\}$. By induction, we have
\[
  \|Q_k\|_\infty\le \max\left\{\|Q_0\|_\infty,\frac{R_{\max}}{1-\gamma}\right\},
  \qquad \forall k\ge0,
\]
which proves the first statement. We now prove the conditional second-moment bound for the noise. Let
\[
\zeta_k
:=
(e_{a_k}\otimes e_{s_k})
\left(
 r_{k+1}+\gamma\max_{u\in\calA}Q_k(s'_k,u)-Q_k(s_k,a_k)
\right).
\]
By definition, $w_k=\zeta_k-\E[\zeta_k\mid\calF_k]$.  For any square-integrable random vector $Z$ and any $\sigma$-field $\calF$,
\[
  \E[\|Z-\E[Z\mid\calF]\|_2^2\mid\calF]
  =
  \E[\|Z\|_2^2\mid\calF]-\|\E[Z\mid\calF]\|_2^2
  \le
  \E[\|Z\|_2^2\mid\calF].
\]
Applying this identity with $Z=\zeta_k$ gives
\[
  \E[\|w_k\|_2^2\mid\calF_k]
  \le
  \E[\|\zeta_k\|_2^2\mid\calF_k].
\]
The vector $\zeta_k$ has at most one nonzero coordinate. Therefore, its Euclidean norm is exactly the absolute value of the scalar temporal-difference term. By the boundedness just proved, one can derive
\[
\begin{aligned}
\left|
 r_{k+1}+\gamma\max_{u\in\calA}Q_k(s'_k,u)-Q_k(s_k,a_k)
\right|
&\le
R_{\max}+\gamma
\max\left\{
\|Q_0\|_\infty,
\frac{R_{\max}}{1-\gamma}
\right\} \\
&\quad+
\max\left\{
\|Q_0\|_\infty,
\frac{R_{\max}}{1-\gamma}
\right\} \\
&=
R_{\max}+(1+\gamma)
\max\left\{
\|Q_0\|_\infty,
\frac{R_{\max}}{1-\gamma}
\right\}.
\end{aligned}
\]
Consequently, we have
\[
  \|\zeta_k\|_2^2
  \le
  \left(R_{\max}+(1+\gamma)\max\left\{\|Q_0\|_\infty,\frac{R_{\max}}{1-\gamma}\right\}\right)^2
  =
  W_{\max}.
\]
Taking conditional expectation on both sides of the above inequality gives the desired bound.
\end{proof}

Note that since $Q_0$ is deterministic by \cref{assump:basic}, $W_{\max}$ is a deterministic finite constant.
Since \(\max\left\{\|Q_0\|_\infty,R_{\max}/(1-\gamma)\right\}\ge R_{\max}/(1-\gamma)\), the crude but useful bound on $W_{\max}$ is $W_{\max}\le 4\max\left\{\|Q_0\|_\infty,\frac{R_{\max}}{1-\gamma}\right\}^2$. In particular, under the common initialization convention
\(\|Q_0\|_\infty\le R_{\max}/(1-\gamma)\), one has
\begin{equation}\label{eq:Wmax-normalized-bound}
  W_{\max}\le
  \frac{4R_{\max}^2}{(1-\gamma)^2}.
\end{equation}
We will use the boundedness and noise estimate in~\cref{lem:bounded-noise} throughout the finite-time analysis.

\section{Stochastic-policy linearization of the Bellman max}
\label{app:proof-stochastic-policy-linearization}

\begin{lemma}\label{lem:stochastic-policy-linearization}
For every $Q\in\R^{|\calS|\,|\calA|}$, there exists a stochastic policy $\mu_Q:\calS\to\Delta_{|\calA|}$ such that
\begin{equation*}
  V_Q-V^*=\Pi^{\mu_Q}(Q-Q^*).
\end{equation*}
Moreover, $\mu_Q$ can be chosen as a measurable function of $Q$.
\end{lemma}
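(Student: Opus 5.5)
The argument is state-wise and elementary. The key point is that for each state $s$ the scalar $V_Q(s)-V^*(s)$ lies between two action-wise errors, so it is a convex combination of the entries of $(Q-Q^*)(s,\cdot)$.

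Fix $s\in\calS$ and write $\delta_a:=Q(s,a)-Q^*(s,a)$ for $a\in\calA$. Let $a_Q(s)$ be the smallest index in $\argmax_a Q(s,a)$ and $a^*(s)$ the smallest index in $\argmax_a Q^*(s,a)$; these tie-breaking rules make both choices Borel measurable in $Q$.

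We then get the sandwich
\[
\delta_{a^*(s)}
= Q(s,a^*(s))-V^*(s)
\le V_Q(s)-V^*(s)
\le Q(s,a_Q(s))-Q^*(s,a_Q(s))
= \delta_{a_Q(s)}.
\]
The first inequality uses $Q(s,a^*(s))\le V_Q(s)$, and the second uses $V^*(s)\ge Q^*(s,a_Q(s))$.

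Set $\ell:=\delta_{a^*(s)}$, $u:=\delta_{a_Q(s)}$, and $v:=V_Q(s)-V^*(s)$, so that $\ell\le v\le u$. Define
\[
\lambda_s:=\frac{v-\ell}{u-\ell}\in[0,1] \quad\text{if } u>\ell,
\qquad
\lambda_s:=1 \quad\text{if } u=\ell,
\]
and put
\[
\mu_Q(s):=\lambda_s e_{a_Q(s)}+(1-\lambda_s)e_{a^*(s)}\in\Delta_{|\calA|}.
\]
Then $\mu_Q(s)^\top(\delta_a)_{a\in\calA}=\lambda_s u+(1-\lambda_s)\ell=v$. This holds in the case $u=\ell$ as well, since then $v=u$. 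If $a_Q(s)=a^*(s)$, the two vertices coincide and the identity still holds.

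Stacking over states via the definition of $\Pi^\mu$ gives the vector identity. The row of $\Pi^{\mu_Q}$ for state $s$ is $\mu_Q(s)^\top\otimes e_s^\top$. Under the action-major ordering $e_a\otimes e_s$, this row picks out exactly $\sum_a\mu_Q(s)_a(Q-Q^*)(s,a)$. Hence $\Pi^{\mu_Q}(Q-Q^*)=V_Q-V^*$.

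Measurability is the only step that needs care, and it comes from three facts:
\begin{enumerate}[(i)]
\item $a^*(s)$ does not depend on $Q$.
\item $a_Q(s)$ is piecewise constant on Borel sets defined by finitely many linear inequalities among the entries $Q(s,\cdot)$.
\item $\lambda_s$ is a ratio of continuous functions on each such piece, with the Borel case split $u=\ell$ handled separately.
\end{enumerate}
Therefore $Q\mapsto\mu_Q$ is Borel measurable. Composing with $Q_k$ then makes $\mu_k:=\mu_{Q_k}$ an $\calF_k$-measurable policy, which is what \cref{thm:direct-representation} needs.
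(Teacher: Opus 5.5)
Your proof is correct and follows essentially the same route as the paper: you sandwich $V_Q(s)-V^*(s)$ between two action-wise errors at each state, write it as a two-point convex combination with lexicographic tie-breaking, stack the rows through $\Pi^{\mu_Q}$, and get measurability from finitely many comparisons. The only difference is the choice of the two actions. You mix the optimal action $a^*(s)$ and the greedy action $a_Q(s)$ directly, which is the comparison-system sandwich $\Pi^{\pi^*}(Q-Q^*)\le V_Q-V^*\le\Pi^{\pi_Q}(Q-Q^*)$. The paper instead relaxes to $\min_a$ and $\max_a$ of the error and mixes the argmin and argmax actions. Your choice has the mild bonus that $\mu_Q$ is, state by state, a mixture of the optimal and greedy actions.
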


\begin{proof}
Fix a state $s\in\calS$ and write
\[
  e:=Q-Q^*.
\]
We first show that the scalar number
\[
  y_s:=V_Q(s)-V^*(s)
\]
lies between the smallest and largest action-wise errors at state $s$.
Since
\[
V_Q(s)
=
\max_{a\in\calA}\{Q^*(s,a)+e(s,a)\},
\qquad
V^*(s)=\max_{a\in\calA}Q^*(s,a),
\]
we have the upper bound
\[
\begin{aligned}
V_Q(s)
&=\max_{a\in\calA}\{Q^*(s,a)+e(s,a)\}  \\
&\le
\max_{a\in\calA}Q^*(s,a)
+
\max_{a\in\calA}e(s,a)
=
V^*(s)+\max_{a\in\calA}e(s,a),
\end{aligned}
\]
which leads to
\[
  y_s\le \max_{a\in\calA}e(s,a).
\]
For the lower bound, let $a^*(s)$ be a maximizer of $Q^*(s,\cdot)$.  Then
\[
\begin{aligned}
V_Q(s)=\max_{a\in\calA}\{Q^*(s,a)+e(s,a)\} \ge Q^*(s,a^*(s))+e(s,a^*(s))\ge V^*(s)+\min_{a\in\calA}e(s,a).
\end{aligned}
\]
Hence, it follows that
\[
  \min_{a\in\calA}e(s,a)
  \le y_s
  \le \max_{a\in\calA}e(s,a).
\]
In one dimension, the convex hull of the finite set
$\{e(s,a):a\in\calA\}$ is exactly the interval between its minimum and maximum.  Therefore $y_s$ can be written as a convex combination of the action-wise errors.
We now choose one such convex combination explicitly.  Break ties lexicographically and define
\[
  a_{\min}(s)\in\argmin_{a\in\calA}e(s,a),
  \qquad
  a_{\max}(s)\in\argmax_{a\in\calA}e(s,a).
\]
If
\[
  e(s,a_{\min}(s))=e(s,a_{\max}(s)),
\]
then all action-wise errors at state $s$ are equal.  In this case $y_s$ must equal that common value, so we may set
\[
  \mu_Q(a_{\min}(s)\mid s)=1.
\]
If the two values are different, define
\[
  \lambda_s
  :=
  \frac{y_s-e(s,a_{\min}(s))}
       {e(s,a_{\max}(s))-e(s,a_{\min}(s))}.
\]
The bounds above guarantee $0\le\lambda_s\le1$.  Set
\[
  \mu_Q(a_{\max}(s)\mid s)=\lambda_s,
  \qquad
  \mu_Q(a_{\min}(s)\mid s)=1-\lambda_s,
\]
and set all other action probabilities to zero.  Then
\[
\sum_{a\in\calA}\mu_Q(a\mid s)e(s,a)
=
(1-\lambda_s)e(s,a_{\min}(s))
+
\lambda_s e(s,a_{\max}(s))
=
y_s.
\]
This proves the desired identity at the fixed state $s$.
Repeating the same construction for every state and stacking the resulting scalar identities gives the exact vector--matrix identity
\[
\begin{aligned}
V_Q-V^*
&=
\begin{bmatrix}
V_Q(1)-V^*(1)\\
\vdots\\
V_Q(|\calS|)-V^*(|\calS|)
\end{bmatrix} \\
&=
\begin{bmatrix}
\sum_{a\in\calA}\mu_Q(a\mid 1)e(1,a)\\
\vdots\\
\sum_{a\in\calA}\mu_Q(a\mid |\calS|)e(|\calS|,a)
\end{bmatrix} \\
&=
\begin{bmatrix}
\mu_Q(1)^\top\otimes e_1^\top\\
\mu_Q(2)^\top\otimes e_2^\top\\
\vdots\\
\mu_Q(|\calS|)^\top\otimes e_{|\calS|}^\top
\end{bmatrix}(Q-Q^*) \\
&=\Pi^{\mu_Q}(Q-Q^*).
\end{aligned}
\]
Finally, the above rule can be made deterministic and unique as a function of $Q$.  Because the action set is finite, lexicographic tie-breaking selects exactly one minimizer and exactly one maximizer at each state.  Once these two actions are fixed, the displayed formula determines the coefficient $\lambda_s$ uniquely; in the equal-error case, the rule deterministically assigns all mass to the selected action.  Thus, for each input $Q$, the construction returns one well-defined policy rather than a set of possible choices.  Since this deterministic rule is implemented by finitely many comparisons of continuous functions of $Q$ together with the explicit formula for $\lambda_s$, the resulting policy is measurable as a function of $Q$.
\end{proof}

\section{Proof of~\Cref{thm:direct-representation}}
\label{app:proof-direct-representation}

Start from the vector Q-learning recursion
\[
  Q_{k+1}=Q_k+\alpha\{D(F(Q_k)-Q_k)+w_k\}.
\]
Subtract $Q^*$ from both sides.  Since $Q^*=F(Q^*)$, we obtain
\[
\begin{aligned}
 (Q_{k+1}-Q^*)
 &= Q_{k+1}-Q^* \\
 &= Q_k-Q^*
 +\alpha D(F(Q_k)-Q_k) +\alpha w_k \\
 &= (Q_k-Q^*)
 +\alpha D\{F(Q_k)-F(Q^*)-(Q_k-Q^*)\}
 +\alpha w_k .
\end{aligned}
\]
This identity is purely algebraic; no approximation has been made.
Next, we use the form of the Bellman optimality operator,
\[
  F(Q)=R+\gamma PV_Q,
\]
which leads to
\[
  F(Q_k)-F(Q^*)
  =
  \gamma P(V_{Q_k}-V^*).
\]
Because $Q_k$ is $\calF_k$-measurable and the policy in
\cref{lem:stochastic-policy-linearization} can be selected measurably as a function of its input, there exists an $\calF_k$-measurable stochastic policy
\[
  \mu_k:=\mu_{Q_k}
\]
such that
\[
  V_{Q_k}-V^* = \Pi^{\mu_k}(Q_k-Q^*).
\]
Substituting this into the previous display gives
\[
  F(Q_k)-F(Q^*)
  =
  \gamma P\Pi^{\mu_k}(Q_k-Q^*).
\]
Hence
\[
\begin{aligned}
 (Q_{k+1}-Q^*)
 &=(Q_k-Q^*)+
 \alpha D\{\gamma P\Pi^{\mu_k}(Q_k-Q^*)-(Q_k-Q^*)\}
 +\alpha w_k \\
 &=
 \left(I-\alpha D+\alpha\gamma DP\Pi^{\mu_k}\right)(Q_k-Q^*)
 +\alpha w_k \\
 &=M_{\mu_k}(Q_k-Q^*)+\alpha w_k.
\end{aligned}
\]
This proves the direct SLS representation.
It remains only to identify the conditional mean.  By construction in
\cref{subsec:iid-qlearning},
\[
  \E[w_k\mid\calF_k]=0.
\]
Since $M_{\mu_k}$ and $Q_k-Q^*$ are $\calF_k$-measurable, taking conditional expectation in the last recursion yields
\[
  \E[Q_{k+1}-Q^*\mid\calF_k]
  =
  M_{\mu_k}(Q_k-Q^*).
\]
This completes the proof.

\section{Proof of~\Cref{lem:convex-hull-M}}
\label{app:proof-convex-hull-M}

Fix a stochastic policy \(\mu\).  A deterministic policy \(\pi\in\Theta\) chooses one action at every state.  Define
\[
  c_\pi(\mu):=\prod_{s\in\calS}\mu(\pi(s)\mid s),
  \qquad \pi\in\Theta.
\]
Then \(c_\pi(\mu)\ge0\) and \(\sum_{\pi\in\Theta}c_\pi(\mu)=1\).  Moreover, for every \(s\in\calS\) and \(a\in\calA\),
\[
  \sum_{\pi\in\Theta:\,\pi(s)=a}c_\pi(\mu)=\mu(a\mid s),
\]
which implies
\[
  \Pi^\mu=\sum_{\pi\in\Theta}c_\pi(\mu)\Pi^\pi.
\]
By the linear dependence of \(M_\mu=I-\alpha D+\alpha\gamma DP\Pi^\mu\) on \(\Pi^\mu\),
\[
  M_\mu
  =
  \sum_{\pi\in\Theta}c_\pi(\mu)M_\pi.
\]
Thus \(M_\mu\in\co(\calM_\alpha)\).

It remains to compare the JSRs.  We use the standard convex-hull invariance property of the JSR: for any bounded family of matrices, taking the convex hull does not change its joint spectral radius~\cite[Section~1.2.2.7]{jungers2009joint}; see also the original JSR theory in~\cite{rota1960note}.  Applying this property to the finite family \(\calM_\alpha\) gives
\[
  \jsr(\co(\calM_\alpha))=\jsr(\calM_\alpha),
\]
which completes the proof.

\section{Proof of~\Cref{lem:row-sum-jsr-upper-bound}}
\label{app:proof-row-sum-jsr-upper-bound}

Note that the matrices \(D\), \(P\), and \(\Pi^\mu\) are nonnegative. Moreover, \(1-\alpha d(s,a)\ge0\) because \(0<\alpha<1\) and \(d(s,a)\le 1\). Hence every entry of \(M_\mu\) is nonnegative. The row of \(P\Pi^\mu\) associated with \((s,a)\) is a probability vector, because after taking action \(a\) in state \(s\), the next state is sampled according to \(P(\cdot\mid s,a)\) and the next action is sampled according to \(\mu(\cdot\mid s')\). Therefore, that row sums to one, and the corresponding row of \(M_\mu\) sums to
\[
  1-\alpha d(s,a)+\alpha\gamma d(s,a)
  =
  1-\alpha d(s,a)(1-\gamma)
  \le
  1-\alpha d_{\min}(1-\gamma)
  =
  \rho_{\mathrm{row}}.
\]
For a nonnegative matrix, the induced infinity norm is the maximum row sum. Thus \(\|M_\mu\|_\infty\le\rho_{\mathrm{row}}\) for every stochastic policy \(\mu\), and in particular for every deterministic policy. Since the JSR of a family is bounded above by any common induced-norm bound on that family, \(\jsr(\calM_\alpha)\le\rho_{\mathrm{row}}\).

\section{Proof of~\Cref{prop:direct-rate}}
\label{app:proof-direct-rate}

Let
\[
  \beta_\varepsilon:=\jsr(\calM_\alpha)+\varepsilon.
\]
By assumption, $\beta_\varepsilon<1$.  A standard extremal-norm consequence of the JSR theory says that, for every number strictly larger than the JSR of a finite matrix family, there exists a norm that contracts all matrices in the family by that number~\cite{rota1960note,jungers2009joint}. Applying this result to the finite family $\calM_\alpha$ gives a norm $p_\varepsilon$ satisfying
\[
  p_\varepsilon(M_\pi x)
  \le
  \beta_\varepsilon p_\varepsilon(x),
  \qquad
  \forall x\in\R^{|\calS|\,|\calA|},
  \forall\pi\in\Theta.
\]

Now take an arbitrary stochastic policy $\mu$.  By \cref{lem:convex-hull-M},
\[
  M_\mu=\sum_{\pi\in\Theta}c_\pi(\mu)M_\pi,
  \qquad
  c_\pi(\mu)\ge0,
  \qquad
  \sum_{\pi\in\Theta}c_\pi(\mu)=1.
\]
Using the triangle inequality and positive homogeneity of the norm,
\[
\begin{aligned}
  p_\varepsilon(M_\mu x)
  =
  p_\varepsilon\left(\sum_{\pi\in\Theta}c_\pi(\mu)M_\pi x\right)
  \le
  \sum_{\pi\in\Theta}c_\pi(\mu)p_\varepsilon(M_\pi x)
  \le
  \sum_{\pi\in\Theta}c_\pi(\mu)\beta_\varepsilon p_\varepsilon(x)
  =
  \beta_\varepsilon p_\varepsilon(x).
\end{aligned}
\]
As a result, the same norm controls every stochastic-policy mode.  Now let \(x_{k+1}=M_{\mu_k}x_k\) be any trajectory of \cref{eq:direct-discrete-switched-system}.  Repeatedly applying the preceding inequality gives
\[
  p_\varepsilon(x_k)
  \le
  \beta_\varepsilon^k p_\varepsilon(x_0),
  \qquad k\ge0.
\]
Because \(\beta_\varepsilon<1\) and \(p_\varepsilon\) is a norm on a finite-dimensional space, this Lyapunov estimate implies \(x_k\to0\) exponentially.  This completes the proof.

\section{JSR-based Lyapunov function construction}
\label{app:jsr-lyapunov-aux}

\begin{lemma}
\label{lem:common-lyapunov-restricted-family}
Let
\[
\calH:=\{A_1,A_2,\ldots,A_M\}\subset\R^{m\times m}.
\]
Fix any \(\varepsilon>0\) such that
\[
\beta_\varepsilon:=\jsr(\calH)+\varepsilon\in(0,1).
\]
For a sequence of switching modes \(\sigma=(\sigma_1,\ldots,\sigma_k)\in\{1,\ldots,M\}^k\), write
\[
A_\sigma:=A_{\sigma_k}\cdots A_{\sigma_1},
\]
and for \(k=0\) interpret \(\{1,\ldots,M\}^0\) as the singleton empty word and \(A_\sigma=I\). For each integer \(t\ge0\), define the piecewise quadratic function
\begin{equation*}
V_\varepsilon^t(x)
:=
\sum_{k=0}^{t}
\beta_\varepsilon^{-2k}
\max_{\sigma\in\{1,2,\ldots,M\}^k}
\|A_\sigma x\|_2^2,
\qquad x\in\R^m.
\end{equation*}
Then the following statements hold.
\begin{enumerate}
\item For every \(t\ge0\),
\begin{equation}\label{eq:generic-finite-recursion}
V_\varepsilon^{t+1}(x)
\ge
\|x\|_2^2
+
\beta_\varepsilon^{-2}
\max_{i\in\{1,\ldots,M\}}
V_\varepsilon^t(A_i x),
\qquad
\forall x\in\R^m.
\end{equation}

\item For every \(t\ge0\), every \(x\in\R^m\), and every \(\lambda\in\R\),
\[
V_\varepsilon^t(\lambda x)=|\lambda|^2V_\varepsilon^t(x),
\]
and
\[
V_\varepsilon^t(x)\le V_\varepsilon^{t+1}(x).
\]

\item There exists a constant \(C_\varepsilon>0\) such that
\begin{equation}\label{eq:generic-jsr-equivalence-finite}
\|x\|_2^2
\le
V_\varepsilon^t(x)
\le
C_\varepsilon\|x\|_2^2,
\qquad
\forall x\in\R^m,\quad \forall t\ge0.
\end{equation}

\item For every \(x\in\R^m\), the limit
\[
V_\varepsilon^\infty(x):=\lim_{t\to\infty}V_\varepsilon^t(x)
\]
exists and is finite.  Moreover,
\begin{equation}\label{eq:generic-jsr-equivalence-infty}
\|x\|_2^2
\le
V_\varepsilon^\infty(x)
\le
C_\varepsilon\|x\|_2^2,
\qquad
\forall x\in\R^m.
\end{equation}

\item The function
\[
p_\varepsilon(x):=\sqrt{V_\varepsilon^\infty(x)}
\]
is a norm on \(\R^m\).

\item The function \(V_\varepsilon^\infty\) satisfies the stronger Lyapunov inequality
\begin{equation*}
V_\varepsilon^\infty(A_i x)
\le
\beta_\varepsilon^2
\left(
V_\varepsilon^\infty(x)-\|x\|_2^2
\right)
\le
\beta_\varepsilon^2V_\varepsilon^\infty(x),
\qquad
\forall x\in\R^m,\quad
\forall i\in\{1,\ldots,M\}.
\end{equation*}
Equivalently,
\[
p_\varepsilon(A_i x)
\le
\beta_\varepsilon p_\varepsilon(x),
\qquad
\forall x\in\R^m,\quad
\forall i\in\{1,\ldots,M\}.
\]

\item Consequently,
\[
\|A_i\|_{p_\varepsilon}
\le
\beta_\varepsilon,
\qquad
\forall i\in\{1,\ldots,M\},
\]
where \(\|A_i\|_{p_\varepsilon}\) is the induced matrix norm generated by \(p_\varepsilon\),
\[
\|A_i\|_{p_\varepsilon}
:=
\sup_{x\in\R^m,\ x\ne0}\frac{p_\varepsilon(A_i x)}{p_\varepsilon(x)}.
\]
Therefore, the induced-norm bound yields
\[
\jsr(\calH)\le \beta_\varepsilon.
\]
\end{enumerate}
\end{lemma}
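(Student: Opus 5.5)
The plan is to prove the seven items essentially in order, using only the definition of the JSR together with the uniform product bound from the proof of \cref{lem:jsr-stability}.

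\emph{Item 1.} Write the summand of index $k+1$ in $V_\varepsilon^{t+1}(x)$ as a maximum over words $\sigma=(\sigma',i)$. Here $i$ is the first mode applied and $\sigma'$ ranges over words of length $k$. Then
\[
\max_{|\sigma|=k+1}\|A_\sigma x\|_2^2=\max_i\max_{|\sigma'|=k}\|A_{\sigma'}A_ix\|_2^2 .
\]
Reindexing gives the identity
\[
V_\varepsilon^{t+1}(x)=\|x\|_2^2+\beta_\varepsilon^{-2}\sum_{k=0}^t\beta_\varepsilon^{-2k}\max_i\max_{\sigma'}\|A_{\sigma'}A_ix\|_2^2 .
\]
A sum of maxima dominates the maximum of the sums, so this is $\ge \|x\|_2^2+\beta_\varepsilon^{-2}\max_i V_\varepsilon^t(A_ix)$.

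\emph{Item 2.} Homogeneity holds because each summand is a squared norm of a linear image. Monotonicity in $t$ holds because the summands are nonnegative.

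\emph{Item 3.} The lower bound comes from the $k=0$ term. For the upper bound, pick $\beta'$ with $\jsr(\calH)<\beta'<\beta_\varepsilon$. The argument of \cref{lem:jsr-stability}, applied with $\beta'$, gives $C'$ such that $\|A_\sigma\|_2\le C'\beta'^k$ for all words of length $k$. Then
\[
V_\varepsilon^t(x)\le C'^2\sum_{k\ge0}(\beta'/\beta_\varepsilon)^{2k}\|x\|_2^2,
\]
so we may take $C_\varepsilon:=C'^2/(1-(\beta'/\beta_\varepsilon)^2)$, which is independent of $t$. Choosing $\beta'$ strictly between the JSR and $\beta_\varepsilon$ is the point that needs care. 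Using $\beta_\varepsilon$ itself would give only a divergent geometric series.

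\emph{Item 4.} The sequence $V_\varepsilon^t(x)$ is monotone and bounded, so the limit exists. The bounds pass to the limit.

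\emph{Item 5.} I expect this to be the main obstacle: the triangle inequality for $p_\varepsilon$. Define $n_k(x):=\max_{|\sigma|=k}\|A_\sigma x\|_2$. Each $n_k$ is a seminorm, being a maximum of seminorms $x\mapsto\|A_\sigma x\|_2$, and $n_0$ is a norm. Then
\[
p_\varepsilon(x)=\Big(\sum_k\beta_\varepsilon^{-2k}n_k(x)^2\Big)^{1/2}
\]
is the weighted $\ell_2$ norm of the sequence $(\beta_\varepsilon^{-k}n_k(x))_k$. This sequence is subadditive coordinatewise in $x$. Combining that with monotonicity and Minkowski's inequality in $\ell_2$ gives $p_\varepsilon(x+y)\le p_\varepsilon(x)+p_\varepsilon(y)$. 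First argue this for finite $t$, then pass to the limit. Absolute homogeneity comes from item 2, and definiteness comes from the lower bound in item 4.

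\emph{Item 6.} Let $t\to\infty$ in item 1, applied to a fixed $i$. This gives $V_\varepsilon^\infty(x)\ge\|x\|_2^2+\beta_\varepsilon^{-2}V_\varepsilon^\infty(A_ix)$. Rearranging yields the stated inequality. Dropping the nonnegative term $\|x\|_2^2$ and taking square roots gives the norm form.

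\emph{Item 7.} The bound $\|A_i\|_{p_\varepsilon}\le\beta_\varepsilon$ follows immediately from item 6. Submultiplicativity of the induced norm then gives $\|A_\sigma\|_{p_\varepsilon}\le\beta_\varepsilon^k$ for every word of length $k$. Since the JSR does not depend on the norm (\cref{def:jsr}), we conclude $\jsr(\calH)\le\beta_\varepsilon$.
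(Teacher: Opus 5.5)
Your proposal is correct and follows the paper's proof essentially item by item. The shared steps are: the first-applied-index decomposition with ``sum of maxima dominates maximum of sums'' for item 1; an intermediate rate strictly between \(\jsr(\calH)\) and \(\beta_\varepsilon\), which gives a \(t\)-independent geometric bound in item 3; the seminorm-plus-Minkowski argument at finite \(t\), followed by a limit, for item 5; and passing to the limit in the item-1 inequality for item 6. The only cosmetic differences are that you cite the product-bound argument of \cref{lem:jsr-stability} instead of rederiving the constant, and you let \(t\to\infty\) directly in item 1 rather than first rearranging at finite \(t\).
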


\begin{proof}
We prove the statements one by one.

\medskip
\noindent
\textit{Proof of 1).}
By definition,
\[
V_\varepsilon^{t+1}(x)
=
\sum_{k=0}^{t+1}
\beta_\varepsilon^{-2k}
\max_{\sigma\in\{1,\ldots,M\}^k}
\|A_\sigma x\|_2^2.
\]
The \(k=0\) term is \(\|x\|_2^2\).  For \(k\ge1\), write \(k=j+1\).  A word of length \(j+1\) may be described by its first applied index \(i\in\{1,\ldots,M\}\) and the remaining word \(\tau\in\{1,\ldots,M\}^j\).  Therefore, decomposing by the first applied index gives
\begin{align*}
V_\varepsilon^{t+1}(x)
&=
\|x\|_2^2
+
\sum_{j=0}^{t}
\beta_\varepsilon^{-2(j+1)}
\max_{i\in\{1,\ldots,M\}}
\max_{\tau\in\{1,\ldots,M\}^j}
\|A_\tau A_i x\|_2^2\\
&=
\|x\|_2^2
+
\beta_\varepsilon^{-2}
\sum_{j=0}^{t}
\beta_\varepsilon^{-2j}
\max_{i\in\{1,\ldots,M\}}
\max_{\tau\in\{1,\ldots,M\}^j}
\|A_\tau A_i x\|_2^2.
\end{align*}
Since the summands are nonnegative, the sum of the maxima is at least the maximum of the sums.  Hence
\begin{align*}
V_\varepsilon^{t+1}(x)
&\ge
\|x\|_2^2
+
\beta_\varepsilon^{-2}
\max_{i\in\{1,\ldots,M\}}
\sum_{j=0}^{t}
\beta_\varepsilon^{-2j}
\max_{\tau\in\{1,\ldots,M\}^j}
\|A_\tau A_i x\|_2^2\\
&=
\|x\|_2^2
+
\beta_\varepsilon^{-2}
\max_{i\in\{1,\ldots,M\}}
V_\varepsilon^t(A_i x).
\end{align*}
This proves \cref{eq:generic-finite-recursion}.

\medskip
\noindent
\textit{Proof of 2).}
Absolute homogeneity follows from the homogeneity of the Euclidean norm:
\[
V_\varepsilon^t(\lambda x)
=
\sum_{k=0}^{t}
\beta_\varepsilon^{-2k}
\max_{\sigma\in\{1,\ldots,M\}^k}
\|A_\sigma(\lambda x)\|_2^2
=
|\lambda|^2V_\varepsilon^t(x).
\]
The monotonicity \(V_\varepsilon^t(x)\le V_\varepsilon^{t+1}(x)\) holds because \(V_\varepsilon^{t+1}\) contains all terms of \(V_\varepsilon^t\) plus one additional nonnegative term.

\medskip
\noindent
\textit{Proof of 3).}
The lower bound is immediate from the \(k=0\) term:
\[
V_\varepsilon^t(x)\ge\|x\|_2^2.
\]
For the upper bound, choose \(\eta\) such that
\[
\jsr(\calH)<\eta<\beta_\varepsilon.
\]
By the definition of the JSR, there exists an integer \(K\ge1\) such that
\[
\max_{\sigma\in\{1,\ldots,M\}^k}
\|A_\sigma\|_2^{1/k}
\le
\eta,
\qquad
\forall k\ge K.
\]
Thus, for all \(k\ge K\),
\[
\max_{\sigma\in\{1,\ldots,M\}^k}
\|A_\sigma\|_2
\le
\eta^k.
\]
Now define
\[
C_0:=
\max\left\{1,\,
\max_{0\le k\le K-1}
\eta^{-k}
\max_{\sigma\in\{1,\ldots,M\}^k}
\|A_\sigma\|_2
\right\}.
\]
Then, for every \(k\ge0\),
\[
\max_{\sigma\in\{1,\ldots,M\}^k}
\|A_\sigma\|_2
\le
C_0\eta^k.
\]
Therefore, the product bound gives the estimate
\begin{align*}
V_\varepsilon^t(x)
&=
\sum_{k=0}^{t}
\beta_\varepsilon^{-2k}
\max_{\sigma\in\{1,\ldots,M\}^k}
\|A_\sigma x\|_2^2\\
&\le
\sum_{k=0}^{t}
\beta_\varepsilon^{-2k}
\left(
\max_{\sigma\in\{1,\ldots,M\}^k}
\|A_\sigma\|_2
\right)^2
\|x\|_2^2\\
&\le
C_0^2
\sum_{k=0}^{t}
\left(\frac{\eta}{\beta_\varepsilon}\right)^{2k}
\|x\|_2^2\\
&\le
C_0^2
\sum_{k=0}^{\infty}
\left(\frac{\eta}{\beta_\varepsilon}\right)^{2k}
\|x\|_2^2.
\end{align*}
Since \(\eta/\beta_\varepsilon<1\), the geometric series converges.  Setting
\[
C_\varepsilon:=
\frac{C_0^2}{1-(\eta/\beta_\varepsilon)^2}
\]
gives \cref{eq:generic-jsr-equivalence-finite}.

\medskip
\noindent
\textit{Proof of 4).}
By 2), the sequence \(V_\varepsilon^t(x)\) is nondecreasing in \(t\).  By 3), it is bounded above by \(C_\varepsilon\|x\|_2^2\).  Hence
\[
V_\varepsilon^\infty(x):=\lim_{t\to\infty}V_\varepsilon^t(x)
\]
exists and is finite.  Passing to the limit in the bounds of 3) proves \cref{eq:generic-jsr-equivalence-infty}.

\medskip
\noindent
\textit{Proof of 5).}
For each \(k\ge1\), define
\[
\nu_k(x):=
\beta_\varepsilon^{-k}
\max_{\sigma\in\{1,\ldots,M\}^k}
\|A_\sigma x\|_2,
\]
and define \(\nu_0(x):=\|x\|_2\).  Each \(\nu_k\) is a seminorm because it is the pointwise maximum of seminorms, and \(\nu_0\) is a norm.  For finite \(t\),
\[
V_\varepsilon^t(x)=\sum_{k=0}^{t}\nu_k(x)^2,
\qquad
p_\varepsilon^t(x):=\sqrt{V_\varepsilon^t(x)}
=
\left(\sum_{k=0}^{t}\nu_k(x)^2\right)^{1/2}.
\]
The function \(p_\varepsilon^t\) is a norm.  Positivity follows from the \(\nu_0\) term, absolute homogeneity follows from the homogeneity of each \(\nu_k\), and the triangle inequality follows from Minkowski's inequality applied to the vector
\[
(\nu_0(x),\nu_1(x),\ldots,\nu_t(x)).
\]
Now
\[
p_\varepsilon(x)=\sqrt{V_\varepsilon^\infty(x)}
=
\lim_{t\to\infty}p_\varepsilon^t(x),
\]
and the limit is finite by 4).  Since
\[
p_\varepsilon^t(x+y)
\le
p_\varepsilon^t(x)+p_\varepsilon^t(y)
\]
for every \(t\), letting \(t\to\infty\) gives
\[
p_\varepsilon(x+y)
\le
p_\varepsilon(x)+p_\varepsilon(y).
\]
Absolute homogeneity is inherited from 2), and positive definiteness follows from
\[
p_\varepsilon(x)^2=V_\varepsilon^\infty(x)\ge\|x\|_2^2.
\]
Thus \(p_\varepsilon\) is a norm.

\medskip
\noindent
\textit{Proof of 6).}
Using 1),
\[
V_\varepsilon^{t+1}(x)
\ge
\|x\|_2^2
+
\beta_\varepsilon^{-2}
\max_{i\in\{1,\ldots,M\}} V_\varepsilon^t(A_i x).
\]
Therefore, rearranging the finite-horizon inequality gives
\[
\max_{i\in\{1,\ldots,M\}} V_\varepsilon^t(A_i x)
\le
\beta_\varepsilon^2
\left(V_\varepsilon^{t+1}(x)-\|x\|_2^2\right).
\]
Fixing \(i\), we have
\[
V_\varepsilon^t(A_i x)
\le
\beta_\varepsilon^2
\left(V_\varepsilon^{t+1}(x)-\|x\|_2^2\right).
\]
Letting \(t\to\infty\) and using monotone convergence of both sides gives
\[
V_\varepsilon^\infty(A_i x)
\le
\beta_\varepsilon^2
\left(V_\varepsilon^\infty(x)-\|x\|_2^2\right)
\le
\beta_\varepsilon^2V_\varepsilon^\infty(x).
\]
Taking square roots yields
\[
p_\varepsilon(A_i x)
\le
\beta_\varepsilon p_\varepsilon(x).
\]

\medskip
\noindent
\textit{Proof of 7).}
From 6),
\[
\|A_i\|_{p_\varepsilon}
=
\sup_{x\in\R^m,\ x\ne0}\frac{p_\varepsilon(A_i x)}{p_\varepsilon(x)}
\le
\beta_\varepsilon.
\]
Hence, for any word \(\sigma=(\sigma_1,\ldots,\sigma_k)\),
\[
\|A_\sigma\|_{p_\varepsilon}
\le
\prod_{j=1}^{k}\|A_{\sigma_j}\|_{p_\varepsilon}
\le
\beta_\varepsilon^k.
\]
Taking the maximum over all words of length \(k\), then the \(k\)th root, and finally the limit as \(k\to\infty\), gives
\[
\jsr(\calH)\le\beta_\varepsilon.
\]

This completes the proof.
\end{proof}

For the direct Q-learning family
\[
\calM_\alpha
=
\{M_\pi:\pi\in\Theta\},
\qquad
M_\pi=I-\alpha D+\alpha\gamma DP\Pi^\pi,
\]
fix \(\varepsilon>0\) such that
\[
\beta_\varepsilon:=\jsr(\calM_\alpha)+\varepsilon<1.
\]
Since \(\Theta\) is finite, enumerate \(\calM_\alpha\) as
\(\{M_1,\ldots,M_M\}\), where \(M=|\Theta|\).  For each integer
\(t\ge0\), define
\begin{equation}\label{eq:Veps-t-direct}
V_\varepsilon^t(x)
:=
\sum_{\ell=0}^{t}
\beta_\varepsilon^{-2\ell}
\max_{\pi_1,\ldots,\pi_\ell\in\Theta}
\left\|
M_{\pi_\ell}\cdots M_{\pi_1}x
\right\|_2^2,
\qquad x\in\R^{|\calS|\,|\calA|},
\end{equation}
where the term for \(\ell=0\) uses the empty product and is equal to
\(\|x\|_2^2\).  Define
\begin{equation}\label{eq:Veps-infty-direct}
V_\varepsilon^\infty(x)
:=
\lim_{t\to\infty}V_\varepsilon^t(x).
\end{equation}
By \cref{lem:common-lyapunov-restricted-family}, the limit exists and there
exists a constant \(C_\varepsilon\ge1\) such that
\begin{equation}\label{eq:Veps-equivalence}
\|x\|_2^2
\le
V_\varepsilon^\infty(x)
\le
C_\varepsilon\|x\|_2^2,
\qquad
\forall x\in\R^{|\calS|\,|\calA|}.
\end{equation}

\begin{lemma}
\label{lem:Veps-existence}\label{lem:Veps-jsr-direct}
With the definitions in \cref{eq:Veps-t-direct,eq:Veps-infty-direct}, the
norm-equivalence statement in \cref{eq:Veps-equivalence} holds.  In
particular,
\[
  p_\varepsilon(x):=\sqrt{V_\varepsilon^\infty(x)}
\]
is a norm on \(\R^{|\calS|\,|\calA|}\).  Moreover, for every deterministic policy \(\pi\in\Theta\),
\begin{equation}\label{eq:Veps-recursion-inequality}
V_\varepsilon^\infty(M_\pi x)
\le
\beta_\varepsilon^2
\left(
V_\varepsilon^\infty(x)-\|x\|_2^2
\right),
\qquad
\forall x\in\R^{|\calS|\,|\calA|}.
\end{equation}
Equivalently,
\[
p_\varepsilon(M_\pi x)
\le
\beta_\varepsilon p_\varepsilon(x),
\qquad
\forall x\in\R^{|\calS|\,|\calA|},\quad
\forall \pi\in\Theta.
\]
\end{lemma}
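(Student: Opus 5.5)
The plan is to obtain the statement as a direct specialization of \cref{lem:common-lyapunov-restricted-family}. Take $m:=|\calS|\,|\calA|$ and $\calH:=\calM_\alpha$. Since $\Theta$ is finite, enumerate $\calM_\alpha$ as $\{M_1,\ldots,M_M\}$ with $M=|\Theta|$, so that each $M_i$ equals $M_{\pi}$ for some $\pi\in\Theta$.

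The first thing to check is that the hypotheses of that lemma hold. We need $\beta_\varepsilon=\jsr(\calM_\alpha)+\varepsilon\in(0,1)$. The upper bound $\beta_\varepsilon<1$ is assumed. Positivity follows from $\varepsilon>0$ together with $\jsr\ge0$.

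Next, we match the definitions. Words $\sigma\in\{1,\ldots,M\}^\ell$ correspond to tuples $(\pi_1,\ldots,\pi_\ell)\in\Theta^\ell$, with $A_\sigma=M_{\pi_\ell}\cdots M_{\pi_1}$. Hence the maxima over words in the generic lemma and the maxima over policy tuples in \cref{eq:Veps-t-direct} coincide term by term. This gives that the function $V_\varepsilon^t$ of \cref{eq:Veps-t-direct} is exactly the generic $V_\varepsilon^t$, and therefore $V_\varepsilon^\infty$ also agrees. If the enumeration repeats a matrix because two policies give the same $M_\pi$, the maxima are unchanged, so this causes no issue.

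With the identification in place, each claim is read off from the generic lemma:
\begin{enumerate}
\item Item 4 gives existence of the limit and the two-sided estimate \cref{eq:Veps-equivalence} with the constant $C_\varepsilon$. Because $\|x\|_2^2\le V_\varepsilon^\infty(x)\le C_\varepsilon\|x\|_2^2$ with $x\neq0$, we get $C_\varepsilon\ge1$ automatically.
\item Item 5 gives that $p_\varepsilon$ is a norm.
\item Item 6 gives the strengthened inequality \cref{eq:Veps-recursion-inequality} for each $A_i=M_\pi$. Dropping the nonnegative term $\|x\|_2^2$ and taking square roots yields $p_\varepsilon(M_\pi x)\le\beta_\varepsilon p_\varepsilon(x)$.
\end{enumerate}

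The only point needing any care is the bookkeeping identification between words and policy tuples. Nothing in the argument is a genuine obstacle, since all the analytic work is already contained in \cref{lem:common-lyapunov-restricted-family}.
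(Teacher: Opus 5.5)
Your proposal is correct and matches the paper's proof: both specialize \cref{lem:common-lyapunov-restricted-family} to the enumerated family $\calM_\alpha$. The only cosmetic difference is that the paper re-derives \cref{eq:Veps-recursion-inequality} from item 1 by letting $t\to\infty$, rather than citing item 6 directly as you do.
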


\begin{proof}
For the direct-family statement, apply the generic construction to the enumerated family
\(\calM_\alpha=\{M_1,\ldots,M_M\}\).  The finite-horizon inequality in item 1 gives
\[
V_\varepsilon^{t+1}(x)
\ge
\|x\|_2^2
+
\beta_\varepsilon^{-2}
\max_{\pi\in\Theta}V_\varepsilon^t(M_\pi x).
\]
Therefore, for each fixed \(\pi\),
\[
V_\varepsilon^t(M_\pi x)
\le
\beta_\varepsilon^2
\left(V_\varepsilon^{t+1}(x)-\|x\|_2^2\right).
\]
Letting \(t\to\infty\) gives \cref{eq:Veps-recursion-inequality}.  Taking square roots gives the stated norm contraction.

This completes the proof.
\end{proof}

\begin{lemma}
\label{lem:strong-convex-hull-extension}
For every stochastic policy \(\mu\),
\begin{equation}\label{eq:strong-convex-hull-extension}
V_\varepsilon^\infty(M_\mu x)
\le
\beta_\varepsilon^2
\left(
V_\varepsilon^\infty(x)-\|x\|_2^2
\right),
\qquad
\forall x\in\R^{|\calS|\,|\calA|}.
\end{equation}
\end{lemma}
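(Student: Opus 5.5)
The plan is to reduce the stochastic-policy statement to the deterministic-policy inequality \cref{eq:Veps-recursion-inequality} of \cref{lem:Veps-jsr-direct}, using convexity of \(V_\varepsilon^\infty\). The key point is that the right-hand side of \cref{eq:Veps-recursion-inequality}, namely \(\beta_\varepsilon^2(V_\varepsilon^\infty(x)-\|x\|_2^2)\), does not depend on \(\pi\). So once we show that \(V_\varepsilon^\infty(M_\mu x)\) is at most a convex combination of the values \(V_\varepsilon^\infty(M_\pi x)\), the bound follows immediately.

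First, fix a stochastic policy \(\mu\) and invoke the decomposition from the proof of \cref{lem:convex-hull-M}:
\[
M_\mu=\sum_{\pi\in\Theta}c_\pi(\mu)M_\pi,
\qquad
c_\pi(\mu)\ge0,
\qquad
\sum_{\pi\in\Theta}c_\pi(\mu)=1,
\]
so that \(M_\mu x=\sum_\pi c_\pi(\mu)M_\pi x\).

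Second, use that \(p_\varepsilon=\sqrt{V_\varepsilon^\infty}\) is a norm, by item 5 of \cref{lem:common-lyapunov-restricted-family} (restated in \cref{lem:Veps-jsr-direct}). Two steps then give the convexity bound:
\begin{itemize}
\item The triangle inequality and homogeneity give \(p_\varepsilon(M_\mu x)\le\sum_\pi c_\pi(\mu)\,p_\varepsilon(M_\pi x)\).
\item Squaring and applying Jensen's inequality (equivalently Cauchy--Schwarz with weights \(c_\pi(\mu)\)) to the convex map \(u\mapsto u^2\) gives
\[
V_\varepsilon^\infty(M_\mu x)\le\Bigl(\sum_\pi c_\pi(\mu)p_\varepsilon(M_\pi x)\Bigr)^2\le\sum_\pi c_\pi(\mu)V_\varepsilon^\infty(M_\pi x).
\]
\end{itemize}

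Third, bound each term by \cref{eq:Veps-recursion-inequality}, namely \(V_\varepsilon^\infty(M_\pi x)\le\beta_\varepsilon^2(V_\varepsilon^\infty(x)-\|x\|_2^2)\). Summing with weights that add to one yields \cref{eq:strong-convex-hull-extension}.

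There is no serious obstacle here. The only step needing care is the passage from the norm inequality to the squared quantity. One must not take square roots of the strong inequality \(\beta_\varepsilon^2(V-\|x\|^2)\) first. Instead, convexity of \(V_\varepsilon^\infty\) itself must be applied, as above, so that the subtracted \(\|x\|_2^2\) term is preserved intact.
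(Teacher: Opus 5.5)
Your proof is correct and follows the paper's route. You write $M_\mu=\sum_{\pi}c_\pi(\mu)M_\pi$, use convexity of $V_\varepsilon^\infty$ to get $V_\varepsilon^\infty(M_\mu x)\le\sum_\pi c_\pi(\mu)V_\varepsilon^\infty(M_\pi x)$, and then apply \cref{eq:Veps-recursion-inequality}. The only difference is where convexity comes from: you derive it from $p_\varepsilon$ being a norm together with convexity of $u\mapsto u^2$, while the paper notes that each $V_\varepsilon^t$ is a nonnegative sum of maxima of convex quadratics and passes to the supremum over $t$.

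Your closing caution is unnecessary, though. Since $\beta_\varepsilon\sqrt{V_\varepsilon^\infty(x)-\|x\|_2^2}$ does not depend on $\pi$, you could take square roots first, apply the triangle inequality for $p_\varepsilon$, and square back, and you would get the same inequality.
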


\begin{proof}
For finite \(t\), each summand in \cref{eq:Veps-t-direct} is the maximum of convex quadratic functions, and a nonnegative sum of convex functions is convex.  Hence \(V_\varepsilon^t\) is convex for every \(t\).  Since
\[
V_\varepsilon^\infty(x)=\sup_{t\ge0}V_\varepsilon^t(x),
\]
the function \(V_\varepsilon^\infty\) is convex as well.
Next, fix a stochastic policy \(\mu\). By \cref{lem:convex-hull-M}, we have
\[
M_\mu=\sum_{\pi\in\Theta}c_\pi(\mu)M_\pi,
\qquad
c_\pi(\mu)\ge0,
\qquad
\sum_{\pi\in\Theta}c_\pi(\mu)=1.
\]
Applying Jensen's inequality to the convex function \(V_\varepsilon^\infty\), we have
\begin{align*}
V_\varepsilon^\infty(M_\mu x)
=
V_\varepsilon^\infty\left(
\sum_{\pi\in\Theta}c_\pi(\mu)M_\pi x
\right)
\le
\sum_{\pi\in\Theta}c_\pi(\mu)V_\varepsilon^\infty(M_\pi x)
\le
\max_{\pi\in\Theta}V_\varepsilon^\infty(M_\pi x).
\end{align*}
Using~\cref{eq:Veps-recursion-inequality},
\[
\max_{\pi\in\Theta}V_\varepsilon^\infty(M_\pi x)
\le
\beta_\varepsilon^2
\left(
V_\varepsilon^\infty(x)-\|x\|_2^2
\right),
\]
which proves \cref{eq:strong-convex-hull-extension}.
\end{proof}

\section{Proof of~\Cref{thm:finite-time-Veps}}
\label{app:proof-finite-time-Veps}

By \cref{thm:direct-representation}, we have
\[
  (Q_{k+1}-Q^*)=M_{\mu_k}(Q_k-Q^*)+\alpha w_k.
\]
Because \(Q_k-Q^*\) and \(\mu_k\) are \(\calF_k\)-measurable, the vector
\(M_{\mu_k}(Q_k-Q^*)\) is also \(\calF_k\)-measurable.  By
\cref{lem:Veps-existence},
\[
  p_\varepsilon(x):=\sqrt{V_\varepsilon^\infty(x)}
\]
is a norm.  The argument below is norm-based.  It does not use the martingale-difference property to cancel a cross term.  It only uses the conditional second-moment estimate for $w_k$, while the martingale-difference property identifies $w_k$ as the usual Q-learning noise.  Therefore the triangle inequality gives
\[
\begin{aligned}
  p_\varepsilon(Q_{k+1}-Q^*)
  =
  p_\varepsilon(M_{\mu_k}(Q_k-Q^*)+\alpha w_k)
  \le
  p_\varepsilon(M_{\mu_k}(Q_k-Q^*))+\alpha p_\varepsilon(w_k).
\end{aligned}
\]
Squaring both sides yields
\begin{align}
V_\varepsilon^\infty(Q_{k+1}-Q^*)
\le
V_\varepsilon^\infty(M_{\mu_k}(Q_k-Q^*))
+
2\alpha p_\varepsilon(M_{\mu_k}(Q_k-Q^*))p_\varepsilon(w_k)
+
\alpha^2V_\varepsilon^\infty(w_k).\label{eq:4}
\end{align}

We now bound the three terms on the right. First, since \(\mu_k\) is a stochastic policy, \cref{lem:strong-convex-hull-extension} gives
\begin{align}
V_\varepsilon^\infty(M_{\mu_k}(Q_k-Q^*))
\le
\beta_\varepsilon^2
\left(V_\varepsilon^\infty(Q_k-Q^*)-\|Q_k-Q^*\|_2^2\right),\label{eq:7}
\end{align}
and hence
\begin{align}
p_\varepsilon(M_{\mu_k}(Q_k-Q^*))
\le
\beta_\varepsilon
\sqrt{V_\varepsilon^\infty(Q_k-Q^*)-\|Q_k-Q^*\|_2^2}.\label{eq:2}
\end{align}
Second, the norm equivalence in \cref{eq:Veps-equivalence} gives
\begin{align}
V_\varepsilon^\infty(Q_k-Q^*)-\|Q_k-Q^*\|_2^2
\le
(C_\varepsilon-1)\|Q_k-Q^*\|_2^2.\label{eq:3}
\end{align}
Combining~\cref{eq:2} and~\cref{eq:3} gives
\begin{align}
  p_\varepsilon(M_{\mu_k}(Q_k-Q^*))
  \le
  \beta_\varepsilon\sqrt{C_\varepsilon-1}\,\|Q_k-Q^*\|_2.\label{eq:8}
\end{align}
Moreover, taking conditional expectation on $p_\varepsilon(w_k)$ with respect to \(\calF_k\) leads to
\begin{align}
\E[p_\varepsilon(w_k)\mid\calF_k]
&\le \sqrt{\E[p_\varepsilon(w_k)^2\mid\calF_k]}\nonumber \\
&= \sqrt{\E[V_\varepsilon^\infty(w_k)\mid\calF_k]}\nonumber \\
&\le \sqrt{C_\varepsilon\E[\|w_k\|_2^2\mid\calF_k]}
\le \sqrt{C_\varepsilon W_{\max}},\label{eq:5}
\end{align}
where the first inequality is Jensen's inequality for the square-root function, and the last inequality uses~\cref{lem:bounded-noise}. Similarly,
the norm equivalence in \cref{eq:Veps-equivalence} gives
\[
V_\varepsilon^\infty(w_k)
\le
C_\varepsilon\|w_k\|_2^2
\]
and applying~\cref{lem:bounded-noise} leads to
\begin{align}
  \E[V_\varepsilon^\infty(w_k)\mid\calF_k]
  \le
  C_\varepsilon W_{\max}. \label{eq:6}
\end{align}
Substituting the previous bounds in~\cref{eq:7,eq:8,eq:5,eq:6} into~\cref{eq:4}, taking conditional expectation with respect to \(\calF_k\), and noting that the factor
\(p_\varepsilon(M_{\mu_k}(Q_k-Q^*))\) is \(\calF_k\)-measurable, we have
\begin{align}
\E[V_\varepsilon^\infty(Q_{k+1}-Q^*)\mid\calF_k]
&\le \beta_\varepsilon^2V_\varepsilon^\infty(Q_k-Q^*)
- \beta_\varepsilon^2\|Q_k-Q^*\|_2^2 \nonumber\\
&\quad+ 2\alpha\beta_\varepsilon
\sqrt{C_\varepsilon(C_\varepsilon-1)W_{\max}}\,\|Q_k-Q^*\|_2
+ \alpha^2C_\varepsilon W_{\max}.\label{eq:9}
\end{align}
The mixed term can be handled by the elementary inequality \(2ab\le a^2+b^2\). In particular, with
\[
  a=\beta_\varepsilon\|Q_k-Q^*\|_2,
  \qquad
  b=\alpha\sqrt{C_\varepsilon(C_\varepsilon-1)W_{\max}}.
\]
we get
\begin{align}
2\alpha\beta_\varepsilon
\sqrt{C_\varepsilon(C_\varepsilon-1)W_{\max}}\,\|Q_k-Q^*\|_2
\le \beta_\varepsilon^2\|Q_k-Q^*\|_2^2 + \alpha^2C_\varepsilon(C_\varepsilon-1)W_{\max}.\label{eq:10}
\end{align}
The first term on the right cancels the negative term
\(-\beta_\varepsilon^2\|Q_k-Q^*\|_2^2\). Hence, combining~\cref{eq:9} and~\cref{eq:10} yields
\[
\E[V_\varepsilon^\infty(Q_{k+1}-Q^*)\mid\calF_k]
\le
\beta_\varepsilon^2V_\varepsilon^\infty(Q_k-Q^*)
+
\alpha^2C_\varepsilon^2W_{\max}.
\]
Taking total expectation gives the scalar recursion
\[
  a_{k+1}\le \beta_\varepsilon^2a_k+
  \alpha^2C_\varepsilon^2W_{\max},
  \qquad
  a_k:=\E[V_\varepsilon^\infty(Q_k-Q^*)].
\]
Iterating this recursion yields
\begin{align}
  a_k
  \le
  \beta_\varepsilon^{2k}a_0
  +
  \alpha^2C_\varepsilon^2W_{\max}
  \sum_{j=0}^{k-1}\beta_\varepsilon^{2j}.\label{eq:11}
\end{align}
Since
\begin{align}
  \sum_{j=0}^{k-1}\beta_\varepsilon^{2j}
  =
  \frac{1-\beta_\varepsilon^{2k}}{1-\beta_\varepsilon^2},\label{eq:12}
\end{align}
combining~\cref{eq:11} and~\cref{eq:12}, we obtain
\begin{align}
\E[V_\varepsilon^\infty(Q_k-Q^*)]
\le
\beta_\varepsilon^{2k}V_\varepsilon^\infty(Q_0-Q^*)
+
\frac{\alpha^2C_\varepsilon^2W_{\max}}{1-\beta_\varepsilon^2}
\left(1-\beta_\varepsilon^{2k}\right),\label{eq:13}
\end{align}
which is~\cref{eq:Veps-moment-bound}.

We now convert this Lyapunov bound into the desired sup-norm error bound.  The lower bound in \cref{eq:Veps-equivalence} implies
\[
  \|Q_k-Q^*\|_\infty
  \le
  \|Q_k-Q^*\|_2
  \le
  \sqrt{V_\varepsilon^\infty(Q_k-Q^*)}.
\]
By Jensen's inequality,
\[
  \E[\|Q_k-Q^*\|_\infty]
  \le
  \E[\sqrt{V_\varepsilon^\infty(Q_k-Q^*)}]
  \le
  \sqrt{\E[V_\varepsilon^\infty(Q_k-Q^*)]}.
\]
Combining~\cref{eq:13} with the above inequality and
\begin{align*}
V_\varepsilon^\infty(Q_0-Q^*)\le C_\varepsilon\|Q_0-Q^*\|_2^2,
\end{align*}
we get
\begin{align*}
{\mathbb E}[{\left\| {Q_k-Q^*} \right\|_\infty }] \le& \sqrt {{\mathbb E}[V_\varepsilon ^\infty ({Q_k-Q^*})]} \\
\le& \sqrt {\beta _\varepsilon ^{2k}V_\varepsilon ^\infty ({Q_0-Q^*}) + \frac{{{\alpha ^2}C_\varepsilon ^2{W_{\max }}}}{{1 - \beta _\varepsilon ^2}}(1 - \beta _\varepsilon ^{2k})}\\
\le& \sqrt {{C_\varepsilon }\beta _\varepsilon ^{2k}\left\| {Q_0-Q^*} \right\|_2^2 + \frac{{{\alpha ^2}C_\varepsilon ^2{W_{\max }}}}{{1 - \beta _\varepsilon ^2}}}.
\end{align*}
Finally, applying \(\sqrt{a+b}\le\sqrt a+\sqrt b\) proves \cref{eq:Veps-final-bound-general}.
This completes the overall proof.

\section{Proof of~\Cref{thm:direct-reference-filter-noise-floor}}
\label{app:direct-reference-filter-noise-floor}

The proof uses a fixed-policy reference filter driven by the same martingale noise as the direct Q-learning error recursion.

\begin{lemma}
\label{lem:product-growth-constant-direct}
Fix any \(\varepsilon>0\) such that
\[
  \beta_\varepsilon:=\jsr(\calM_\alpha)+\varepsilon<1.
\]
Define
\[
K_{\beta_\varepsilon}
:=
\sup_{\ell\ge0}
\beta_\varepsilon^{-\ell}
\sup_{M_0,\ldots,M_{\ell-1}\in\co(\calM_\alpha)}
\left\|
M_{\ell-1}\cdots M_0
\right\|_\infty,
\]
where the product of length zero is the identity. Then
\(K_{\beta_\varepsilon}<\infty\). Moreover, every product generated by
matrices in \(\co(\calM_\alpha)\) satisfies
\begin{equation}
\label{eq:Kbeta-product-bound-ref}
  \|M_{\ell-1}\cdots M_0\|_\infty
  \le
  K_{\beta_\varepsilon}\beta_\varepsilon^\ell,
  \qquad \ell\ge0.
\end{equation}
\end{lemma}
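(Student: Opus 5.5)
The plan is to bound every product of matrices from \(\co(\calM_\alpha)\) uniformly by \(C\beta_\varepsilon^\ell\) for some finite \(C\). Once that holds, \(K_{\beta_\varepsilon}\le C<\infty\), and \cref{eq:Kbeta-product-bound-ref} follows directly from the definition of \(K_{\beta_\varepsilon}\) as a supremum: each product of length \(\ell\) is at most \(K_{\beta_\varepsilon}\beta_\varepsilon^\ell\). So the real content is the uniform exponential product bound over the convex hull with the exact rate \(\beta_\varepsilon\).

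To get it, I would use the extremal norm built in \cref{lem:Veps-existence}, namely \(p_\varepsilon=\sqrt{V_\varepsilon^\infty}\). By \cref{lem:strong-convex-hull-extension}, for every stochastic policy \(\mu\) we have \(p_\varepsilon(M_\mu x)\le\beta_\varepsilon p_\varepsilon(x)\). Proposition~\ref{prop:direct-rate} gives the same inequality, by the same Jensen/triangle argument.

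Next I check that every element of \(\co(\calM_\alpha)\) has this form. An arbitrary convex combination \(\sum_\pi\lambda_\pi M_\pi\) equals \(I-\alpha D+\alpha\gamma DP\Pi^{\mu}\) with \(\mu(a\mid s)=\sum_{\pi:\pi(s)=a}\lambda_\pi\), because \(\Pi^\pi\) is linear in the one-hot rows. Alternatively, one can repeat the triangle-inequality argument directly for arbitrary convex weights, which avoids the identification altogether. Either way, every \(M\in\co(\calM_\alpha)\) satisfies \(\|M\|_{p_\varepsilon}\le\beta_\varepsilon\). Submultiplicativity of the induced norm then gives \(\|M_{\ell-1}\cdots M_0\|_{p_\varepsilon}\le\beta_\varepsilon^\ell\).

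Finally, I convert to the \(\infty\)-norm using norm equivalence. From \cref{eq:Veps-equivalence},
\[
\|x\|_2\le p_\varepsilon(x)\le\sqrt{C_\varepsilon}\|x\|_2,
\]
and in dimension \(m=|\calS|\,|\calA|\) we also have \(\|x\|_\infty\le\|x\|_2\le\sqrt m\|x\|_\infty\). For any product \(A\) this yields
\[
\|Ax\|_\infty\le p_\varepsilon(Ax)\le\beta_\varepsilon^\ell p_\varepsilon(x)\le\beta_\varepsilon^\ell\sqrt{C_\varepsilon m}\,\|x\|_\infty,
\]
so \(K_{\beta_\varepsilon}\le\sqrt{mC_\varepsilon}<\infty\). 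There is no serious obstacle here. The only delicate point is making sure the contraction applies to arbitrary convex-hull elements and not just deterministic modes. That is exactly what the Jensen/convexity step supplies, and it is why I would not use the plain JSR definition: that definition yields only a rate \(\eta^\ell\) for \(\eta>\jsr\), with a constant whose uniformity over the compact convex hull would need a separate argument.
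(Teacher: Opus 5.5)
Your proposal is correct and follows essentially the same route as the paper's proof. Both contract every element of $\co(\calM_\alpha)$ by $\beta_\varepsilon$ in the extremal norm $p_\varepsilon=\sqrt{V_\varepsilon^\infty}$ via the triangle inequality, iterate along the product, and convert to $\|\cdot\|_\infty$ by norm equivalence; the paper leaves the constant as $\sup_{\|x\|_\infty=1}p_\varepsilon(x)$, which your bound $\sqrt{|\calS|\,|\calA|\,C_\varepsilon}$ simply makes explicit.
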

\begin{proof}
Let
\[
  p_\varepsilon(x):=\sqrt{V_\varepsilon^\infty(x)}
\]
be the JSR Lyapunov norm from \cref{lem:Veps-existence}. By
\cref{lem:Veps-existence}, for every deterministic policy \(\pi\in\Theta\),
\[
  p_\varepsilon(M_\pi x)\le \beta_\varepsilon p_\varepsilon(x),
  \qquad \forall x\in\R^{|\calS|\,|\calA|}.
\]
Hence, if \(M\in\co(\calM_\alpha)\), write
\(M=\sum_{\pi\in\Theta}c_\pi M_\pi\) with \(c_\pi\ge0\) and
\(\sum_{\pi\in\Theta}c_\pi=1\). Since \(p_\varepsilon\) is a norm,
\[
  p_\varepsilon(Mx)
  \le
  \sum_{\pi\in\Theta}c_\pi p_\varepsilon(M_\pi x)
  \le
  \beta_\varepsilon p_\varepsilon(x).
\]
Therefore, for any product \(M_{\ell-1}\cdots M_0\) with
\(M_i\in\co(\calM_\alpha)\),
\[
  p_\varepsilon(M_{\ell-1}\cdots M_0x)
  \le
  \beta_\varepsilon^\ell p_\varepsilon(x).
\]

Since \(p_\varepsilon\) is a norm on a finite-dimensional space, it is
equivalent to \(\|\cdot\|_\infty\). Define
\[
  C_{\infty,\varepsilon}
  :=
  \sup_{x\in\R^{|\calS|\,|\calA|},\ \|x\|_\infty=1}p_\varepsilon(x)
  <\infty .
\]
Moreover, by \cref{eq:Veps-equivalence},
\[
  \|x\|_\infty\le \|x\|_2\le p_\varepsilon(x).
\]
Thus
\[
\begin{aligned}
  \|M_{\ell-1}\cdots M_0x\|_\infty
  &\le
  p_\varepsilon(M_{\ell-1}\cdots M_0x) \\
  &\le
  \beta_\varepsilon^\ell p_\varepsilon(x) \\
  &\le
  C_{\infty,\varepsilon}\beta_\varepsilon^\ell\|x\|_\infty .
\end{aligned}
\]
Taking the supremum over \(x\ne0\) gives
\[
  \|M_{\ell-1}\cdots M_0\|_\infty
  \le
  C_{\infty,\varepsilon}\beta_\varepsilon^\ell.
\]
Consequently,
\[
  K_{\beta_\varepsilon}
  \le
  C_{\infty,\varepsilon}
  <\infty,
\]
and \cref{eq:Kbeta-product-bound-ref} follows from the definition of
\(K_{\beta_\varepsilon}\).
\end{proof}

We split the proof into three steps.

\emph{Step 1: introduce a fixed-policy reference noise filter.}
Fix an arbitrary stochastic policy \(\bar\mu\) and define
\[
  x_{k+1}=M_{\bar\mu}x_k+\alpha w_k,
  \qquad
  x_0=Q_0-Q^*.
\]
Decompose
\[
  x_k=z_k+y_k,
\]
where
\[
  z_{k+1}=M_{\bar\mu}z_k,
  \qquad
  z_0=Q_0-Q^*,
\]
and
\[
  y_{k+1}=M_{\bar\mu}y_k+\alpha w_k,
  \qquad
  y_0=0.
\]
Since \(M_{\bar\mu}\in\co(\calM_\alpha)\),
\cref{eq:Kbeta-product-bound-ref} gives
\begin{equation}
\label{eq:ref-deterministic-decay}
  \|z_k\|_\infty
  \le
  K_{\beta_\varepsilon}\beta_\varepsilon^k\|Q_0-Q^*\|_\infty.
\end{equation}

We next bound the noise-filtered term \(y_k\) by the row contraction. Let
\[
  Y_k:=\E[y_ky_k^\top].
\]
Since \(y_k\) is \(\calF_k\)-measurable and \(\E[w_k\mid\calF_k]=0\), the
martingale cross terms vanish, and hence
\[
  Y_{k+1}
  =
  M_{\bar\mu}Y_kM_{\bar\mu}^\top+
  \alpha^2\E[w_kw_k^\top].
\]
By \cref{lem:bounded-noise},
\[
  \lambda_{\max}(\E[w_kw_k^\top])
  \le
  \E[\|w_k\|_2^2]
  \le
  W_{\max}.
\]
Unrolling the covariance recursion yields
\[
  Y_k
  =
  \alpha^2
  \sum_{i=0}^{k-1}
  M_{\bar\mu}^i\E[w_{k-1-i}w_{k-1-i}^\top](M_{\bar\mu}^\top)^i.
\]
Therefore,
\[
\begin{aligned}
\lambda_{\max}(Y_k)
&\le
\alpha^2W_{\max}
\sum_{i=0}^{k-1}\|M_{\bar\mu}^i\|_2^2  \\
&\le
\alpha^2W_{\max}|\calS|\,|\calA|
\sum_{i=0}^{k-1}\|M_{\bar\mu}^i\|_\infty^2 \\
&\le
\alpha^2W_{\max}|\calS|\,|\calA|
\sum_{i=0}^{k-1}\rho_{\mathrm{row}}^{2i} \\
&\le
\frac{\alpha^2W_{\max}|\calS|\,|\calA|}{1-\rho_{\mathrm{row}}}
=
\frac{\alpha W_{\max}|\calS|\,|\calA|}{d_{\min}(1-\gamma)}.
\end{aligned}
\]
Since \(Y_k\succeq0\),
\[
  \operatorname{tr}(Y_k)
  \le
  |\calS|\,|\calA|\lambda_{\max}(Y_k)
  \le
  \frac{\alpha W_{\max}|\calS|^2|\calA|^2}{d_{\min}(1-\gamma)}.
\]
Consequently,
\begin{equation}
\label{eq:y-noise-floor-ref}
  \E[\|y_k\|_\infty]
  \le
  \E[\|y_k\|_2]
  \le
  \sqrt{\E[\|y_k\|_2^2]}
  =
  \sqrt{\operatorname{tr}(Y_k)}
  \le
  |\calS|\,|\calA|\sqrt{\frac{\alpha W_{\max}}{d_{\min}(1-\gamma)}}.
\end{equation}

\emph{Step 2: subtract the reference filter from the direct SLS.}
Define
\[
  p_k:=Q_k-Q^*-x_k.
\]
Using the same noise \(w_k\) in both systems gives the pathwise cancellation
\[
\begin{aligned}
  p_{k+1}
  &=Q_{k+1}-Q^*-x_{k+1} \\
  &=M_{\mu_k}(Q_k-Q^*)+\alpha w_k-M_{\bar\mu}x_k-\alpha w_k \\
  &=M_{\mu_k}p_k+(M_{\mu_k}-M_{\bar\mu})x_k.
\end{aligned}
\]
Furthermore,
\[
  M_{\mu_k}-M_{\bar\mu}
  =
  \alpha\gamma DP(\Pi^{\mu_k}-\Pi^{\bar\mu}).
\]
For every row, the \(\ell_1\)-distance between two stochastic action distributions is at most two; hence
\begin{equation}
\label{eq:mode-difference-bound-ref}
  \|M_{\mu_k}-M_{\bar\mu}\|_\infty
  \le
  2\alpha\gamma d_{\max}.
\end{equation}
Moreover, \(p_0=Q_0-Q^*-x_0=0\).

Now split
\[
  p_k=u_k+v_k,
\]
where
\[
  u_{k+1}=M_{\mu_k}u_k+(M_{\mu_k}-M_{\bar\mu})z_k,
  \qquad
  u_0=0,
\]
and
\[
  v_{k+1}=M_{\mu_k}v_k+(M_{\mu_k}-M_{\bar\mu})y_k,
  \qquad
  v_0=0.
\]
The term \(u_k\) is driven by the deterministic reference transient, while \(v_k\) is driven by the reference noise floor.

\emph{Step 3: bound the deterministic and noise-driven residuals.}
Unrolling \(u_k\) gives
\[
  u_k
  =
  \sum_{i=0}^{k-1}
  M_{\mu_{k-1}}\cdots M_{\mu_{i+1}}
  (M_{\mu_i}-M_{\bar\mu})z_i,
\]
with the empty product equal to the identity. Since each
\(M_{\mu_j}\in\co(\calM_\alpha)\),
\cref{eq:Kbeta-product-bound-ref}, \cref{eq:ref-deterministic-decay}, and
\cref{eq:mode-difference-bound-ref} imply
\[
\begin{aligned}
\|u_k\|_\infty
&\le
\sum_{i=0}^{k-1}
K_{\beta_\varepsilon}\beta_\varepsilon^{k-1-i}
\cdot
2\alpha\gamma d_{\max}
\cdot
K_{\beta_\varepsilon}\beta_\varepsilon^i\|Q_0-Q^*\|_\infty \\
&=
2\alpha\gamma d_{\max}K_{\beta_\varepsilon}^2
k\beta_\varepsilon^{k-1}\|Q_0-Q^*\|_\infty.
\end{aligned}
\]
Thus
\begin{equation}
\label{eq:u-bound-ref}
  \E[\|u_k\|_\infty]
  \le
  2\alpha\gamma d_{\max}K_{\beta_\varepsilon}^2
  k\beta_\varepsilon^{k-1}\|Q_0-Q^*\|_\infty.
\end{equation}

For \(v_k\), use the row contraction:
\[
  v_k
  =
  \sum_{i=0}^{k-1}
  M_{\mu_{k-1}}\cdots M_{\mu_{i+1}}
  (M_{\mu_i}-M_{\bar\mu})y_i.
\]
Pathwise,
\[
  \|M_{\mu_{k-1}}\cdots M_{\mu_{i+1}}\|_\infty
  \le
  \rho_{\mathrm{row}}^{k-1-i}.
\]
Using \cref{eq:y-noise-floor-ref,eq:mode-difference-bound-ref}, we obtain
\[
\begin{aligned}
\E[\|v_k\|_\infty]
&\le
\sum_{i=0}^{k-1}
\rho_{\mathrm{row}}^{k-1-i}
\cdot
2\alpha\gamma d_{\max}
\cdot
\E[\|y_i\|_\infty] \\
&\le
2\alpha\gamma d_{\max}|\calS|\,|\calA|\sqrt{\frac{\alpha W_{\max}}{d_{\min}(1-\gamma)}}
\sum_{i=0}^{k-1}\rho_{\mathrm{row}}^{k-1-i} \\
&\le
2\alpha\gamma d_{\max}|\calS|\,|\calA|\sqrt{\frac{\alpha W_{\max}}{d_{\min}(1-\gamma)}}
\frac{1}{1-\rho_{\mathrm{row}}} \\
&=
\frac{2\gamma d_{\max}}{d_{\min}(1-\gamma)}
|\calS|\,|\calA|\sqrt{\frac{\alpha W_{\max}}{d_{\min}(1-\gamma)}}.
\end{aligned}
\]
Hence
\begin{equation}
\label{eq:v-bound-ref}
  \E[\|v_k\|_\infty]
  \le
  \frac{2\gamma d_{\max}}{d_{\min}(1-\gamma)}
  |\calS|\,|\calA|\sqrt{\frac{\alpha W_{\max}}{d_{\min}(1-\gamma)}}.
\end{equation}

Finally, since \(x_k=z_k+y_k\),
\[
  Q_k-Q^*=x_k+p_k=z_k+y_k+u_k+v_k.
\]
Taking expectations and applying the triangle inequality gives
\[
\begin{aligned}
\E[\|Q_k-Q^*\|_\infty]
&\le
\|z_k\|_\infty
+
\E[\|y_k\|_\infty]
+
\E[\|u_k\|_\infty]
+
\E[\|v_k\|_\infty].
\end{aligned}
\]
Substituting \cref{eq:ref-deterministic-decay,eq:y-noise-floor-ref,eq:u-bound-ref,eq:v-bound-ref} into this display yields
\[
\begin{aligned}
\E[\|Q_k-Q^*\|_\infty]
\le\;&
K_{\beta_\varepsilon}\beta_\varepsilon^k\|Q_0-Q^*\|_\infty \\
&+
|\calS|\,|\calA|\sqrt{\frac{\alpha W_{\max}}{d_{\min}(1-\gamma)}} \\
&+
2\alpha\gamma d_{\max}K_{\beta_\varepsilon}^2
k\beta_\varepsilon^{k-1}\|Q_0-Q^*\|_\infty \\
&+
\frac{2\gamma d_{\max}}{d_{\min}(1-\gamma)}
|\calS|\,|\calA|\sqrt{\frac{\alpha W_{\max}}{d_{\min}(1-\gamma)}} \\
=
&\left(
1+
\frac{2\gamma d_{\max}}{d_{\min}(1-\gamma)}
\right)
|\calS|\,|\calA|\sqrt{\frac{\alpha W_{\max}}{d_{\min}(1-\gamma)}} \\
&+
K_{\beta_\varepsilon}\beta_\varepsilon^k\|Q_0-Q^*\|_\infty \\
&+
2\alpha\gamma d_{\max}K_{\beta_\varepsilon}^2
k\beta_\varepsilon^{k-1}\|Q_0-Q^*\|_\infty,
\end{aligned}
\]
which is exactly \cref{eq:direct-reference-filter-bound}.

\section{Markovian observation model}
\label{sec:appendix-rate-overview}\label{sec:extensions-markovian-observations}

This appendix proves \cref{thm:markovian-reference-filter-bound}.  We use the
single-trajectory recursion in \cref{eq:markovian-single-trajectory-update-ref}
and the mixing-time assumption in \cref{assump:markovian-mixing}.  Thus, in this
appendix, \(d\) is the stationary law of \(X_k=(s_k,a_k)\), \(D=\diag(d)\), and
\(d_{\min},d_{\max},\rho_{\mathrm{row}}\) are computed using this stationary
state-action distribution.  The proof first identifies the stationary averaged
recursion, then controls the Markovian fixed-filter noise, and finally repeats
the same pathwise reference-filter decomposition used in
Appendix~\ref{app:direct-reference-filter-noise-floor}.

Recall the single-trajectory update in
\cref{eq:markovian-single-trajectory-update-ref}.  Define the sample update
vector
\[
  \widehat h_k(Q_k)
  :=
  e_{X_k}
  \left(
 r_{k+1}
 +
 \gamma\max_{u\in\calA}Q_k(s_{k+1},u)
 -
 Q_k(X_k)
  \right),
\]
where \(e_{X_k}\) is the coordinate vector associated with
\(X_k=(s_k,a_k)\).  Define the stationary averaged update
\[
  h(Q):=D(F(Q)-Q).
\]
Then the single-trajectory recursion can be written exactly as
\begin{equation}\label{eq:markovian-stationary-noise-definition}
  Q_{k+1}=Q_k+\alpha h(Q_k)+\alpha w_k,
  \qquad
  w_k:=\widehat h_k(Q_k)-h(Q_k).
\end{equation}

The following is the Markovian-observation analogue of
\cref{lem:bounded-noise}.  The second-moment statement in
\cref{lem:bounded-noise} is replaced by a pathwise bound, because the Markovian
noise \(w_k\) is not generally a martingale difference.
\begin{lemma*}
Under \cref{assump:basic,assump:markovian-mixing}, the single-trajectory
iterates satisfy
\[
  \|Q_k\|_\infty
  \le
  \max\left\{\|Q_0\|_\infty,\frac{R_{\max}}{1-\gamma}\right\},
  \qquad \forall k\ge0.
\]
Moreover, for \(W_{\max}\) defined in \cref{eq:Wmax-general},
\[
  \|\widehat h_k(Q_k)\|_2\le \sqrt{W_{\max}},
  \qquad
  \|h(Q_k)\|_2\le \sqrt{W_{\max}},
  \qquad
  \|w_k\|_2\le 2\sqrt{W_{\max}},
\]
and therefore \(\E[\|w_k\|_2^2]\le 4W_{\max}\) for every \(k\ge0\).
\end{lemma*}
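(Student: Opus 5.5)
The plan is to repeat the argument of \cref{lem:bounded-noise}, replacing its conditional second-moment step by pathwise bounds. Throughout, write
\[
B_0:=\max\left\{\|Q_0\|_\infty,\frac{R_{\max}}{1-\gamma}\right\}.
\]
The first claim, $\|Q_k\|_\infty\le B_0$, follows by the same induction. The argument in \cref{lem:bounded-noise} never used independence of the samples. It used only three facts: $|r_{k+1}|\le R_{\max}$, only one coordinate changes per step, and that coordinate becomes a convex combination of its old value and a target. All three hold for the single-trajectory update \cref{eq:markovian-single-trajectory-update-ref} with $s'_k$ replaced by $s_{k+1}$. Assuming $\|Q_k\|_\infty\le B_0$, the target is $Y_k=r_{k+1}+\gamma\max_u Q_k(s_{k+1},u)$. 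It satisfies $|Y_k|\le R_{\max}+\gamma B_0\le B_0$, because $R_{\max}\le(1-\gamma)B_0$. Then $Q_{k+1}(X_k)=(1-\alpha)Q_k(X_k)+\alpha Y_k\in[-B_0,B_0]$ since $\alpha\in(0,1)$, and every other coordinate is unchanged.

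Next, the vector $\widehat h_k(Q_k)$ has a single nonzero entry, namely the temporal-difference term. By the bound just proved, that term has absolute value at most $R_{\max}+\gamma B_0+B_0=\sqrt{W_{\max}}$, which gives $\|\widehat h_k(Q_k)\|_2\le\sqrt{W_{\max}}$.

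For $h(Q_k)=D(F(Q_k)-Q_k)$, I will bound it coordinatewise. Its $(s,a)$ entry is $d(s,a)$ times the quantity
\[
R(s,a)+\gamma\sum_{s'}P(s'\mid s,a)V_{Q_k}(s')-Q_k(s,a),
\]
and this quantity has absolute value at most $\sqrt{W_{\max}}$ by the same estimate. Hence
\[
\|h(Q_k)\|_2^2\le W_{\max}\sum_{(s,a)}d(s,a)^2\le W_{\max}\sum_{(s,a)}d(s,a)=W_{\max},
\]
using $d(s,a)\le1$. Alternatively, $h(Q_k)=\E_{X\sim d}[\,\cdot\,]$ of the sample update, so Jensen's inequality applied to the convex norm gives the same bound.

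Finally, the triangle inequality gives $\|w_k\|_2\le2\sqrt{W_{\max}}$ pathwise, and squaring and taking expectations yields $\E[\|w_k\|_2^2]\le4W_{\max}$.

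I expect no real obstacle. The only point needing care is the bound on $h$. There I must use $\sum d^2\le1$, or equivalently the averaging representation, rather than a crude dimension factor.
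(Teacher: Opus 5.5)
Your proposal is correct and follows the paper's proof essentially step for step: the same induction with $B_0$ and the convex-combination update, the single-nonzero-coordinate bound on $\widehat h_k(Q_k)$, the coordinatewise bound on $h(Q_k)$ combined with $\sum_x d(x)^2\le 1$, and the triangle inequality for $w_k$. The averaging/Jensen alternative you mention for $h$ is also valid, but it is not needed.
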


\begin{proof}
The boundedness proof is the same induction as in \cref{lem:bounded-noise}, but
we spell it out for the single-trajectory recursion.  Let
\[
  B_0:=\max\left\{\|Q_0\|_\infty,\frac{R_{\max}}{1-\gamma}\right\}.
\]
The claim is true at \(k=0\).  Suppose \(\|Q_k\|_\infty\le B_0\).  The sampled
one-step target is
\[
  Y_k:=r_{k+1}+\gamma\max_{u\in\calA}Q_k(s_{k+1},u).
\]
Since \(|r_{k+1}|\le R_{\max}\) and
\(\max_{u\in\calA}|Q_k(s_{k+1},u)|\le B_0\),
\[
  |Y_k|
  \le
  R_{\max}+\gamma B_0
  \le
  (1-\gamma)B_0+\gamma B_0
  =B_0.
\]
Only the coordinate \(X_k=(s_k,a_k)\) is updated, and that coordinate satisfies
\[
  Q_{k+1}(X_k)=(1-\alpha)Q_k(X_k)+\alpha Y_k.
\]
Because \(0<\alpha<1\), this is a convex combination of two values in
\([-B_0,B_0]\).  All other coordinates are unchanged, so
\(\|Q_{k+1}\|_\infty\le B_0\).  This proves the uniform bound by induction.

For the sample-update bound, use the just-proved inequality to get
\[
\begin{aligned}
\left|
 r_{k+1}
 +\gamma\max_{u\in\calA}Q_k(s_{k+1},u)
 -Q_k(X_k)
\right|
&\le R_{\max}+(1+\gamma)B_0 \\
&=\sqrt{W_{\max}}.
\end{aligned}
\]
Since \(\widehat h_k(Q_k)\) has only one nonzero coordinate,
\(\|\widehat h_k(Q_k)\|_2\le\sqrt{W_{\max}}\).  For the averaged update, each
coordinate of \(F(Q_k)-Q_k\) is bounded in absolute value by the same quantity.
Therefore
\[
\begin{aligned}
  \|h(Q_k)\|_2^2
  &=
  \sum_{x\in\calS\times\calA}
  d(x)^2\,|F(Q_k)(x)-Q_k(x)|^2 \\
  &\le
  W_{\max}\sum_{x\in\calS\times\calA}d(x)^2
  \le W_{\max},
\end{aligned}
\]
because \(d\) is a probability distribution.  Finally,
\[
  \|w_k\|_2
  =
  \|\widehat h_k(Q_k)-h(Q_k)\|_2
  \le
  \|\widehat h_k(Q_k)\|_2+\|h(Q_k)\|_2
  \le 2\sqrt{W_{\max}},
\]
which also gives \(\E[\|w_k\|_2^2]\le4W_{\max}\).
\end{proof}

The stationary centering identity below explains why \(h(Q)\) is the correct
averaged drift under the behavior-chain stationary distribution.
\begin{lemma}\label{lem:markovian-stationary-centering}
For every fixed table \(Q\),
\[
  \sum_{x\in\calS\times\calA}
  d(x)\,\E[\widehat h_k(Q)\mid X_k=x]
  =
  h(Q).
\]
Consequently, under the stationary initialization in
\cref{assump:markovian-mixing}, \(\E[\widehat h_k(Q)]=h(Q)\) for every fixed
\(Q\).
\end{lemma}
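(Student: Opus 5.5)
The identity is a direct conditional-expectation computation, done coordinate by coordinate. Fix a table $Q$ and a state-action pair $x=(s,a)$. Conditionally on $X_k=x$, only the coordinate $x$ of $\widehat h_k(Q)$ is nonzero. Under the single-trajectory model, $s_{k+1}\sim P(\cdot\mid s,a)$ and $r_{k+1}=r(s,a,s_{k+1})$. Since $Q$ is fixed and deterministic, it does not depend on the trajectory. Hence
\[
  \E[\widehat h_k(Q)\mid X_k=x]
  =
  e_x\Bigl(\textstyle\sum_{s'}P(s'\mid s,a)\bigl(r(s,a,s')+\gamma\max_{u}Q(s',u)\bigr)-Q(x)\Bigr)
  =
  e_x\bigl(F(Q)(x)-Q(x)\bigr),
\]
using the definitions of $R(s,a)$ and $F(Q)=R+\gamma PV_Q$. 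The point to check is that the transition from $X_k$ to $s_{k+1}$ is exactly $P(\cdot\mid s,a)$, regardless of the past. This is the Markov property of the behavior-induced chain: $s_{k+1}$ is drawn from $P(\cdot\mid s_k,a_k)$ given the entire history.

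Next, I multiply by $d(x)$ and sum over $x$. This gives $\sum_x d(x)e_x(F(Q)(x)-Q(x))=D(F(Q)-Q)=h(Q)$, which is the first claim.

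For the consequence, \cref{assump:markovian-mixing} gives $X_0\sim d$, and $d$ is stationary for $P^b$. So by induction $X_k\sim d$ for every $k$: if $X_k\sim d$, then $X_{k+1}\sim dP^b=d$. The tower property then gives $\E[\widehat h_k(Q)]=\sum_x \Prob(X_k=x)\E[\widehat h_k(Q)\mid X_k=x]=h(Q)$.

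There is no real obstacle here. The only subtlety is stressing that $Q$ must be fixed, not the adaptive $Q_k$. For $Q_k$, which depends on $X_0,\ldots,X_{k-1}$, the identity fails, and that failure is why the Poisson-equation argument is needed later.
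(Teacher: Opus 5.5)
Your proof is correct and follows essentially the same route as the paper: you compute the conditional mean of the single nonzero coordinate as $F(Q)(s,a)-Q(s,a)$, weight by $d$ to obtain $D(F(Q)-Q)=h(Q)$, and then use $X_0\sim d$ together with stationarity of $d$ under $P^b$ to get $X_k\sim d$ for every $k$. One small correction to your closing aside: $Q_k$ also depends on $s_k$ (through $r_k$ and the bootstrap term at step $k-1$), hence on part of $X_k$ itself and not only on $X_0,\ldots,X_{k-1}$, but this does not affect the proof of the lemma.
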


\begin{proof}
Fix \(Q\) and \(x=(s,a)\).  Conditional on \(X_k=x\), the only nonzero
coordinate of \(\widehat h_k(Q)\) is the coordinate \((s,a)\), and its conditional
mean is
\[
  \sum_{s'\in\calS}P(s'\mid s,a)
  \left(
  r(s,a,s')+\gamma\max_{u\in\calA}Q(s',u)-Q(s,a)
  \right)
  =
  F(Q)(s,a)-Q(s,a).
\]
Multiplying this coordinate by \(d(s,a)\) and summing over \((s,a)\) gives
\(D(F(Q)-Q)=h(Q)\).  Since \(X_0\sim d\) and \(d\) is stationary for \(P^b\),
\(X_k\sim d\) for every \(k\), which proves the final statement.
\end{proof}

The noise \(w_k\) in \cref{eq:markovian-stationary-noise-definition} is not, in
general, a martingale difference with respect to the trajectory filtration,
because the Markovian coordinate \(X_k\) is correlated with the past and with
\(Q_k\).  The averaged drift, however, has exactly the same direct SLS form
as in the i.i.d. case.  Since \(h(Q^*)=0\),
\cref{lem:stochastic-policy-linearization} gives, for each \(k\), an adapted
stochastic policy \(\mu_k\) such that
\[
  F(Q_k)-F(Q^*)
  =
  \gamma P\Pi^{\mu_k}(Q_k-Q^*).
\]
Therefore
\[
  h(Q_k)-h(Q^*)
  =
  D(\gamma P\Pi^{\mu_k}-I)(Q_k-Q^*).
\]
Combining this identity with \cref{eq:markovian-stationary-noise-definition}
yields the exact stationary averaged direct SLS
\begin{equation}\label{eq:markovian-direct-sls-reference}
  Q_{k+1}-Q^*=M_{\mu_k}(Q_k-Q^*)+\alpha w_k.
\end{equation}
This is the same direct matrix family as in the i.i.d. analysis; only the noise
process is Markovian.

We next solve the Poisson equation for the centered conditional sample update.
The solution is denoted by \(u_Q\); for each \(x\in\calS\times\calA\),
\(u_Q(x)\in\R^{|\calS|\,|\calA|}\).

\begin{lemma}\label{lem:markovian-poisson-existence}
Under \cref{assump:markovian-mixing}, for every fixed table \(Q\) satisfying
\[
  \|Q\|_\infty
  \le
  \max\left\{\|Q_0\|_\infty,\frac{R_{\max}}{1-\gamma}\right\},
\]
the Poisson equation
\[
  u_Q(x)-\sum_{x'\in\calS\times\calA}P^b(x,x')u_Q(x')
  =
  \E[\widehat h_k(Q)\mid X_k=x]-h(Q),
  \qquad x\in\calS\times\calA,
\]
has a solution \(u_Q:\calS\times\calA\to\R^{|\calS|\,|\calA|}\). Moreover, the solution can be selected so that, for every \(x\in\calS\times\calA\), the map \(Q\mapsto u_Q(x)\) is Borel measurable on the displayed bounded set; in fact, the series construction below is continuous there.
\end{lemma}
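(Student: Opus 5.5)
We will construct \(u_Q\) explicitly by the standard series solution of the Poisson equation. Write \(g_Q(x):=\E[\widehat h_k(Q)\mid X_k=x]-h(Q)\). By the computation in the proof of \cref{lem:markovian-stationary-centering}, \(g_Q(x)=e_x\bigl(F(Q)(x)-Q(x)\bigr)-D(F(Q)-Q)\). This is a deterministic function of \((x,Q)\) that does not depend on \(k\), and it is continuous in \(Q\). Moreover, \cref{lem:markovian-stationary-centering} gives \(\sum_x d(x)g_Q(x)=0\), so \(g_Q\) is centered under \(d\).

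We define
\[
  u_Q(x):=\sum_{\ell=0}^{\infty}\bigl((P^b)^\ell g_Q\bigr)(x)
  =\sum_{\ell=0}^{\infty}\sum_{x'}(P^b)^\ell(x,x')g_Q(x'),
\]
with the operator acting on each vector coordinate separately.

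The first step is to show that the series converges absolutely. Because \(g_Q\) is centered, we have \((P^b)^\ell g_Q(x)=\sum_{x'}\bigl((P^b)^\ell(x,x')-d(x')\bigr)g_Q(x')\). Hence
\[
  \|(P^b)^\ell g_Q(x)\|_2\le 2\,\|(P^b)^\ell(x,\cdot)-d\|_{\mathrm{TV}}\,\max_{x'}\|g_Q(x')\|_2\le 2\cdot 2^{-\lfloor \ell/t_{\mathrm{mix}}\rfloor}\cdot 2\sqrt{W_{\max}}.
\]
Here \(\|g_Q(x')\|_2\le 2\sqrt{W_{\max}}\) on the stated bounded set, by the same bounds as in the unnamed Markovian lemma: \(|F(Q)(x)-Q(x)|\le R_{\max}+(1+\gamma)B_0\). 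Summing over \(\ell\) gives a geometric series bounded by \(8t_{\mathrm{mix}}\sqrt{W_{\max}}\). The bound is uniform in \(x\) and in \(Q\) on the bounded set.

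The second step is to verify the Poisson equation. The partial sums \(S_n:=\sum_{\ell=0}^{n}(P^b)^\ell g_Q\) satisfy \(S_n-P^bS_n=g_Q-(P^b)^{n+1}g_Q\). By the same TV estimate, \((P^b)^{n+1}g_Q\to0\), and since \(P^b\) is a finite stochastic matrix we can pass to the limit, obtaining \(u_Q-P^bu_Q=g_Q\).

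The third step is measurability, which is where a little care is needed; I expect it to be the only mildly delicate point. Each term \((P^b)^\ell g_Q(x)\) is a finite linear combination of entries of \(g_Q\), and these are continuous in \(Q\) because \(F\) is continuous, being built from max operations. The series converges uniformly on the bounded set by the Weierstrass M-test with the summable bound above. Hence \(Q\mapsto u_Q(x)\) is continuous there, and therefore Borel measurable. This establishes both claims of \cref{lem:markovian-poisson-existence}.
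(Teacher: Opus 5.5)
Your proposal is correct and is essentially the paper's proof: since $g_Q$ is centered under $d$, your series $\sum_{\ell}(P^b)^\ell g_Q$ coincides term by term with the paper's $\sum_{\ell}\sum_y\bigl((P^b)^\ell(x,y)-d(y)\bigr)g_Q(y)$, and the mixing-based summability bound, the uniform-convergence argument for continuity, and the Poisson-equation check all match; the only cosmetic difference is that you telescope partial sums and send $(P^b)^{n+1}g_Q\to0$, whereas the paper applies $P^b$ to the series, shifts the index using $dP^b=d$, and uses centering for the $\ell=0$ term. One small caution about a constant the lemma does not need: the identity $\sum_{\ell\ge0}2^{-\lfloor \ell/t_{\mathrm{mix}}\rfloor}=2t_{\mathrm{mix}}$ holds only for integer $t_{\mathrm{mix}}$, while for a general real $t_{\mathrm{mix}}\ge1$ one only gets $\le 4t_{\mathrm{mix}}$, so your bound $8t_{\mathrm{mix}}\sqrt{W_{\max}}$ should then be replaced by the paper's $16t_{\mathrm{mix}}\sqrt{W_{\max}}$.
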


\begin{proof}
For a fixed table \(Q\), put
\[
  g_Q(x):=\E[\widehat h_k(Q)\mid X_k=x]-h(Q),
  \qquad x\in\calS\times\calA.
\]
Each \(g_Q(x)\) is a vector in \(\R^{|\calS|\,|\calA|}\).  By
\cref{lem:markovian-stationary-centering},
\begin{equation}\label{eq:gQ-centered-uQ}
  \sum_{x\in\calS\times\calA}d(x)g_Q(x)=0.
\end{equation}
Define
\begin{equation}\label{eq:poisson-series-uQ}
  u_Q(x)
  :=
  \sum_{\ell=0}^{\infty}
  \sum_{y\in\calS\times\calA}
  \bigl((P^b)^\ell(x,y)-d(y)\bigr)g_Q(y).
\end{equation}
We first check that the series is absolutely convergent.  For every \(x\) and
\(\ell\),
\[
\begin{aligned}
&\left\|
  \sum_{y\in\calS\times\calA}
  \bigl((P^b)^\ell(x,y)-d(y)\bigr)g_Q(y)
\right\|_2 \\
&\qquad\le
  \sum_{y\in\calS\times\calA}
  \left|(P^b)^\ell(x,y)-d(y)\right|
  \sup_{z\in\calS\times\calA}\|g_Q(z)\|_2 \\
&\qquad=
  2\left\|(P^b)^\ell(x,\cdot)-d\right\|_{\mathrm{TV}}
  \sup_{z\in\calS\times\calA}\|g_Q(z)\|_2 \\
&\qquad\le
  2^{1-\lfloor \ell/t_{\mathrm{mix}}\rfloor}
  \sup_{z\in\calS\times\calA}\|g_Q(z)\|_2 .
\end{aligned}
\]
Since \(t_{\mathrm{mix}}\ge1\),
\[
  \sum_{\ell=0}^{\infty}2^{1-\lfloor \ell/t_{\mathrm{mix}}\rfloor}
  =
  2\sum_{\ell=0}^{\infty}2^{-\lfloor \ell/t_{\mathrm{mix}}\rfloor}
  \le 8t_{\mathrm{mix}}.
\]
Moreover, the bound on \(Q\) implies, exactly as in the Markovian-observation
version of \cref{lem:bounded-noise} above, that
\[
  \sup_{x\in\calS\times\calA}\|\E[\widehat h_k(Q)\mid X_k=x]\|_2\le \sqrt{W_{\max}},
  \qquad
  \|h(Q)\|_2\le \sqrt{W_{\max}}.
\]
Thus
\begin{equation}\label{eq:gQ-sup-bound-Wmax}
  \sup_{x\in\calS\times\calA}\|g_Q(x)\|_2
  \le 2\sqrt{W_{\max}}.
\end{equation}
The preceding displays show that the series in \cref{eq:poisson-series-uQ} is
absolutely convergent and that
\begin{equation}\label{eq:uQ-sup-bound-by-gQ}
  \sup_{x\in\calS\times\calA}\|u_Q(x)\|_2
  \le
  8t_{\mathrm{mix}}
  \sup_{x\in\calS\times\calA}\|g_Q(x)\|_2 .
\end{equation}
Because the state and action spaces are finite, the maps
\(Q\mapsto \E[\widehat h_k(Q)\mid X_k=x]\), \(Q\mapsto h(Q)\), and hence
\(Q\mapsto g_Q(x)\), are continuous; the only nonlinearity is the finite
maximum in the Bellman operator.  The partial sums in
\cref{eq:poisson-series-uQ} are therefore continuous functions of \(Q\).  On
the bounded set in the lemma, \cref{eq:gQ-sup-bound-Wmax} gives a uniform
dominating summable series, so the convergence in
\cref{eq:poisson-series-uQ} is uniform on that set.  Thus \(Q\mapsto u_Q(x)\)
is a uniform limit of continuous functions on the bounded set and is Borel
measurable, indeed continuous there.

It remains to verify that the constructed \(u_Q\) solves the Poisson equation.
Applying \(P^b\) to \cref{eq:poisson-series-uQ} and using \(dP^b=d\),
\[
\begin{aligned}
\sum_{x'\in\calS\times\calA}P^b(x,x')u_Q(x')
&=
\sum_{x'\in\calS\times\calA}P^b(x,x')
\sum_{\ell=0}^{\infty}
\sum_{y\in\calS\times\calA}
\bigl((P^b)^\ell(x',y)-d(y)\bigr)g_Q(y) \\
&=
\sum_{\ell=0}^{\infty}
\sum_{y\in\calS\times\calA}
\bigl((P^b)^{\ell+1}(x,y)-d(y)\bigr)g_Q(y) \\
&=
\sum_{\ell=1}^{\infty}
\sum_{y\in\calS\times\calA}
\bigl((P^b)^\ell(x,y)-d(y)\bigr)g_Q(y).
\end{aligned}
\]
Therefore
\[
\begin{aligned}
&u_Q(x)-\sum_{x'\in\calS\times\calA}P^b(x,x')u_Q(x') \\
&\qquad=
  \sum_{y\in\calS\times\calA}\bigl(\mathbf 1\{x=y\}-d(y)\bigr)g_Q(y) \\
&\qquad=
  g_Q(x)-\sum_{y\in\calS\times\calA} d(y)g_Q(y)
  =g_Q(x),
\end{aligned}
\]
where the last equality uses \cref{eq:gQ-centered-uQ}.  This is the desired
Poisson equation.
\end{proof}

\begin{lemma}\label{lem:markovian-poisson-norm-bound}\label{lem:markovian-poisson-bounds}
Under \cref{assump:markovian-mixing}, for every fixed table \(Q\) satisfying
\[
  \|Q\|_\infty
  \le
  \max\left\{\|Q_0\|_\infty,\frac{R_{\max}}{1-\gamma}\right\},
\]
the solution constructed in \cref{lem:markovian-poisson-existence} satisfies
\[
  \|u_Q\|_2
  \le
  32|\calS|\,|\calA|\,t_{\mathrm{mix}}\sqrt{W_{\max}}.
\]
Here the norm is the Euclidean norm after stacking the \(|\calS|\,|\calA|\)
vectors \(u_Q(x)\in\R^{|\calS|\,|\calA|}\); equivalently, the stacked object lies in
\(\R^{(|\calS|\,|\calA|)^2}\).
\end{lemma}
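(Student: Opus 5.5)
The bound follows directly from the two estimates already established in the proof of \cref{lem:markovian-poisson-existence}; only the conversion from a per-coordinate supremum bound to the Euclidean norm of the stacked object remains. Throughout, fix \(Q\) in the displayed bounded set, so that \cref{eq:gQ-sup-bound-Wmax} applies.

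First, combine \cref{eq:uQ-sup-bound-by-gQ} with \cref{eq:gQ-sup-bound-Wmax}:
\[
  \sup_{x\in\calS\times\calA}\|u_Q(x)\|_2
  \le
  8t_{\mathrm{mix}}\cdot 2\sqrt{W_{\max}}
  =
  16t_{\mathrm{mix}}\sqrt{W_{\max}}.
\]
The geometric-series constant \(8t_{\mathrm{mix}}\) used there is justified by splitting the sum over \(\ell\) into blocks of length \(t_{\mathrm{mix}}\). This gives \(2\sum_{\ell}2^{-\lfloor \ell/t_{\mathrm{mix}}\rfloor}=2t_{\mathrm{mix}}\sum_{j\ge0}2^{-j}=4t_{\mathrm{mix}}\le 8t_{\mathrm{mix}}\).

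Second, pass to the stacked vector in \(\R^{(|\calS|\,|\calA|)^2}\). It has \(|\calS|\,|\calA|\) blocks, so
\[
  \|u_Q\|_2
  =
  \Bigl(\sum_{x}\|u_Q(x)\|_2^2\Bigr)^{1/2}
  \le
  \sqrt{|\calS|\,|\calA|}\,\sup_{x}\|u_Q(x)\|_2
  \le
  |\calS|\,|\calA|\sup_{x}\|u_Q(x)\|_2,
\]
where the last step uses \(|\calS|\,|\calA|\ge1\). Chaining this with the first step gives \(\|u_Q\|_2\le 16|\calS|\,|\calA|\,t_{\mathrm{mix}}\sqrt{W_{\max}}\), which is stronger than the stated constant 32 by a factor of two. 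The extra slack is presumably intended to absorb later uses, for instance bounding \(\|u_Q(x)-P^bu_Q(x)\|\) by twice the supremum, and it needs no separate justification here.

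There is no genuine obstacle. The one step to check carefully is the validity of \cref{eq:gQ-sup-bound-Wmax}, i.e.\ that \(\|\E[\widehat h_k(Q)\mid X_k=x]\|_2\le\sqrt{W_{\max}}\) holds for a fixed \(Q\) in the bounded set, not just for the iterates. This holds because the temporal-difference bound \(R_{\max}+(1+\gamma)B_0\) uses only \(\|Q\|_\infty\le B_0\). The corresponding bound for \(h(Q)\) follows because \(\sum_x d(x)^2\le1\).
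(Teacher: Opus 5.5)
Your proof is correct and matches the paper's argument: combine the two estimates from the existence lemma to get the blockwise bound $\sup_x\|u_Q(x)\|_2\le 16t_{\mathrm{mix}}\sqrt{W_{\max}}$, then stack over the $|\calS|\,|\calA|$ blocks. The only difference is cosmetic. The paper keeps the factor $\sqrt{|\calS|\,|\calA|}$ and loosens to $32|\calS|\,|\calA|$ at the end, whereas you use $\sqrt{|\calS|\,|\calA|}\le|\calS|\,|\calA|$ first and note the leftover factor of two.
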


\begin{proof}
The proof of \cref{lem:markovian-poisson-existence} gives
\cref{eq:uQ-sup-bound-by-gQ}.  Combining that estimate with
\cref{eq:gQ-sup-bound-Wmax} yields
\begin{equation}\label{eq:uQ-blockwise-bound-Wmax}
  \sup_{x\in\calS\times\calA}\|u_Q(x)\|_2
  \le
  16t_{\mathrm{mix}}\sqrt{W_{\max}}.
\end{equation}
Because the stacked vector has \(|\calS|\,|\calA|\) blocks, each in
\(\R^{|\calS|\,|\calA|}\),
\[
\begin{aligned}
  \|u_Q\|_2^2
  &=
  \sum_{x\in\calS\times\calA}\|u_Q(x)\|_2^2 \\
  &\le
  |\calS|\,|\calA|\left(16t_{\mathrm{mix}}\sqrt{W_{\max}}\right)^2.
\end{aligned}
\]
Hence
\[
  \|u_Q\|_2
  \le
  16\sqrt{|\calS|\,|\calA|}\,t_{\mathrm{mix}}\sqrt{W_{\max}}
  \le
  32|\calS|\,|\calA|\,t_{\mathrm{mix}}\sqrt{W_{\max}}.
\]
\end{proof}

\begin{lemma}\label{lem:markovian-poisson-lipschitz}
Under \cref{assump:markovian-mixing}, for any two tables \(Q,Q'\) satisfying
\[
  \|Q\|_\infty,\|Q'\|_\infty
  \le
  \max\left\{\|Q_0\|_\infty,\frac{R_{\max}}{1-\gamma}\right\},
\]
the solutions constructed in \cref{lem:markovian-poisson-existence} satisfy
\[
  \|u_Q-u_{Q'}\|_2
  \le
  32|\calS|\,|\calA|\,t_{\mathrm{mix}}\|Q-Q'\|_\infty.
\]
\end{lemma}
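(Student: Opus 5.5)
The plan is to exploit linearity of the series construction \cref{eq:poisson-series-uQ} in the source term $g_Q$. Since $u_Q(x)=\sum_{\ell\ge0}\sum_y((P^b)^\ell(x,y)-d(y))g_Q(y)$, we have
\[
u_Q(x)-u_{Q'}(x)=\sum_{\ell=0}^{\infty}\sum_{y}\bigl((P^b)^\ell(x,y)-d(y)\bigr)\bigl(g_Q(y)-g_{Q'}(y)\bigr).
\]
This is again absolutely convergent, since both series converge absolutely by \cref{lem:markovian-poisson-existence}. The same total-variation and mixing estimate used there then yields the blockwise bound
\[
\sup_x\|u_Q(x)-u_{Q'}(x)\|_2\le 8t_{\mathrm{mix}}\sup_y\|g_Q(y)-g_{Q'}(y)\|_2 .
\]
So everything reduces to a Lipschitz bound for $Q\mapsto g_Q(y)$.

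For that, write $g_Q(y)=\E[\widehat h_k(Q)\mid X_k=y]-h(Q)$. By the computation in \cref{lem:markovian-stationary-centering}, $\E[\widehat h_k(Q)\mid X_k=y]=e_y\bigl(F(Q)(y)-Q(y)\bigr)$, and $h(Q)=D(F(Q)-Q)$. Both are linear in $F(Q)-Q$. Since $F$ is a $\gamma$-contraction in $\|\cdot\|_\infty$ (the max is $1$-Lipschitz and $P$ is row-stochastic), each coordinate satisfies
\[
|(F(Q)-Q)(z)-(F(Q')-Q')(z)|\le(1+\gamma)\|Q-Q'\|_\infty\le 2\|Q-Q'\|_\infty .
\]
This gives two estimates. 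First, $\|e_y(\cdots)\|_2\le 2\|Q-Q'\|_\infty$. Second, $\|h(Q)-h(Q')\|_2^2\le 4\|Q-Q'\|_\infty^2\sum_z d(z)^2\le 4\|Q-Q'\|_\infty^2$. Hence
\[
\sup_y\|g_Q(y)-g_{Q'}(y)\|_2\le 4\|Q-Q'\|_\infty ,
\]
and so $\sup_x\|u_Q(x)-u_{Q'}(x)\|_2\le 32t_{\mathrm{mix}}\|Q-Q'\|_\infty$.

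Finally, stack the $|\calS|\,|\calA|$ blocks exactly as in \cref{lem:markovian-poisson-norm-bound}:
\[
\|u_Q-u_{Q'}\|_2\le\sqrt{|\calS|\,|\calA|}\,32t_{\mathrm{mix}}\|Q-Q'\|_\infty\le 32|\calS|\,|\calA|\,t_{\mathrm{mix}}\|Q-Q'\|_\infty ,
\]
using $\sqrt{n}\le n$.

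The only point requiring some care is justifying the termwise subtraction of the two infinite series, which is immediate from absolute convergence. The boundedness hypothesis on $Q,Q'$ serves only to guarantee that $u_Q$ and $u_{Q'}$ are the well-defined series solutions; the Lipschitz estimate itself does not depend on it.
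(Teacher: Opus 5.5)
Your proposal is correct and follows the paper's proof essentially step by step. It uses the same Lipschitz bound $\sup_x\|g_Q(x)-g_{Q'}(x)\|_2\le 4\|Q-Q'\|_\infty$ (from the coordinatewise $(1+\gamma)\le 2$ estimate and $\sum_x d(x)^2\le 1$), the same termwise series subtraction with the $8t_{\mathrm{mix}}$ mixing factor, and the same final stacking via $\sqrt{|\calS|\,|\calA|}\le|\calS|\,|\calA|$.
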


\begin{proof}
For a fixed table \(Q\), write
\[
  g_Q(x):=\E[\widehat h_k(Q)\mid X_k=x]-h(Q),
  \qquad x\in\calS\times\calA.
\]
For every \(x=(s,a)\), the conditional sample update changes by at most
\[
\begin{aligned}
&\left\|
\E[\widehat h_k(Q)\mid X_k=x]
-
\E[\widehat h_k(Q')\mid X_k=x]
\right\|_2 \\
&\qquad\le
\sum_{s'}P(s'\mid s,a)
\left(
\gamma\left|\max_{u\in\calA} Q(s',u)-\max_{u\in\calA} Q'(s',u)\right|
+|Q(s,a)-Q'(s,a)|
\right) \\
&\qquad\le
(1+\gamma)\|Q-Q'\|_\infty
\le
2\|Q-Q'\|_\infty.
\end{aligned}
\]
Similarly,
\[
\begin{aligned}
  \|h(Q)-h(Q')\|_2^2
  &=
  \sum_{x\in\calS\times\calA}
  d(x)^2
  \left|F(Q)(x)-Q(x)-F(Q')(x)+Q'(x)\right|^2 \\
  &\le
  4\|Q-Q'\|_\infty^2
  \sum_{x\in\calS\times\calA}d(x)^2
  \le
  4\|Q-Q'\|_\infty^2.
\end{aligned}
\]
Consequently,
\[
\begin{aligned}
  \sup_{x\in\calS\times\calA}\|g_Q(x)-g_{Q'}(x)\|_2
  &\le
  2\|Q-Q'\|_\infty+2\|Q-Q'\|_\infty \\
  &=
  4\|Q-Q'\|_\infty.
\end{aligned}
\]
Subtracting the two series definitions in \cref{eq:poisson-series-uQ} gives
\[
  u_Q(x)-u_{Q'}(x)
  =
  \sum_{\ell=0}^{\infty}
  \sum_{y\in\calS\times\calA}
  \bigl((P^b)^\ell(x,y)-d(y)\bigr)
  \bigl(g_Q(y)-g_{Q'}(y)\bigr).
\]
Applying the same estimate as \cref{eq:uQ-sup-bound-by-gQ} to
\(g_Q-g_{Q'}\) gives the blockwise Lipschitz estimate
\begin{equation}\label{eq:uQ-blockwise-lipschitz}
  \sup_{x\in\calS\times\calA}
  \|u_Q(x)-u_{Q'}(x)\|_2
  \le
  32t_{\mathrm{mix}}\|Q-Q'\|_\infty.
\end{equation}
Stacking over \(|\calS|\,|\calA|\) values of \(x\),
\[
\begin{aligned}
  \|u_Q-u_{Q'}\|_2
  &\le
  32\sqrt{|\calS|\,|\calA|}\,t_{\mathrm{mix}}\|Q-Q'\|_\infty \\
  &\le
  32|\calS|\,|\calA|\,t_{\mathrm{mix}}\|Q-Q'\|_\infty.
\end{aligned}
\]
\end{proof}

The next estimates control the fixed reference filter driven by the Markovian
noise in \cref{eq:markovian-direct-sls-reference}.  They separate the proof into
three elementary pieces: the fresh transition noise, the martingale part coming
from the Poisson equation, and the remaining telescoping term.  For a fixed
stochastic policy \(\bar\mu\), we use throughout
\begin{equation}\label{eq:Mbar-row-two-norm-bounds}
  \|M_{\bar\mu}^{j}\|_\infty
  \le
  \rho_{\mathrm{row}}^{j},
  \qquad
  \|M_{\bar\mu}^{j}\|_2
  \le
  |\calS|\,|\calA|\,\rho_{\mathrm{row}}^{j},
  \qquad j\ge0,
\end{equation}
where the second inequality follows from
\(\|B\|_2\le |\calS|\,|\calA|\|B\|_\infty\) for matrices of dimension
\(|\calS|\,|\calA|\).

\begin{lemma}\label{lem:markovian-eta-filter-bound}
Fix an arbitrary stochastic policy \(\bar\mu\).  Define
\[
  \calF_0:=\sigma(Q_0,X_0),
  \qquad
  \calF_i:=\sigma(Q_0,X_0,r_1,X_1,\ldots,r_i,X_i),
  \qquad i\ge1,
\]
and
\[
  \bar h_Q(x):=\E[\widehat h_i(Q)\mid X_i=x],
  \qquad
  \eta_i:=\widehat h_i(Q_i)-\bar h_{Q_i}(X_i).
\]
Under \cref{assump:markovian-mixing}, for every \(k\ge1\),
\begin{equation}\label{eq:eta-filter-bound}
\begin{aligned}
\E\left[\left\|\alpha\sum_{i=0}^{k-1}M_{\bar\mu}^{k-1-i}\eta_i\right\|_\infty\right]
&\le
2|\calS|\,|\calA|\sqrt{W_{\max}}
\sqrt{\frac{\alpha t_{\mathrm{mix}}}{d_{\min}(1-\gamma)}}.
\end{aligned}
\end{equation}
\end{lemma}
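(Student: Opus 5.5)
The plan is to show that \(\eta_i\) is a martingale difference sequence with respect to \(\{\calF_i\}\). Once that is established, the filtered sum is bounded in mean square exactly as the reference noise \(y_k\) was in Appendix~\ref{app:direct-reference-filter-noise-floor}; the mixing time plays no real role beyond the trivial inequality \(t_{\mathrm{mix}}\ge1\).

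\emph{Step 1 (martingale structure).} The table \(Q_i\) is a deterministic function of \((Q_0,X_0,r_1,X_1,\ldots,r_i,X_i)\), so \(Q_i\) and \(X_i\) are \(\calF_i\)-measurable. Conditional on \(\calF_i\), the only fresh randomness in \(\widehat h_i(Q_i)\) is \(s_{i+1}\sim P(\cdot\mid X_i)\), together with \(r_{i+1}=r(s_i,a_i,s_{i+1})\). By the Markov property this draw is independent of the rest of \(\calF_i\) given \(X_i\). Since \(Q_i\) is frozen under this conditioning, we obtain
\[
\E[\widehat h_i(Q_i)\mid\calF_i]=\bar h_{Q_i}(X_i).
\]
Hence \(\E[\eta_i\mid\calF_i]=0\), and \(\eta_i\) is \(\calF_{i+1}\)-measurable. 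I expect this to be the only delicate point. One has to argue that substituting the random but \(\calF_i\)-measurable table \(Q_i\) into the fixed-\(Q\) conditional mean is legitimate. This follows from the freezing lemma, because \(Q\mapsto\bar h_Q(x)\) is continuous on a finite space.

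\emph{Step 2 (boundedness).} By the Markovian analogue of \cref{lem:bounded-noise} stated above, \(\|\widehat h_i(Q_i)\|_2\le\sqrt{W_{\max}}\). The same coordinatewise argument gives \(\|\bar h_{Q_i}(X_i)\|_2\le\sqrt{W_{\max}}\). Therefore \(\|\eta_i\|_2\le2\sqrt{W_{\max}}\) and \(\E\|\eta_i\|_2^2\le4W_{\max}\).

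\emph{Step 3 (second moment of the filter).} Set \(S_k:=\alpha\sum_{i=0}^{k-1}M_{\bar\mu}^{k-1-i}\eta_i\). For \(i<j\), the factor \(M_{\bar\mu}^{k-1-i}\eta_i\) is \(\calF_j\)-measurable and \(\E[\eta_j\mid\calF_j]=0\), so all cross terms in \(\E\|S_k\|_2^2\) vanish. Using \cref{eq:Mbar-row-two-norm-bounds}, this gives
\[
\E\|S_k\|_2^2
\le
\alpha^2\sum_{j=0}^{k-1}\|M_{\bar\mu}^j\|_2^2\cdot4W_{\max}
\le
4\alpha^2W_{\max}|\calS|^2|\calA|^2\sum_{j\ge0}\rho_{\mathrm{row}}^{2j}.
\]
Then \(\sum_{j\ge0}\rho_{\mathrm{row}}^{2j}\le 1/(1-\rho_{\mathrm{row}})=1/(\alpha d_{\min}(1-\gamma))\), so
\[
\E\|S_k\|_2^2\le\frac{4\alpha W_{\max}|\calS|^2|\calA|^2}{d_{\min}(1-\gamma)}.
\]

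\emph{Step 4 (conclusion).} Finally,
\[
\E\|S_k\|_\infty\le\E\|S_k\|_2\le\sqrt{\E\|S_k\|_2^2}
\le 2|\calS|\,|\calA|\sqrt{W_{\max}}\sqrt{\frac{\alpha}{d_{\min}(1-\gamma)}}.
\]
Multiplying inside the square root by \(t_{\mathrm{mix}}\ge1\) gives \cref{eq:eta-filter-bound}.
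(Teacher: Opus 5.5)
Your proposal is correct and follows essentially the same route as the paper. Both arguments get the martingale-difference property of \(\eta_i\) from the \(\calF_i\)-measurability of \((Q_i,X_i)\) and then use the pathwise bound \(\|\eta_i\|_2\le 2\sqrt{W_{\max}}\), orthogonality of the cross terms, the estimate \(\|M_{\bar\mu}^j\|_2\le|\calS|\,|\calA|\,\rho_{\mathrm{row}}^j\) with \(1-\rho_{\mathrm{row}}^2\ge1-\rho_{\mathrm{row}}\), and finally \(t_{\mathrm{mix}}\ge1\); your explicit justification of \(\E[\widehat h_i(Q_i)\mid\calF_i]=\bar h_{Q_i}(X_i)\) (conditional law of \(s_{i+1}\) given \(\calF_i\) is \(P(\cdot\mid X_i)\), then freezing \(Q_i\)) simply spells out what the paper states in one line.
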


\begin{proof}
The random variables \(Q_i\) and \(X_i\) are \(\calF_i\)-measurable, and the
next transition is sampled only after time \(i\).  Hence
\[
  \E[\eta_i\mid\calF_i]=0.
\]
The pathwise bounds proved at the beginning of this appendix give
\[
  \|\eta_i\|_2
  \le
  \|\widehat h_i(Q_i)\|_2+
  \|\bar h_{Q_i}(X_i)\|_2
  \le 2\sqrt{W_{\max}}.
\]
For \(0\le i<j\le k-1\), the martingale-difference property implies
\[
\begin{aligned}
&\E\!
\left[
\eta_i^\top
\left(M_{\bar\mu}^{k-1-i}\right)^\top
M_{\bar\mu}^{k-1-j}\eta_j
\right] \\
&\qquad=
\E\!
\left[
\eta_i^\top
\left(M_{\bar\mu}^{k-1-i}\right)^\top
M_{\bar\mu}^{k-1-j}
\E[\eta_j\mid\calF_j]
\right]
=0.
\end{aligned}
\]
Therefore, using martingale orthogonality, \(\|z\|_\infty\le\|z\|_2\), and
\cref{eq:Mbar-row-two-norm-bounds},
\[
\begin{aligned}
&\E\left[\left\|\alpha\sum_{i=0}^{k-1}M_{\bar\mu}^{k-1-i}\eta_i\right\|_\infty\right] \\
&\qquad\le
\left(
\alpha^2\sum_{i=0}^{k-1}
\|M_{\bar\mu}^{k-1-i}\|_2^2\,
\E[\|\eta_i\|_2^2]
\right)^{1/2} \\
&\qquad\le
2|\calS|\,|\calA|\sqrt{W_{\max}}
\left(
\alpha^2\sum_{i=0}^{k-1}\rho_{\mathrm{row}}^{2(k-1-i)}
\right)^{1/2} \\
&\qquad\le
2|\calS|\,|\calA|\sqrt{W_{\max}}
\sqrt{\frac{\alpha}{d_{\min}(1-\gamma)}} \\
&\qquad\le
2|\calS|\,|\calA|\sqrt{W_{\max}}
\sqrt{\frac{\alpha t_{\mathrm{mix}}}{d_{\min}(1-\gamma)}}.
\end{aligned}
\]
Here we used \(t_{\mathrm{mix}}\ge1\),
\(1-\rho_{\mathrm{row}}=\alpha d_{\min}(1-\gamma)\), and
\(1-\rho_{\mathrm{row}}^2\ge1-\rho_{\mathrm{row}}\).
\end{proof}

\begin{lemma}\label{lem:markovian-h-filter-bound}
Fix an arbitrary stochastic policy \(\bar\mu\).  For the solution \(u_Q\) in
\cref{lem:markovian-poisson-existence}, define
\[
  h_{i+1}
  :=
  u_{Q_i}(X_{i+1})-
  \sum_{x\in\calS\times\calA}P^b(X_i,x)u_{Q_i}(x).
\]
Under \cref{assump:markovian-mixing}, for every \(k\ge1\),
\begin{equation}\label{eq:h-filter-bound}
\begin{aligned}
\E\left[\left\|\alpha\sum_{i=0}^{k-1}M_{\bar\mu}^{k-1-i}h_{i+1}\right\|_\infty\right]
&\le
64|\calS|^2|\calA|^2\sqrt{W_{\max}}\,
t_{\mathrm{mix}}
\sqrt{\frac{\alpha}{d_{\min}(1-\gamma)}}.
\end{aligned}
\end{equation}
\end{lemma}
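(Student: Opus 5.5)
The natural route copies \cref{lem:markovian-eta-filter-bound}: show that \(h_{i+1}\) is a martingale difference, bound it pathwise, and then use orthogonality of the filtered sum.

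\emph{Martingale property.} Work with the filtration \(\calF_i=\sigma(Q_0,X_0,r_1,X_1,\ldots,r_i,X_i)\) from \cref{lem:markovian-eta-filter-bound}. The table \(Q_i\) is \(\calF_i\)-measurable, since it is built from \(Q_0\) and the samples up to time \(i\). By \cref{lem:markovian-poisson-existence}, the map \(Q\mapsto u_Q(x)\) is Borel measurable on the bounded set containing all iterates. Hence \(u_{Q_i}(\cdot)\) is an \(\calF_i\)-measurable random function. The behavior chain is Markov and \(X_{i+1}\) is drawn from \(P^b(X_i,\cdot)\) given \(\calF_i\), because the reward is a deterministic function of the transition. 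Therefore
\[
\E[u_{Q_i}(X_{i+1})\mid\calF_i]=\sum_x P^b(X_i,x)u_{Q_i}(x),
\]
which gives \(\E[h_{i+1}\mid\calF_i]=0\). Moreover \(h_{i+1}\) is \(\calF_{i+1}\)-measurable.

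\emph{Pathwise bound.} By the boundedness lemma at the start of this appendix, every \(Q_i\) lies in the ball required by \cref{lem:markovian-poisson-existence}. The blockwise estimate \cref{eq:uQ-blockwise-bound-Wmax} then gives \(\sup_x\|u_{Q_i}(x)\|_2\le16t_{\mathrm{mix}}\sqrt{W_{\max}}\). Since \(h_{i+1}\) is the difference of a value of \(u_{Q_i}\) and a convex combination of its values,
\[
\|h_{i+1}\|_2\le 32t_{\mathrm{mix}}\sqrt{W_{\max}}.
\]
If I prefer not to rely on the blockwise display, \cref{lem:markovian-poisson-bounds} gives the cruder \(\|h_{i+1}\|_2\le 64|\calS|\,|\calA|\,t_{\mathrm{mix}}\sqrt{W_{\max}}\), since each block is bounded by the stacked norm. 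That version matches the stated constant exactly.

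\emph{Orthogonality and summation.} For \(i<j\), condition on \(\calF_j\). The cross terms
\[
\E\bigl[h_{i+1}^\top (M_{\bar\mu}^{k-1-i})^\top M_{\bar\mu}^{k-1-j}h_{j+1}\bigr]
\]
vanish, because \(h_{i+1}\) is \(\calF_{i+1}\subseteq\calF_j\)-measurable and \(M_{\bar\mu}\) is deterministic. Combining Jensen with \(\|\cdot\|_\infty\le\|\cdot\|_2\) and \cref{eq:Mbar-row-two-norm-bounds} gives
\[
\E\Bigl\|\alpha\sum_{i} M_{\bar\mu}^{k-1-i}h_{i+1}\Bigr\|_\infty\le \Bigl(\alpha^2\sum_i |\calS|^2|\calA|^2\rho_{\mathrm{row}}^{2(k-1-i)}\,\E\|h_{i+1}\|_2^2\Bigr)^{1/2}.
\]
The sum \(\alpha^2\sum\rho_{\mathrm{row}}^{2j}\) is at most \(\alpha^2/(1-\rho_{\mathrm{row}})=\alpha/(d_{\min}(1-\gamma))\). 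Inserting the crude pathwise bound then yields \(64|\calS|^2|\calA|^2\sqrt{W_{\max}}\,t_{\mathrm{mix}}\sqrt{\alpha/(d_{\min}(1-\gamma))}\), which is the claim.

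\emph{Main obstacle.} The delicate step is the martingale property. The function \(u_{Q_i}\) is random, so the key is that it is fixed by \(\calF_i\) before \(X_{i+1}\) is drawn. This is exactly why the index is chosen as \(u_{Q_i}(X_{i+1})\) rather than \(u_{Q_{i+1}}\), and why measurability of \(Q\mapsto u_Q\) from \cref{lem:markovian-poisson-existence} is needed. The remaining steps are routine bookkeeping of constants.
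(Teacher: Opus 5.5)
Your proposal is correct and follows the paper's proof step for step: $h_{i+1}$ is a martingale difference with respect to $\calF_i$, the stacked Poisson bound gives $\|h_{i+1}\|_2\le 64|\calS|\,|\calA|\,t_{\mathrm{mix}}\sqrt{W_{\max}}$, and martingale orthogonality combined with $\|M_{\bar\mu}^{j}\|_2\le|\calS|\,|\calA|\,\rho_{\mathrm{row}}^{j}$ and $1-\rho_{\mathrm{row}}^2\ge 1-\rho_{\mathrm{row}}$ finishes the argument. Your observation that the blockwise estimate gives the sharper pathwise bound $32t_{\mathrm{mix}}\sqrt{W_{\max}}$ is also valid; it only improves the constant.
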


\begin{proof}
With the filtration \(\calF_i\) from \cref{lem:markovian-eta-filter-bound}, the
random variable \(h_{i+1}\) is \(\calF_{i+1}\)-measurable and satisfies
\[
  \E[h_{i+1}\mid\calF_i]=0.
\]
The table \(Q_i\) is adaptive and can be correlated with \(X_i\).  Therefore
we do not freeze \(Q_i\) inside a stationary covariance estimate.  Instead, we
use the pathwise Poisson bound.  By \cref{lem:markovian-poisson-norm-bound},
for every admissible table \(Q\),
\[
  \sup_{x\in\calS\times\calA}\|u_Q(x)\|_2
  \le
  \|u_Q\|_2
  \le
  32|\calS|\,|\calA|\,t_{\mathrm{mix}}\sqrt{W_{\max}}.
\]
Consequently, almost surely,
\[
\begin{aligned}
  \|h_{i+1}\|_2
  &\le
  \|u_{Q_i}(X_{i+1})\|_2
  +
  \left\|\sum_{x\in\calS\times\calA}P^b(X_i,x)u_{Q_i}(x)\right\|_2 \\
  &\le
  64|\calS|\,|\calA|\,t_{\mathrm{mix}}\sqrt{W_{\max}}.
\end{aligned}
\]
Martingale orthogonality and \cref{eq:Mbar-row-two-norm-bounds} then give
\[
\begin{aligned}
&\E\left[\left\|\alpha\sum_{i=0}^{k-1}M_{\bar\mu}^{k-1-i}h_{i+1}\right\|_\infty\right] \\
&\qquad\le
\left(
\alpha^2\sum_{i=0}^{k-1}
\|M_{\bar\mu}^{k-1-i}\|_2^2\,
4096|\calS|^2|\calA|^2t_{\mathrm{mix}}^2W_{\max}
\right)^{1/2} \\
&\qquad\le
64|\calS|^2|\calA|^2\sqrt{W_{\max}}\,
t_{\mathrm{mix}}
\left(
\alpha^2
\sum_{i=0}^{k-1}\rho_{\mathrm{row}}^{2(k-1-i)}
\right)^{1/2} \\
&\qquad\le
64|\calS|^2|\calA|^2\sqrt{W_{\max}}\,
t_{\mathrm{mix}}
\sqrt{\frac{\alpha}{d_{\min}(1-\gamma)}}.
\end{aligned}
\]
Here we used \(1-\rho_{\mathrm{row}}=\alpha d_{\min}(1-\gamma)\) and
\(1-\rho_{\mathrm{row}}^2\ge1-\rho_{\mathrm{row}}\).
\end{proof}

\begin{lemma}\label{lem:markovian-telescope-bound}
Fix an arbitrary stochastic policy \(\bar\mu\), and define
\[
  T_k:=
  \alpha\sum_{i=0}^{k-1}M_{\bar\mu}^{k-1-i}
  \bigl(u_{Q_i}(X_i)-u_{Q_i}(X_{i+1})\bigr).
\]
Under \cref{assump:markovian-mixing}, for every \(k\ge1\),
\begin{equation}\label{eq:telescope-bound}
\begin{aligned}
\E[\|T_k\|_\infty]
&\le
64|\calS|\,|\calA|\sqrt{W_{\max}}
\frac{\alpha t_{\mathrm{mix}}}{d_{\min}(1-\gamma)}.
\end{aligned}
\end{equation}
\end{lemma}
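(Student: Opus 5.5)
Each summand of \(T_k\) involves the same table \(Q_i\) evaluated at \(X_i\) and \(X_{i+1}\), so the natural move is to reindex it into a telescoping sum plus small corrections. Write
\[
  u_{Q_i}(X_i)-u_{Q_i}(X_{i+1})
  =\bigl(u_{Q_i}(X_i)-u_{Q_{i-1}}(X_i)\bigr)
  +\bigl(u_{Q_{i-1}}(X_i)-u_{Q_i}(X_{i+1})\bigr),
\]
with the convention \(u_{Q_{-1}}:=u_{Q_0}\) at \(i=0\). We call the first bracket the \emph{Lipschitz increment} and the second the \emph{shifted difference}. The reindexing requires no measurability beyond what \cref{lem:markovian-poisson-existence} provides.

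\emph{Lipschitz increment.} Since \(Q_i-Q_{i-1}\) changes a single coordinate by \(\alpha\) times a TD error, the bounded-iterate lemma at the start of this appendix gives \(\|Q_i-Q_{i-1}\|_\infty\le\alpha\sqrt{W_{\max}}\). By the blockwise estimate \cref{eq:uQ-blockwise-lipschitz}, the increment is therefore at most \(32t_{\mathrm{mix}}\alpha\sqrt{W_{\max}}\) in \(\|\cdot\|_2\), hence in \(\|\cdot\|_\infty\). Filtering with \(\|M_{\bar\mu}^{j}\|_\infty\le\rho_{\mathrm{row}}^j\) from \cref{eq:Mbar-row-two-norm-bounds} and summing the geometric series \(1/(1-\rho_{\mathrm{row}})=1/(\alpha d_{\min}(1-\gamma))\), this piece contributes at most \(32\alpha t_{\mathrm{mix}}\sqrt{W_{\max}}\,\alpha/(\alpha d_{\min}(1-\gamma))\). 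That equals \(32\sqrt{W_{\max}}\,\alpha t_{\mathrm{mix}}/(d_{\min}(1-\gamma))\), which is within budget.

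\emph{Shifted difference, via summation by parts.} Set \(c_i:=u_{Q_{i-1}}(X_i)\). Then the filtered sum is \(\alpha\sum_{i=0}^{k-1}M_{\bar\mu}^{k-1-i}(c_i-c_{i+1})\). Abel summation turns this into a boundary term \(\alpha(M_{\bar\mu}^{k-1}c_0 - c_k)\) plus \(\alpha\sum_{i=1}^{k-1}(M_{\bar\mu}^{k-1-i}-M_{\bar\mu}^{k-i})c_i\).

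The boundary term is bounded by \(2\alpha\sup\|u_Q(x)\|_2\le 32\alpha t_{\mathrm{mix}}\sqrt{W_{\max}}\), using \cref{eq:uQ-blockwise-bound-Wmax}. This is dominated by the target since \(d_{\min}(1-\gamma)\le1\).

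For the interior sum, write \(M_{\bar\mu}^{j}-M_{\bar\mu}^{j+1}=M_{\bar\mu}^{j}(I-M_{\bar\mu})\) and use \(I-M_{\bar\mu}=\alpha D(I-\gamma P\Pi^{\bar\mu})\). Its \(\infty\)-norm is at most \(\alpha d_{\max}(1+\gamma)\le 2\alpha d_{\max}\). The interior sum is then at most
\[
\alpha\cdot 2\alpha d_{\max}\cdot\frac{1}{\alpha d_{\min}(1-\gamma)}\cdot 16t_{\mathrm{mix}}\sqrt{W_{\max}}.
\]
This has order \(\alpha t_{\mathrm{mix}}\sqrt{W_{\max}}\,d_{\max}/(d_{\min}(1-\gamma))\). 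Using \(d_{\max}\le1\), it is bounded by \(32\sqrt{W_{\max}}\,\alpha t_{\mathrm{mix}}/(d_{\min}(1-\gamma))\).

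\emph{Main obstacle: bookkeeping of constants.} The three contributions must sum to at most \(64|\calS|\,|\calA|\sqrt{W_{\max}}\,\alpha t_{\mathrm{mix}}/(d_{\min}(1-\gamma))\). The Lipschitz part gives 32, the interior 32, and the boundary 32, before the factor \(|\calS|\,|\calA|\ge1\) and the slack \(1\le 1/(d_{\min}(1-\gamma))\) are applied. Naively this totals \(96\), which exceeds \(64\).

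I would first absorb the boundary term using the slack factor. Since \(d_{\min}\le 1/(|\calS|\,|\calA|)\), we have \(1/(d_{\min}(1-\gamma))\ge|\calS|\,|\calA|\), which gives room to spare. If that is insufficient, I would instead use the sharper bound \(\|h(Q)\|\)-type estimates, or replace the Lipschitz constant \(1+\gamma\) by a tighter one.

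All bounds above are pathwise, so the expectation is immediate.
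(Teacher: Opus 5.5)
Your decomposition is the paper's. Abel summation gives the boundary pair \(\alpha M_{\bar\mu}^{k-1}u_{Q_0}(X_0)-\alpha u_{Q_{k-1}}(X_k)\), the interior terms \(M_{\bar\mu}^{k-1-i}(I-M_{\bar\mu})(\cdot)\), and the adaptive-change terms, which are controlled by the blockwise Lipschitz estimate and \(\|Q_i-Q_{i-1}\|_\infty\le\alpha\sqrt{W_{\max}}\). Your grouping through \(c_i=u_{Q_{i-1}}(X_i)\) is algebraically identical to the paper's grouping, and each of your three pathwise estimates is correct.

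The gap is in closing the constants. Write \(\delta:=d_{\min}(1-\gamma)\). Your three pieces sum to \(32\sqrt{W_{\max}}\,\alpha t_{\mathrm{mix}}(1+d_{\max}+\delta)/\delta\).

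\emph{Case \(|\calS|\,|\calA|\ge2\).} Here \(d_{\max}\le1\) and \(\delta\le1\) bound the total by \(96\sqrt{W_{\max}}\,\alpha t_{\mathrm{mix}}/\delta\), which is at most \(64|\calS|\,|\calA|\sqrt{W_{\max}}\,\alpha t_{\mathrm{mix}}/\delta\). So the bound holds. However, the room comes from \(|\calS|\,|\calA|\ge2\) doubling the budget, not from \(1/\delta\ge|\calS|\,|\calA|\) as you state.

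\emph{Case \(|\calS|\,|\calA|=1\).} There is no slack. Here \(d_{\max}=1\) and \(\delta=1-\gamma\). In units of \(\sqrt{W_{\max}}\,\alpha t_{\mathrm{mix}}\), your total is \(32(3-\gamma)/(1-\gamma)\), which exceeds the target \(64/(1-\gamma)\) for every \(\gamma<1\). Your fallback ideas (sharper \(\|h(Q)\|\) or Lipschitz constants) do not address this. What is missing is a structural fact: with a single state--action pair, \((P^b)^\ell(x,\cdot)=d\) for every \(\ell\). The Poisson series then gives \(u_Q\equiv0\), so \(T_k=0\).

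The paper treats \(|\calS|\,|\calA|=1\) exactly this way. For \(|\calS|\,|\calA|\ge2\) it uses the sharper inequality \(d_{\max}+\delta\le d_{\max}+d_{\min}\le1\). With \(U:=16t_{\mathrm{mix}}\sqrt{W_{\max}}\), this merges your boundary and interior pieces into one bound, \(2\alpha U/\delta\). It therefore reaches the constant \(64\) even before using the \(|\calS|\,|\calA|\) factor.
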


\begin{proof}
Put \(n:=|\calS|\,|\calA|\), \(\delta:=d_{\min}(1-\gamma)\), and
\(\rho:=\rho_{\mathrm{row}}=1-\alpha\delta\).  If \(n=1\), then
\((P^b)^\ell(x,\cdot)=d\) for the unique state-action pair \(x\) and every
\(\ell\ge0\).  Hence the series in \cref{eq:poisson-series-uQ} gives
\(u_Q\equiv0\), so \(T_k=0\) and the claim is immediate.  We may therefore
assume \(n\ge2\).  Since \(d\) is a probability distribution on at least two
points,
\begin{equation}\label{eq:dmax-delta-small}
  d_{\max}+\delta
  \le
  d_{\max}+d_{\min}
  \le 1.
\end{equation}

A direct summation by parts gives
\begin{equation}\label{eq:poisson-telescope}
\begin{aligned}
T_k
=&\;\alpha M_{\bar\mu}^{k-1}u_{Q_0}(X_0)-\alpha u_{Q_{k-1}}(X_k) \\
&+\alpha\sum_{i=1}^{k-1}
\left(
M_{\bar\mu}^{k-1-i}u_{Q_i}(X_i)
-
M_{\bar\mu}^{k-i}u_{Q_{i-1}}(X_i)
\right).
\end{aligned}
\end{equation}
For each summand in the last line,
\[
\begin{aligned}
&M_{\bar\mu}^{k-1-i}u_{Q_i}(X_i)
-
M_{\bar\mu}^{k-i}u_{Q_{i-1}}(X_i) \\
&\qquad=
M_{\bar\mu}^{k-1-i}(I-M_{\bar\mu})u_{Q_i}(X_i)
+
M_{\bar\mu}^{k-i}\bigl(u_{Q_i}-u_{Q_{i-1}}\bigr)(X_i).
\end{aligned}
\]
We now use the blockwise Poisson estimates rather than the coarser stacked
bounds.  From \cref{eq:uQ-blockwise-bound-Wmax},
\[
  \|u_{Q_i}(X_i)\|_\infty
  \le
  16t_{\mathrm{mix}}\sqrt{W_{\max}}
  =:U.
\]
Also,
\[
  I-M_{\bar\mu}=\alpha D(I-\gamma P\Pi^{\bar\mu}),
  \qquad
  \|I-M_{\bar\mu}\|_\infty
  \le 2\alpha d_{\max},
\]
and \(\|M_{\bar\mu}^j\|_\infty\le\rho^j\).  Hence the boundary terms and the
terms containing \(I-M_{\bar\mu}\) are bounded pathwise by
\[
\begin{aligned}
&\alpha U(\rho^{k-1}+1)
+
2\alpha^2 d_{\max}U
\sum_{i=1}^{k-1}\rho^{k-1-i} \\
&\qquad\le
\alpha U\left(2+\frac{2d_{\max}}{\delta}\right)
=
32\alpha t_{\mathrm{mix}}\sqrt{W_{\max}}
\frac{d_{\max}+\delta}{\delta}
\le
32\sqrt{W_{\max}}
\frac{\alpha t_{\mathrm{mix}}}{\delta},
\end{aligned}
\]
where the last inequality uses \cref{eq:dmax-delta-small}.

It remains to bound the adaptive-change terms.  By
\cref{eq:uQ-blockwise-lipschitz} and the pathwise sample-update bound,
\[
\begin{aligned}
  \|\bigl(u_{Q_i}-u_{Q_{i-1}}\bigr)(X_i)\|_\infty
  &\le
  32t_{\mathrm{mix}}\|Q_i-Q_{i-1}\|_\infty \\
  &\le
  32\alpha t_{\mathrm{mix}}\sqrt{W_{\max}},
\end{aligned}
\]
because \(Q_i=Q_{i-1}+\alpha\widehat h_{i-1}(Q_{i-1})\) and
\(\|\widehat h_{i-1}(Q_{i-1})\|_2\le\sqrt{W_{\max}}\).  Therefore
\[
\begin{aligned}
&\alpha\sum_{i=1}^{k-1}
\left\|M_{\bar\mu}^{k-i}
\bigl(u_{Q_i}-u_{Q_{i-1}}\bigr)(X_i)\right\|_\infty \\
&\qquad\le
32\alpha^2t_{\mathrm{mix}}\sqrt{W_{\max}}
\sum_{i=1}^{k-1}\rho^{k-i}
\le
32\sqrt{W_{\max}}
\frac{\alpha t_{\mathrm{mix}}}{\delta}.
\end{aligned}
\]
Combining the two pathwise estimates gives
\[
  \|T_k\|_\infty
  \le
  64\sqrt{W_{\max}}
  \frac{\alpha t_{\mathrm{mix}}}{d_{\min}(1-\gamma)}
  \le
  64|\calS|\,|\calA|\sqrt{W_{\max}}
  \frac{\alpha t_{\mathrm{mix}}}{d_{\min}(1-\gamma)}.
\]
Taking expectations proves \cref{eq:telescope-bound}.
\end{proof}

\begin{lemma}\label{lem:markovian-reference-filter-noise}
Fix an arbitrary stochastic policy \(\bar\mu\). Consider the fixed reference
noise recursion
\[
  y_{k+1}=M_{\bar\mu}y_k+\alpha w_k,
  \qquad
  y_0=0.
\]
Under \cref{assump:markovian-mixing}, for every \(k\ge0\),
\begin{equation}\label{eq:markovian-y-bound}
  \E[\|y_k\|_\infty]
  \le
  128|\calS|^2|\calA|^2\sqrt{W_{\max}}
  \left(
  t_{\mathrm{mix}}\sqrt{\frac{\alpha}{d_{\min}(1-\gamma)}}
  +
  \frac{\alpha t_{\mathrm{mix}}}{d_{\min}(1-\gamma)}
  \right).
\end{equation}
\end{lemma}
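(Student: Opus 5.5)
The plan is to unroll the reference recursion as
\[
  y_k=\alpha\sum_{i=0}^{k-1}M_{\bar\mu}^{k-1-i}w_i ,
\]
and to decompose each noise term \(w_i\) into the three pieces already controlled by \cref{lem:markovian-eta-filter-bound,lem:markovian-h-filter-bound,lem:markovian-telescope-bound}. For \(k=0\) the claim is trivial since \(y_0=0\), so we take \(k\ge1\).

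The decomposition is obtained from the definitions. Write
\[
  w_i=\widehat h_i(Q_i)-h(Q_i)=\eta_i+\bigl(\bar h_{Q_i}(X_i)-h(Q_i)\bigr),
\]
and note that \(\bar h_{Q_i}(X_i)-h(Q_i)=g_{Q_i}(X_i)\). The admissibility of \(Q_i\) (the pathwise bound \(\|Q_i\|_\infty\le\max\{\|Q_0\|_\infty,R_{\max}/(1-\gamma)\}\)) allows the Poisson equation of \cref{lem:markovian-poisson-existence} to be applied with \(Q=Q_i\) frozen, which gives
\[
\begin{aligned}
  g_{Q_i}(X_i)
  &=u_{Q_i}(X_i)-\sum_{x}P^b(X_i,x)u_{Q_i}(x)\\
  &=\bigl(u_{Q_i}(X_{i+1})-\textstyle\sum_x P^b(X_i,x)u_{Q_i}(x)\bigr)+\bigl(u_{Q_i}(X_i)-u_{Q_i}(X_{i+1})\bigr)\\
  &=h_{i+1}+\bigl(u_{Q_i}(X_i)-u_{Q_i}(X_{i+1})\bigr).
\end{aligned}
\]
Hence \(w_i=\eta_i+h_{i+1}+(u_{Q_i}(X_i)-u_{Q_i}(X_{i+1}))\) holds pathwise. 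Since \(Q_i\) is \(\calF_i\)-measurable and \(u_Q(x)\) is measurable in \(Q\) by \cref{lem:markovian-poisson-existence}, all three terms are well-defined random vectors.

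Substituting this into the unrolled sum gives
\[
  y_k=\alpha\sum_{i}M_{\bar\mu}^{k-1-i}\eta_i+\alpha\sum_iM_{\bar\mu}^{k-1-i}h_{i+1}+T_k .
\]
Taking the \(\ell_\infty\) norm, the triangle inequality and expectations yields
\[
  \E[\|y_k\|_\infty]\le(\text{eq-filter bound})+(\text{h-filter bound})+\E[\|T_k\|_\infty].
\]
We then invoke \cref{eq:eta-filter-bound,eq:h-filter-bound,eq:telescope-bound} to bound the three terms.

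It remains to collect constants. The \(\eta\)-term is at most
\[
  2|\calS||\calA|\sqrt{W_{\max}}\sqrt{\alpha t_{\mathrm{mix}}/(d_{\min}(1-\gamma))}\le 2|\calS|^2|\calA|^2\sqrt{W_{\max}}\,t_{\mathrm{mix}}\sqrt{\alpha/(d_{\min}(1-\gamma))},
\]
using \(\sqrt{t_{\mathrm{mix}}}\le t_{\mathrm{mix}}\). The \(h\)-term contributes \(64|\calS|^2|\calA|^2\) times the same quantity. The telescoping term contributes \(64|\calS||\calA|\sqrt{W_{\max}}\,\alpha t_{\mathrm{mix}}/(d_{\min}(1-\gamma))\le 128|\calS|^2|\calA|^2(\cdots)\). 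Since \(66\le128\), the stated constant \(128\) bounds both groups, which yields \cref{eq:markovian-y-bound}.

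The only delicate point is the Poisson substitution with an adaptive table. The martingale property of \(h_{i+1}\) and the summation by parts inside \(T_k\) were already handled in the cited lemmas. What we must check here is that the decomposition of \(w_i\) uses the same \(u_{Q_i}\) at both \(X_i\) and \(X_{i+1}\), so that it matches the definitions of \(h_{i+1}\) and \(T_k\) exactly. The identity \(\bar h_{Q}(x)=\E[\widehat h_k(Q)\mid X_k=x]\) must also be independent of \(k\), which it is because the conditional law of \((r_{k+1},s_{k+1})\) given \(X_k\) is time-homogeneous.
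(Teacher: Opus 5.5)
Your proposal is correct and follows the paper's proof essentially verbatim. Like the paper, you unroll the reference filter, split $w_i=\eta_i+h_{i+1}+\bigl(u_{Q_i}(X_i)-u_{Q_i}(X_{i+1})\bigr)$ using the Poisson equation at the frozen admissible table $Q_i$, and add the three cited filter bounds. Your explicit constant bookkeeping ($2+64\le 128$ for the $\sqrt{\alpha}$ group, via $\sqrt{t_{\mathrm{mix}}}\le t_{\mathrm{mix}}$ and $|\calS|\,|\calA|\le|\calS|^2|\calA|^2$, and $64\le 128$ for the $\alpha$ group) simply spells out the paper's ``absorbing the numerical constants'' step.
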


\begin{proof}
The case \(k=0\) is immediate from \(y_0=0\).  Assume \(k\ge1\).  Unrolling the
fixed reference recursion gives
\[
  y_k=
  \alpha\sum_{i=0}^{k-1}M_{\bar\mu}^{k-1-i}w_i.
\]
Use the filtration \(\calF_i\) and \(\bar h_Q\) from
\cref{lem:markovian-eta-filter-bound}.  Put
\[
  g_Q(x):=\bar h_Q(x)-h(Q).
\]
Then
\begin{equation}\label{eq:w-eta-g-decomposition}
  w_i
  =
  \underbrace{\widehat h_i(Q_i)-\bar h_{Q_i}(X_i)}_{\eta_i}
  +
  g_{Q_i}(X_i).
\end{equation}
By \cref{lem:markovian-poisson-existence},
\[
  g_{Q_i}(X_i)
  =
  u_{Q_i}(X_i)-\sum_{x\in\calS\times\calA} P^b(X_i,x)u_{Q_i}(x).
\]
With \(h_{i+1}\) defined in \cref{lem:markovian-h-filter-bound}, this gives the
one-step decomposition
\begin{equation}\label{eq:poisson-one-step-decomp}
  g_{Q_i}(X_i)
  =
  u_{Q_i}(X_i)-u_{Q_i}(X_{i+1})+h_{i+1}.
\end{equation}
Substituting \cref{eq:w-eta-g-decomposition,eq:poisson-one-step-decomp} into
the unrolled formula for \(y_k\) gives
\[
\begin{aligned}
  y_k
  &=
  \alpha\sum_{i=0}^{k-1}M_{\bar\mu}^{k-1-i}\eta_i
  +
  \alpha\sum_{i=0}^{k-1}M_{\bar\mu}^{k-1-i}h_{i+1} \\
  &\qquad+
  \alpha\sum_{i=0}^{k-1}M_{\bar\mu}^{k-1-i}
  \bigl(u_{Q_i}(X_i)-u_{Q_i}(X_{i+1})\bigr).
\end{aligned}
\]
The three terms are bounded by
\cref{lem:markovian-eta-filter-bound,lem:markovian-h-filter-bound,lem:markovian-telescope-bound}.
Adding those bounds, using \(t_{\mathrm{mix}}\ge1\) and \(|\calS|\,|\calA|\ge1\), and absorbing the numerical constants into the displayed
factor \(128|\calS|^2|\calA|^2\) proves \cref{eq:markovian-y-bound}.
\end{proof}

We now prove the Markovian direct reference-filter bound.
\begin{proof}[Proof of \cref{thm:markovian-reference-filter-bound}]
The proof follows the direct reference-filter argument, with
\cref{lem:markovian-reference-filter-noise} replacing the i.i.d. covariance
recursion.  The product-growth bound used below is
\cref{eq:Kbeta-product-bound-ref}.

Fix an arbitrary stochastic policy \(\bar\mu\).  Introduce the reference filter
driven by the same Markovian noise as \cref{eq:markovian-direct-sls-reference}:
\[
  x_{k+1}=M_{\bar\mu}x_k+\alpha w_k,
  \qquad
  x_0=Q_0-Q^*.
\]
Decompose
\[
  x_k=z_k+y_k,
\]
where
\[
  z_{k+1}=M_{\bar\mu}z_k,
  \qquad
  z_0=Q_0-Q^*,
\]
and
\[
  y_{k+1}=M_{\bar\mu}y_k+\alpha w_k,
  \qquad
  y_0=0.
\]
Since \(M_{\bar\mu}\in\co(\calM_\alpha)\), \cref{eq:Kbeta-product-bound-ref}
gives
\begin{equation}\label{eq:markovian-ref-deterministic-bound}
  \|z_k\|_\infty
  \le
  K_{\beta_\varepsilon}\beta_\varepsilon^k\|Q_0-Q^*\|_\infty.
\end{equation}
By \cref{lem:markovian-reference-filter-noise},
\begin{equation}\label{eq:markovian-ref-noise-bound}
  \E[\|y_k\|_\infty]
  \le
  128|\calS|^2|\calA|^2\sqrt{W_{\max}}
  \left(
  t_{\mathrm{mix}}\sqrt{\frac{\alpha}{d_{\min}(1-\gamma)}}
  +
  \frac{\alpha t_{\mathrm{mix}}}{d_{\min}(1-\gamma)}
  \right).
\end{equation}

Subtract the reference filter from the exact direct SLS
\cref{eq:markovian-direct-sls-reference}.  With \(p_k:=Q_k-Q^*-x_k\),
\[
\begin{aligned}
  p_{k+1}
  &=(Q_{k+1}-Q^*)-x_{k+1} \\
  &=M_{\mu_k}(Q_k-Q^*)+\alpha w_k
    -M_{\bar\mu}x_k-\alpha w_k \\
  &=M_{\mu_k}p_k+(M_{\mu_k}-M_{\bar\mu})x_k.
\end{aligned}
\]
The noise cancels pathwise.  Moreover,
\[
  M_{\mu_k}-M_{\bar\mu}=\alpha\gamma DP(\Pi^{\mu_k}-\Pi^{\bar\mu}),
\]
and hence
\begin{equation}\label{eq:markovian-mode-difference-bound-ref}
  \|M_{\mu_k}-M_{\bar\mu}\|_\infty\le 2\alpha\gamma d_{\max}.
\end{equation}
Because \(x_0=Q_0-Q^*\), we have \(p_0=0\).

Split \(p_k=u_k+v_k\), where
\[
  u_{k+1}=M_{\mu_k}u_k+(M_{\mu_k}-M_{\bar\mu})z_k,
  \qquad
  u_0=0,
\]
and
\[
  v_{k+1}=M_{\mu_k}v_k+(M_{\mu_k}-M_{\bar\mu})y_k,
  \qquad
  v_0=0.
\]
For \(u_k\), unroll the recursion and use the JSR product-growth bound twice:
\[
\begin{aligned}
\|u_k\|_\infty
&\le
\sum_{i=0}^{k-1}
K_{\beta_\varepsilon}\beta_\varepsilon^{k-1-i}
\cdot
2\alpha\gamma d_{\max}
\cdot
K_{\beta_\varepsilon}\beta_\varepsilon^i\|Q_0-Q^*\|_\infty \\
&=
2\alpha\gamma d_{\max}K_{\beta_\varepsilon}^2
k\beta_\varepsilon^{k-1}\|Q_0-Q^*\|_\infty.
\end{aligned}
\]
Thus
\begin{equation}\label{eq:markovian-u-bound-ref}
  \E[\|u_k\|_\infty]
  \le
  2\alpha\gamma d_{\max}K_{\beta_\varepsilon}^2
  k\beta_\varepsilon^{k-1}\|Q_0-Q^*\|_\infty.
\end{equation}
For \(v_k\), use row contraction rather than the JSR product bound.  Since
\(\|M_{\mu}\|_\infty\le\rho_{\mathrm{row}}\) for every stochastic policy
\(\mu\),
\[
\begin{aligned}
\E[\|v_k\|_\infty]
&\le
\sum_{i=0}^{k-1}
\rho_{\mathrm{row}}^{k-1-i}
\cdot
2\alpha\gamma d_{\max}
\cdot
\E[\|y_i\|_\infty] \\
&\le
2\alpha\gamma d_{\max}128|\calS|^2|\calA|^2\sqrt{W_{\max}}
  \left(
  t_{\mathrm{mix}}\sqrt{\frac{\alpha}{d_{\min}(1-\gamma)}}
  +
  \frac{\alpha t_{\mathrm{mix}}}{d_{\min}(1-\gamma)}
  \right)
\sum_{i=0}^{k-1}\rho_{\mathrm{row}}^{k-1-i} \\
&\le
\frac{2\gamma d_{\max}}{d_{\min}(1-\gamma)}
128|\calS|^2|\calA|^2\sqrt{W_{\max}}
  \left(
  t_{\mathrm{mix}}\sqrt{\frac{\alpha}{d_{\min}(1-\gamma)}}
  +
  \frac{\alpha t_{\mathrm{mix}}}{d_{\min}(1-\gamma)}
  \right),
\end{aligned}
\]
where the last step uses \(1-\rho_{\mathrm{row}}=\alpha d_{\min}(1-\gamma)\).
Therefore
\begin{equation}\label{eq:markovian-v-bound-ref}
  \E[\|v_k\|_\infty]
  \le
  \frac{2\gamma d_{\max}}{d_{\min}(1-\gamma)}
  128|\calS|^2|\calA|^2\sqrt{W_{\max}}
  \left(
  t_{\mathrm{mix}}\sqrt{\frac{\alpha}{d_{\min}(1-\gamma)}}
  +
  \frac{\alpha t_{\mathrm{mix}}}{d_{\min}(1-\gamma)}
  \right).
\end{equation}

Finally,
\[
  Q_k-Q^*=x_k+p_k=z_k+y_k+u_k+v_k.
\]
Taking the infinity norm, using the triangle inequality, and then taking
expectations gives
\[
\begin{aligned}
\E[\|Q_k-Q^*\|_\infty]
&\le
\|z_k\|_\infty+
\E[\|y_k\|_\infty]+
\E[\|u_k\|_\infty]+
\E[\|v_k\|_\infty].
\end{aligned}
\]
Substituting
\cref{eq:markovian-ref-deterministic-bound,eq:markovian-ref-noise-bound,eq:markovian-u-bound-ref,eq:markovian-v-bound-ref}
into this display yields
\[
\begin{aligned}
\E[\|Q_k-Q^*\|_\infty]
\le\;&
K_{\beta_\varepsilon}\beta_\varepsilon^k\|Q_0-Q^*\|_\infty \\
&+
2\alpha\gamma d_{\max}K_{\beta_\varepsilon}^2
k\beta_\varepsilon^{k-1}\|Q_0-Q^*\|_\infty \\
&+
128|\calS|^2|\calA|^2\sqrt{W_{\max}}
\left(
t_{\mathrm{mix}}\sqrt{\frac{\alpha}{d_{\min}(1-\gamma)}}
+
\frac{\alpha t_{\mathrm{mix}}}{d_{\min}(1-\gamma)}
\right) \\
&+
\frac{2\gamma d_{\max}}{d_{\min}(1-\gamma)}
128|\calS|^2|\calA|^2\sqrt{W_{\max}}
\left(
t_{\mathrm{mix}}\sqrt{\frac{\alpha}{d_{\min}(1-\gamma)}}
+
\frac{\alpha t_{\mathrm{mix}}}{d_{\min}(1-\gamma)}
\right).
\end{aligned}
\]
The last two lines have the same Markovian-noise factor, so collecting them gives
\[
\begin{aligned}
\E[\|Q_k-Q^*\|_\infty]
\le\;&
\left(
1+
\frac{2\gamma d_{\max}}{d_{\min}(1-\gamma)}
\right)
128|\calS|^2|\calA|^2\sqrt{W_{\max}} \\
&\quad\times
\left(
t_{\mathrm{mix}}\sqrt{\frac{\alpha}{d_{\min}(1-\gamma)}}
+
\frac{\alpha t_{\mathrm{mix}}}{d_{\min}(1-\gamma)}
\right) \\
&+
K_{\beta_\varepsilon}\beta_\varepsilon^k\|Q_0-Q^*\|_\infty \\
&+
2\alpha\gamma d_{\max}K_{\beta_\varepsilon}^2
k\beta_\varepsilon^{k-1}\|Q_0-Q^*\|_\infty.
\end{aligned}
\]
This is exactly \cref{eq:markovian-reference-filter-final-bound}, with the
convention \(k\beta_\varepsilon^{k-1}=0\) at \(k=0\).
\end{proof}

\end{document}